\definecolor{dgreen}{rgb}{0.00,0.49,0.00}
\definecolor{dblue}{rgb}{0,0.08,0.75}
\newcommand{\cost}{\mathsf{c}}
\newcommand{\sink}{S_\varepsilon}
\newcommand{\oteps}{\textnormal{OT}_\varepsilon}
\newcommand{\dom}{\mathcal{X}}
\newcommand{\eqals}[1]{\begin{align}#1\end{align}}
\newcommand{\eqal}[1]{\begin{align}#1\end{align}}
\newcommand{\bpr}{\begin{proof}}
\newcommand{\epr}{\end{proof}}
\newcommand{\be}{\begin{equation}}
\newcommand{\ee}{\end{equation}}
\newcommand{\bd}{\begin{definition}}
\newcommand{\ed}{\end{definition}}
\newcommand{\bi}{\begin{itemize}}
\newcommand{\ei}{\end{itemize}}
\newcommand{\ba}{\begin{ass}}
\newcommand{\ea}{\end{ass}}
\newcommand{\br}{\begin{remark}}
\newcommand{\er}{\end{remark}}
\newcommand{\bp}{\begin{proposition}}
\newcommand{\ep}{\end{proposition}}
\newcommand{\blm}{\begin{lemma}}
\newcommand{\elm}{\end{lemma}}
\newcommand{\bt}{\begin{theorem}}
\newcommand{\et}{\end{theorem}}
\newcommand{\bcor}{\begin{corollary}}
\newcommand{\ecor}{\end{corollary}}
\newcommand{\bex}{\begin{example}}
\newcommand{\eex}{\end{example}}
\crefname{assumption}{Assumption}{Assumptions}
\crefname{equation}{Eq.}{Eqs.}
\crefname{figure}{Fig.}{Fig.}
\crefname{table}{Table}{Tables}
\crefname{section}{Sec.}{Sec.}
\crefname{theorem}{Thm.}{Thm.}
\crefname{lemma}{Lemma}{Lemmas}
\crefname{corollary}{Cor.}{Cor.}
\crefname{example}{Example}{Examples}
\crefname{remark}{Remark}{Remarks}
\crefname{algorithm}{Alg.}{Algorightms}
\crefname{appendix}{Appendix}{Appendices}
\def\@endtheorem{\endtrivlist}
\declaretheorem[name=Theorem,refname=Theorem]{theorem}
\declaretheorem[name=Lemma,sibling=theorem]{lemma}
\declaretheorem[name=Proposition,refname=Proposition,sibling=theorem]{proposition}
\declaretheorem[name=Remark,sibling=theorem]{remark}
\declaretheorem[name=Corollary,refname=Corollary,sibling=theorem]{corollary}
\declaretheorem[name=Definition,refname=Definition]{definition}
\declaretheorem[name=Example]{example}
\newcommand{\R}{\ensuremath \mathbb{R}}
\renewcommand{\P}{{\mathcal{P}}}
\DeclareMathOperator*{\argmin}{argmin}
\let\mathsf\relax    
\DeclareRobustCommand{\mathsf}[1]{\text{\normalfont\sffamily#1}}
\newcommand{\msf}[1]{\mathsf{#1}}
\newcommand{\X}{{\mathcal X}}
\newcommand{\T}{{\mathcal T }}
\newcommand{\Y}{{\mathcal Y}}
\newcommand{\Z}{{\mathcal Z}}
\newcommand{\hh}{{\mathcal{H}}}
\newcommand{\push}{{\#}}
\newcommand{\F}{{\mathcal{F}}}
\newcommand{\pp}{{\mathcal{M}}}
\newcommand{\cont}{{\mathcal C}}
\newcommand{\prob}{\mathcal{P}}
\newcommand{\smooth}{\mathsf{\mathsf{R}}}
\newcommand{\bounded}{\mathsf{\mathsf{L}}}
\def\nor #1{\Vert #1 \Vert}
\newcommand{\subt}{\scalebox{0.4}{$T$}}
\newcommand{\subv}{\scalebox{0.4}{$V$}}
\renewcommand{\paragraph}[1]{{\bfseries #1.}}
\crefname{assumption}{Assumption}{Assumptions}
\crefname{equation}{}{}
\Crefname{equation}{Eq.}{Eqs.}
\crefname{figure}{Fig.}{Fig.}
\crefname{table}{Table}{Tables}
\crefname{section}{Sec.}{Sec.}
\crefname{theorem}{Thm.}{Thm.}
\crefname{proposition}{Prop.}{Prop.}
\crefname{fact}{Fact}{Facts}
\crefname{lemma}{Lemma}{Lemmas}
\crefname{corollary}{Cor.}{Cor.}
\crefname{example}{Example}{Examples}
\crefname{remark}{Remark}{Remarks}
\crefname{algorithm}{Alg.}{Algorithms}
\providecommand{\scal}[2]{\left\langle{#1},{#2}\right\rangle}
\providecommand{\nor}[1]{\left\|{#1}\right\|}
\providecommand{\abs}[1]{\left|{#1}\right|}
\providecommand{\norm}[1]{\lVert#1\rVert}
\providecommand{\scal}[2]{\left\langle{#1},{#2}\right\rangle}
\newcommand{\kl}{\textnormal{KL}}
\newcommand{\eps}{\varepsilon}
\newcommand{\sig}{\delta}
\renewcommand{\sink}{{\msf{S}_\eps}}
\newcommand{\fS}{\mathsf{S}}
\newcommand{\E}{\mathbb{E}}
\title{\LARGE\bf Generalization Properties of Optimal Transport GANs with Latent Distribution Learning\vspace{1em}}
\author{ Giulia Luise$^{1}$ \\ {\footnotesize\em g.luise.16@ucl.ac.uk} \and   Massimiliano Pontil$^{1,2}$ \\ {\footnotesize\em  m.pontil@cs.ucl.ac.uk ~~} \and  Carlo Ciliberto$^{1,3}$ \\ {\footnotesize\em c.ciliberto@imperial.ac.uk} \\ $ $ \\  }
\begin{document}

\maketitle

\begin{abstract}
\noindent The Generative Adversarial Networks (GAN) framework is a well-established paradigm for probability matching and realistic sample generation. While recent attention has been devoted to studying the theoretical properties of such models, a full theoretical understanding of the main building blocks is still missing. Focusing on generative models with Optimal Transport metrics as discriminators, in this work we study how the interplay between
the latent distribution and 
the complexity of the pushforward map (generator) affects performance, from both statistical and modeling perspectives. Motivated by our analysis, we advocate learning the latent distribution as well as the pushforward map within the GAN paradigm. We prove that this can lead to significant advantages in terms of  sample complexity. 
\end{abstract}

\section{Introduction}
\footnotetext[1]{Computer Science Department, University College London, WC1E 6BT London, United Kingdom}\footnotetext[2]{Computational Statistics and Machine Learning - Istituto Italiano di Tecnologia, 16100 Genova, Italy}\footnotetext[3]{Electrical and Electronics Engineering Department, Imperial College London, SW7 2BT, United Kingdom.}
Generative Adversarial Networks (GAN) are powerful methods for learning probability measures and performing realistic sampling \cite{goodfellow2014generative}. Algorithms in this class aim to reproduce the sampling behavior of the target distribution, rather than explicitly fitting a density function. This is done by modeling the target probability as the pushforward of a probability measure in a latent space. Since their introduction, GANs have achieved remarkable progress. From a practical perspective, a large number of model architectures have been explored, leading to impressive results in data generation \cite{vondrick2016generating,isola2017image,ledig2017photo}. From the side of theory, attention has been devoted to identify rich metrics for generator training, such as $f$-divergences \cite{nowozin2016f}, integral probability metrics (IPM) \cite{dziugaite2015training} or optimal transport distances \cite{arjovsky2017wasserstein}, as well as studying their approximation properties \cite{liu2017approximation,bai2018approximability,zhang2018on}. From the statistical perspective, progression has been slower. While recent work has set the first steps towards a characterization of the generalization properties of GANs with IPM loss functions \cite{arora2017generalization,zhang2018on,liang2018well, singh2018nonparametric,uppal2019nonparametric}, a full theoretical understanding of the main building blocks of the GAN framework is still missing.

 
In this work, we focus on optimal transport-based loss functions \cite{genevay2018learning} and  study the impact of two key quantities of the GANs paradigm on the overall generalization performance. In particular, we prove an upper bound on the learning rates of a GAN in terms of a notion of complexity for: $i)$ the ideal generator network and $ii)$ the latent space and distribution. Our result indicates that fixing the latent distribution a-priori  and offloading most modeling complexity on the generator network can have a disruptive effect on both statistics and computations. 

Motivated by our analysis, we propose a novel GAN estimator in which the generator and latent distributions are learned jointly. Our approach is in line with previous work on multi-modal GANs \cite{ben2018gaussian, pandeva2019mmgan}, which models the latent distribution as a Gaussian mixture whose parameters are inferred during training. 
In contrast to the methods above, our estimator is not limited to Gaussian mixtures but can asymptotically learn any sub-Gaussian latent distribution. Additionally, we characterize the learning rates of our joint estimator and discuss the theoretical advantages over performing standard GANs training, namely fixing the latent distribution a-priori.

\textbf{Contributions.} The main contributions of this work include: $i)$ showing how the regularity of a generator network (e.g. in terms of its smoothness) affects the sample complexity of pushforward measures and consequently of the GAN estimator; $ii)$ introducing a novel algorithm for joint training of generator and latent distributions in GAN; $iii)$ study the statistical properties (i.e. learning rates) of the resulting estimator in both settings where the GAN model is exact as well as the case where the target distribution is only approximately supported on a low dimensional domain.

The rest of the paper is organized as follows: \cref{sec:background} formally introduces the GANs framework in terms of pushforward measures.  \cref{sec:motivation} discusses the limitations of fixing the latent distribution a-priori and motivates the proposed approach. \cref{sec:theory} constitutes the core of the work, proposing the joint GAN estimator and studying its generalization properties. Training and sampling strategies are discussed in  \cref{sec:implementation}.  Finally \cref{sec:experiments} presents  preliminary experiments highlighting the effectiveness of the proposed estimator, 
while \cref{sec:conclusion} offers concluding remarks and potential future directions.

\section{Background}\label{sec:background}

The goal of probability matching is to find a good approximation of a distribution $\rho$ given only a finite number of points sampled from it. Typically, one is interested in finding a distribution $\hat\mu$ in a class $\pp$ of probability measures that best approximates $\rho$, ideally minimizing
\eqal{\label{eq:general-problem}
    \inf_{\mu\in\pp}~ \msf{d}(\mu,\rho). 
}
Here $\msf d$ is a discrepancy measure between probability distributions. A wide range of hypotheses spaces $\pp$ have been considered in the literature, such as space of distributions parametrized via mixture models \cite{dempster1977maximum,bishop2006pattern,sugiyama2010least,sriperumbudur2017density}, deep belief networks \cite{hinton2006fast,van2016conditional,van2016pixel} and variational autencoders \cite{kingma2013auto} among the most well known approaches. In this work we focus on generative adversarial networks (GAN)  \cite{goodfellow2014generative}, which can be formulated as in \cref{eq:general-problem} in terms of adversarial divergences and pushfoward measures. Below, we introduce these two notions and the notation used in this work. 

\paragraph{Adversarial divergences} Let $\P(\X)$ be the space of probability measures over a set $\X\subset\R^d$.
Given a space $\F$ of functions $F:\X\times\X\to\R$, we define the adversarial divergence between $\mu,\rho\in\P(\X)$
\eqal{\label{eq:adversarial-losses}
    \msf{d}_\F(\mu,\rho) ~=~ \sup_{F\in\F}~ \int F(x',x)~d(\mu\otimes\rho)(x',x),
}
namely the supremum over $\F$ of all expectations $\mathbb{E}~[F(x,x')]$ with respect to the joint distribution $\mu\otimes\rho$ (see \cite{liu2017approximation}). A well-established family of adversarial divergences are {\itshape integral probability metrics (IPM)}, where $F(x',x) = f(x')-f(x)$ for $f:\X\to\R$ in a suitable space (e.g. a ball in a Sobolev space). Here $\msf{d}_\F$ measures the largest  gap $\mathbb{E}_\mu f(x) - \mathbb{E}_\rho f(x)$ between the expectations of $\mu$ and $\rho$. Examples of IPM used in the GAN paradigm include Maximum Mean Discrepancy \cite{dziugaite2015training} and Sobolev-IPM \cite{mroueh2017sobolev}. Additional adversarial divergences are $f$-divergences \cite{nowozin2016f,goodfellow2014generative}. Recently, Optimal Transport-based adversarial divergences have attracted significant attention from the GANs literature, such as the Wasserstein distance \cite{arjovsky2017wasserstein}, the Sliced-Wasserestein distance \cite{liutkus2018sliced, nadjahi2019asymptotic, deshpande2018generative,wu2019sliced} or the Sinkhorn divergence \cite{genevay2018learning, sanjabi2018convergence}. For completeness, in \cref{app:adversarial-losses} (see also \cite{liu2017approximation}) we review how to formulate the adversarial divergences mentioned above within the form of \cref{eq:adversarial-losses}. 

\paragraph{Pushforward measures} Pushforward measures are a central component of the GAN paradigm. We recall here the formal definition while referring to \cite[Sec. 5.5]{ambrosio2008gradient} for more details. 

\begin{definition}[\textbf{Pushforward}]
Let $\Z$ and $\X$  be two measurable spaces and $T:\Z \to \X$ a measurable map. Let $\eta\in\mathcal{P}(\Z)$ be a probability measure over $\Z$. The \textbf{pushforward} of $\eta$ via $T$ is defined to be the measure $T_\push \eta$ in $\mathcal{P}(\X)$ such that for any Borel subset $B$ of $\X$,
\begin{equation}\label{eq:def1_pushy}
    (T_\push\eta)(B) ~=~ \eta(T^{-1}(B)).
\end{equation}
\end{definition}
To clarify the notation, in the rest of the paper we will refer to a measure $T_\push \eta$ as pushforward \textit{measure}, and to the corresponding $T$ as pushforward \textit{map}. A key property of pushforward measures is the {\itshape Transfer lemma} \cite[Sec 5.2]{ambrosio2008gradient}, which states that for any measurable $f:\X\to\R$,
\begin{equation}\label{eq:transfer_lemma}
    \int_{\X} f(x)\,d(T_{\push}\eta)(z) ~=~ \int_{\Z} f(T(z))\,d\eta(z).
\end{equation}
This property is particularly useful within GAN settings as we discuss in the following.

\paragraph{Generative adversarial networks} 
The generative adversarial network (GAN) paradigm consists in parametrizing the space $\pp$ of candidate models in \cref{eq:general-problem} as {\itshape a set of pushforwards measures of a latent distribution}. From now on, $\Z$ and $\X$ will denote  latent and target spaces and we will assume $\Z\subset \R^k$ and $\X\subset \R^d$. Given a set $\T$ of functions $T:\Z\to\X$ 
and given a (latent) probability distribution $\eta\in\P(\Z)$, we consider the space
\eqals{
    \pp(\T,\eta) ~=~ \{~ \mu = T_\push \eta, ~|~   T\in\T ~\}.
}
While this choice allows to parameterize the target distribution only implicitly, it offers a significant advantage at sampling time: sampling $x$ from $\mu = T_\push \eta$ corresponds to sample a $z$ from $\eta$ and then take $x = T(z)$. By leveraging the Transfer lemma \cref{eq:transfer_lemma} and using an adversarial divergence $\msf{d}_\F$, the probability matching problem in \cref{eq:general-problem} recovers the original minimax game formulation in \cite{goodfellow2014generative}
\eqals{\label{eq:gan-problem}
    \inf_{\mu\in\pp(\T,\eta)}~\msf{d}_\F(\mu,\rho) ~=~ \inf_{T\in\T}\msf{d}_\F(T_\push \eta,\rho) ~=~ \inf_{T\in\T}\sup_{F\in\F}~\int F(T(z),x)~d(\eta\otimes\rho)(z,x).
}
Within the GAN literature, the pushforward $T$ is referred to as the {\itshape generator} and optimization is performed over a suitable $\T$ (e.g. a set of neural networks \cite{goodfellow2014generative}) for a fixed $\eta$ (e.g. a Gaussian or uniform distribution). The term $F$ is called {\itshape discriminator} since, when for instance $\msf{d}_\F$ is an IPM, $F(x',x) = f(x') - f(x)$ aims at maximally separating (discriminating) the expectations of $\mu$ and $\rho$. 

\section{The Complexity of Modeling the Generator}\label{sec:motivation}

In this section we discuss a main limitation of choosing the latent distribution a-priori within the GANs framework. This will motivate our analysis in \cref{sec:theory} to learn the latent distribution jointly with the generator.  Let $\rho\in\P(\R^d)$ be the (unknown) target distribution and $\rho_n = \frac{1}{n}\sum_{i=1}^n \delta_{x_i}$ an empirical distribution of $n$ Dirac's deltas $\delta_{x_i}$ centered on i.i.d. points $(x_i)_{i=1}^n$ sampled from $\rho$. Given a latent distribution $\eta\in\P(\R^k)$ (such as Gaussian or uniform distribution), GAN training consists in learning a map $\hat T$ that approximately minimizes
\eqal{\label{eq:erm-gans}
    \hat T ~=~ \argmin_{T\in\T} ~\msf d_\F(T_\push \eta, \rho_n),
}
or a stochastic variant of such objective. The potential downside of this strategy is that it offloads all the complexity of modeling the target $\rho$ onto the generator $T$. Therefore, for a given $\eta$, it might be possible that the equality $T_\push\eta = \rho$ is satisfied only by very complicated -- hence hard to learn -- pushforward maps. To illustrate when this might happen and its effects on modeling and learning, we discuss some examples below (see \cref{app:pushforwards} for technical details). We first recall a characterization of pushforward measures that will be instrumental in building this intuition.
\begin{proposition}[Simplified version of {\cite[Lemma 5.5.3]{ambrosio2008gradient}}]\label{prop:push-ode}
Let $\rho$ and $\eta\in\mathcal{P}(\R^d)$ admit density functions $f,g:\R^d\to\R$ with respect to the Lebesgue measure, denoted $\eta = f\mathcal{L}^d$ and $\rho = g \mathcal{L}^d$. Let $T:\R^d \rightarrow \R^d$ be injective a.e. and differentiable, then $\rho = T_\push \eta$ if and only if \begin{equation}\label{eq:pushforward_ODE}
    g(T(x))\abs{\textnormal{det}\nabla T} ~=~ f(x).
\end{equation}
\end{proposition}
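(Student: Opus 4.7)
The plan is to reduce the identity $\rho = T_\push\eta$ to an integral identity via the definition of pushforward, and then exploit the change-of-variables (area) formula to equate the two Radon–Nikodym densities.

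First, I would use the definitional characterization of the pushforward: $\rho = T_\push \eta$ if and only if $\rho(B) = \eta(T^{-1}(B))$ for every Borel set $B\subset \R^d$. Writing both sides in terms of the given densities with respect to $\mathcal{L}^d$, this becomes
\eqal{\label{eq:planstep1}
\int_{B} g(y)\,dy ~=~ \int_{T^{-1}(B)} f(x)\, dx
}
for every Borel $B$. The goal is to turn the left-hand side into an integral over $T^{-1}(B)$, so that I can match integrands pointwise.

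Second, I would apply the change-of-variables formula to the left integral. Since $T$ is differentiable and injective almost everywhere, the area formula (this is the substantive analytic ingredient) yields
\eqal{\label{eq:planstep2}
\int_B g(y)\,dy ~=~ \int_{T^{-1}(B)} g(T(x))\,\abs{\det \nabla T(x)}\, dx.
}
Combining \cref{eq:planstep1} and \cref{eq:planstep2} gives
\eqal{\label{eq:planstep3}
\int_{T^{-1}(B)} \Bigl[ g(T(x))\,\abs{\det \nabla T(x)} - f(x) \Bigr]\, dx ~=~ 0.
}
Since $B$ is an arbitrary Borel subset of $\R^d$ and $T$ is a.e.\ injective, the family $\{T^{-1}(B): B \text{ Borel}\}$ generates a sufficiently rich $\sigma$-algebra (containing, up to null sets, the Borel sets of the image) to force the integrand in \cref{eq:planstep3} to vanish almost everywhere, giving $g(T(x))\abs{\det \nabla T(x)} = f(x)$ $\mathcal{L}^d$-a.e. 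The converse direction is obtained by reading the same chain of identities backwards: assuming \cref{eq:pushforward_ODE}, reverse \cref{eq:planstep3} and \cref{eq:planstep2} to recover \cref{eq:planstep1}, which is precisely $\rho = T_\push\eta$.

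The main obstacle is justifying the change-of-variables step \cref{eq:planstep2} under the weak hypothesis "injective a.e.\ and differentiable," rather than the textbook hypothesis that $T$ is a $C^1$-diffeomorphism. This requires invoking a general area-formula-type statement (as in Federer, or as used in the proof of \cite[Lemma 5.5.3]{ambrosio2008gradient}) and handling the null set of non-injectivity carefully so that it contributes zero mass on both sides. A secondary subtlety is ensuring that $T$ does not send mass to regions where $g$ is supported nontrivially outside the image of $T$; this is automatic from the pushforward direction (the support of $T_\push\eta$ is contained in $\overline{T(\supp\eta)}$), and on the other direction one concludes only an a.e.\ identity, which is all that is claimed.
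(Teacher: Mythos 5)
The paper does not actually supply a proof of this proposition: it is presented as a simplified restatement of \cite[Lemma 5.5.3]{ambrosio2008gradient}, and the reference is cited in lieu of an argument. So there is no in-paper proof to compare against; I can only assess your attempt on its own terms.

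Your plan is the standard one and it is essentially correct. You reduce $\rho = T_\push\eta$ to the integral identity $\int_B g = \int_{T^{-1}(B)} f$ for all Borel $B$, apply the area/change-of-variables formula to rewrite the left-hand side as an integral over $T^{-1}(B)$, and then match integrands. Two points deserve sharpening, both of which you partially flag. First, the step from ``$\int_{T^{-1}(B)}\bigl[g(T(x))\abs{\det\nabla T(x)} - f(x)\bigr]\,dx = 0$ for all Borel $B$'' to the pointwise-a.e.\ conclusion rests on the pullback $\sigma$-algebra $T^{-1}(\mathcal B)$ being essentially all of $\mathcal B$; this is not automatic for measurable maps, but it does hold here because an injective Borel map $\R^d\to\R^d$ sends Borel sets to Borel sets (Lusin--Souslin), so $A = T^{-1}(T(A))$ for every Borel $A$ on the full-measure set where $T$ is injective, and the exceptional null set is harmless. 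Stating this would replace the vague ``sufficiently rich $\sigma$-algebra'' with a verifiable claim. Second, you rightly identify that the substantive analytic ingredient is a change-of-variables/area formula valid under ``differentiable and a.e.\ injective'' rather than ``$C^1$ diffeomorphism''; strictly speaking, the classical area formula (Federer) is phrased for Lipschitz maps, and applying it here needs either an approximate-differentiability version or the fact that one can exhaust $\R^d$ (up to a null set) by sets on which $T$ is Lipschitz. This is a genuine gap in the hypotheses of the proposition as stated (inherited from the paper, not introduced by you), and your remark that the result only claims an a.e.\ identity and that the pushforward direction forces $\supp\rho\subset\overline{T(\supp\eta)}$ handles the support issue in the converse direction correctly. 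In short: correct approach, correct structure, and the two technical points you raise are indeed the ones that need more care; with Lusin--Souslin and an explicit appeal to a suitable area-formula variant, the proof would be complete.
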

%
%
%

Under \cref{prop:push-ode} we can interpret the GAN problem as akin to {\itshape solving the differential equation \cref{eq:pushforward_ODE}}. Therefore, choosing $\eta$ a-priori might implicitly require a very complex model space $\T$ to find such a solution. In contrast, the following example describes the case where $\T$ contains only simple models.  

\begin{example}[Affine Pushforward Maps] Let $\eta\in\P(\R^d)$ admit a density $f:\R^d\to\R$ and let 
\eqal{\label{eq:class_of_linear_push}
    \mathcal{T} ~=~ \{~T_{A,b}: \R^d\to\R^d ~|~ T_{A,b}(x) = Ax + b,~ A\in\R^{d\times d}~ b\in\R^{d} \,\,\ \det(A)\neq0~\}.
}
Then, for $T = T_{A,b}\in\T$, the measure $\rho = T_{\push}\eta \in\P(\R^d)$ admits a density $g:\R^d\to\R$ such that
\eqal{\label{eq:description_push}
    g(x) ~=~ f(A^{-1}(x-b)) \cdot |\det(A^{-1})|.
}
\end{example}

The set $\T$ of affine generators is able to parametrize only a limited family of distributions (essentially translations and re-scaling of the latent $\eta$). This prevents significant changes to the shape of the latent distribution to match the target; for example, a uniform (alternatively a Gaussian) measures can match only uniform (Gaussian) measures.  Below, we illustrate two examples where the pushforward map can indeed be quite complex and therefore require a larger space $\T$ when solving \cref{eq:erm-gans}. 

\begin{example}[Uniform to Gaussian]
Let $\eta$ be the uniform distribution on the interval $[-1,1]$ and $\rho$ the Gaussian distribution on $\R$, with zero mean and unit variance. Then $T_\push \eta = \rho$, with $T(x) = \sqrt{2}\textnormal{erf}^{-1}(x)$ the inverse to the standard error function $\textnormal{erf}(x) = \rho\big((-\infty,x]\big)$. 
\end{example}
The map $\textnormal{erf}^{-1}$ is highly nonlinear and with steep derivatives. 
Therefore, learning a GAN from a uniform to a Gaussian distribution would require choosing a significantly large space $\T$ to approximate $\textnormal{erf}^{-1}$. We further illustrate the effect of a similar mismatch with an additional empirical example.

{{\bfseries Empirical example} (Multi-modal Target){\bfseries .}}
 We consider the case where $\rho$ is multimodal (a mixture of four Gaussian distributions in $2$D), while $\eta$ is unimodal (a Gaussian in $2$D). \cref{fig:example-complexity-push} qualitatively compares samples from the real distribution $\rho$ against samples from $T_\push\eta$, with $T$ learned via GAN training in \cref{eq:erm-gans} (with Sinkhorn loss $\msf d_\F$, see \cref{sec:theory}) for spaces $\T$ of increasing complexity (neural networks with increasing depth). Linear generators are clearly unsuited for this task, and only highly non-linear models yield reasonable estimates. See \cref{app:pushforwards} for details on the experimental setup.

\begin{figure*}[t!]
    \centering
    \hspace{-1.8em}
    \begin{subfigure}[t]{0.23\textwidth}
        \centering
        \includegraphics[height=0.7\textwidth]{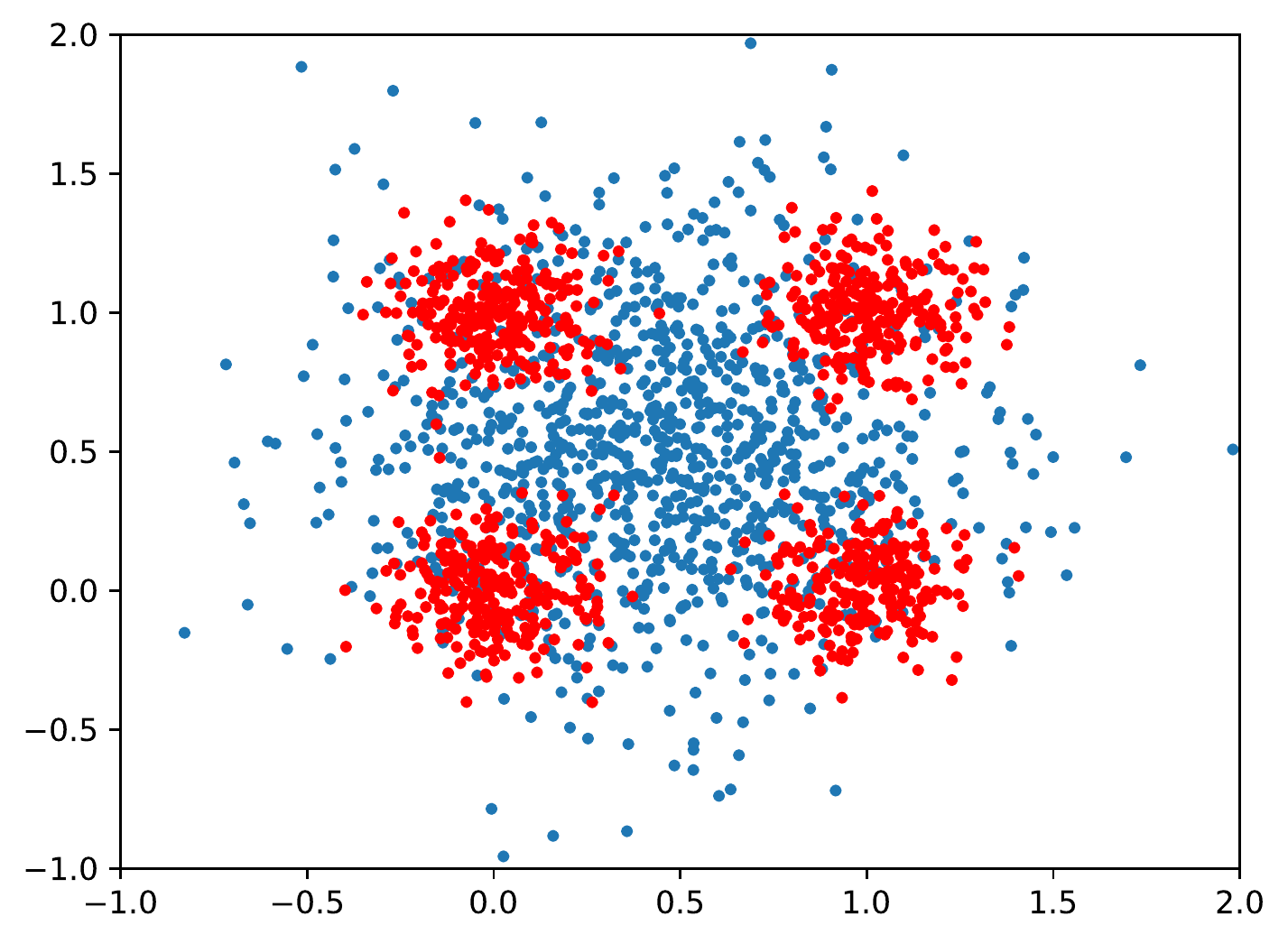}
        \caption{1 layer}
    \end{subfigure}%
    ~
    \begin{subfigure}[t]{0.23\textwidth}
        \centering
        \includegraphics[height=0.7\textwidth]{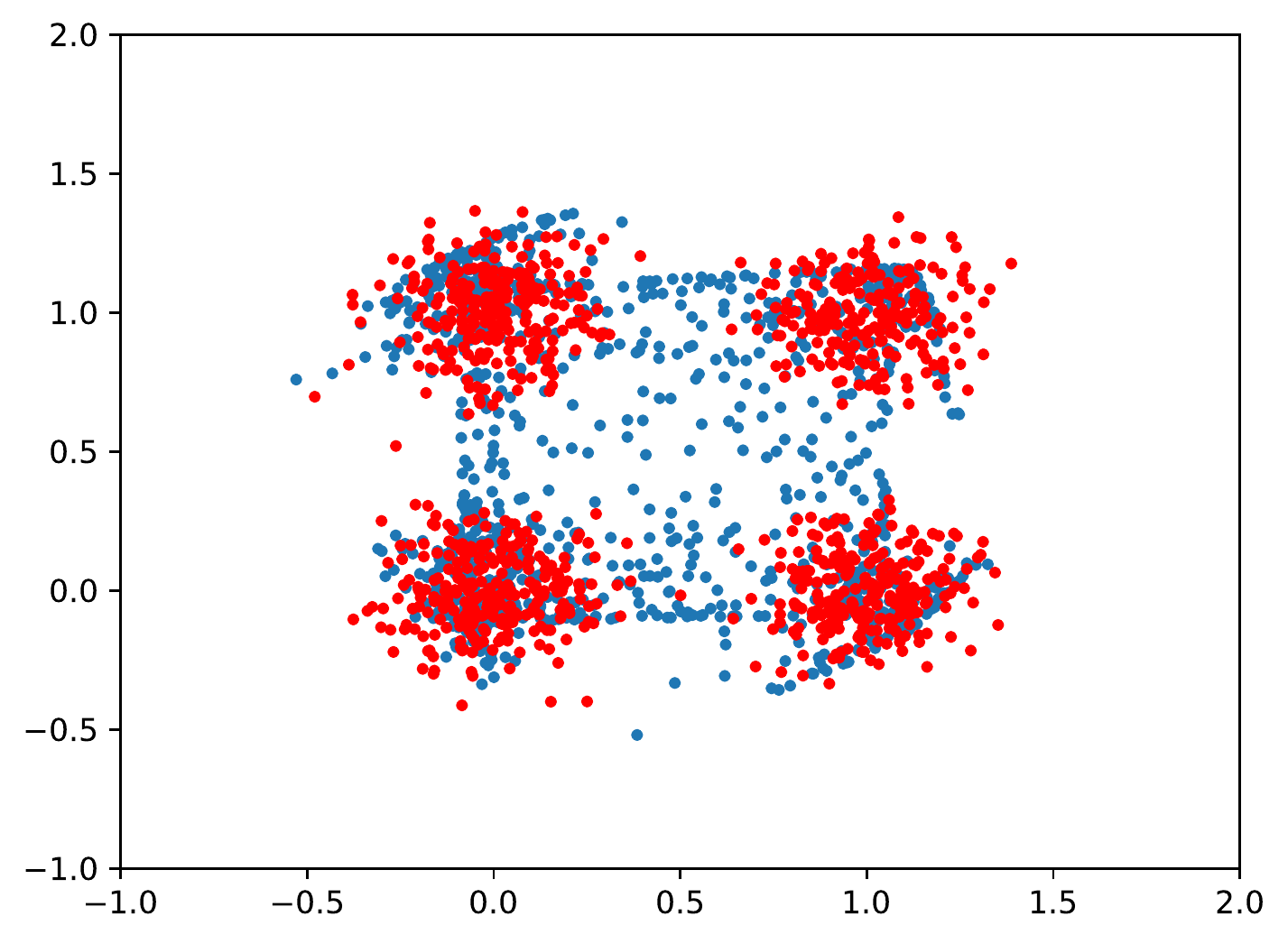}
        \caption{2 layers}
    \end{subfigure}%
    ~
    \begin{subfigure}[t]{0.23\textwidth}
        \centering
        \includegraphics[height=0.7\textwidth]{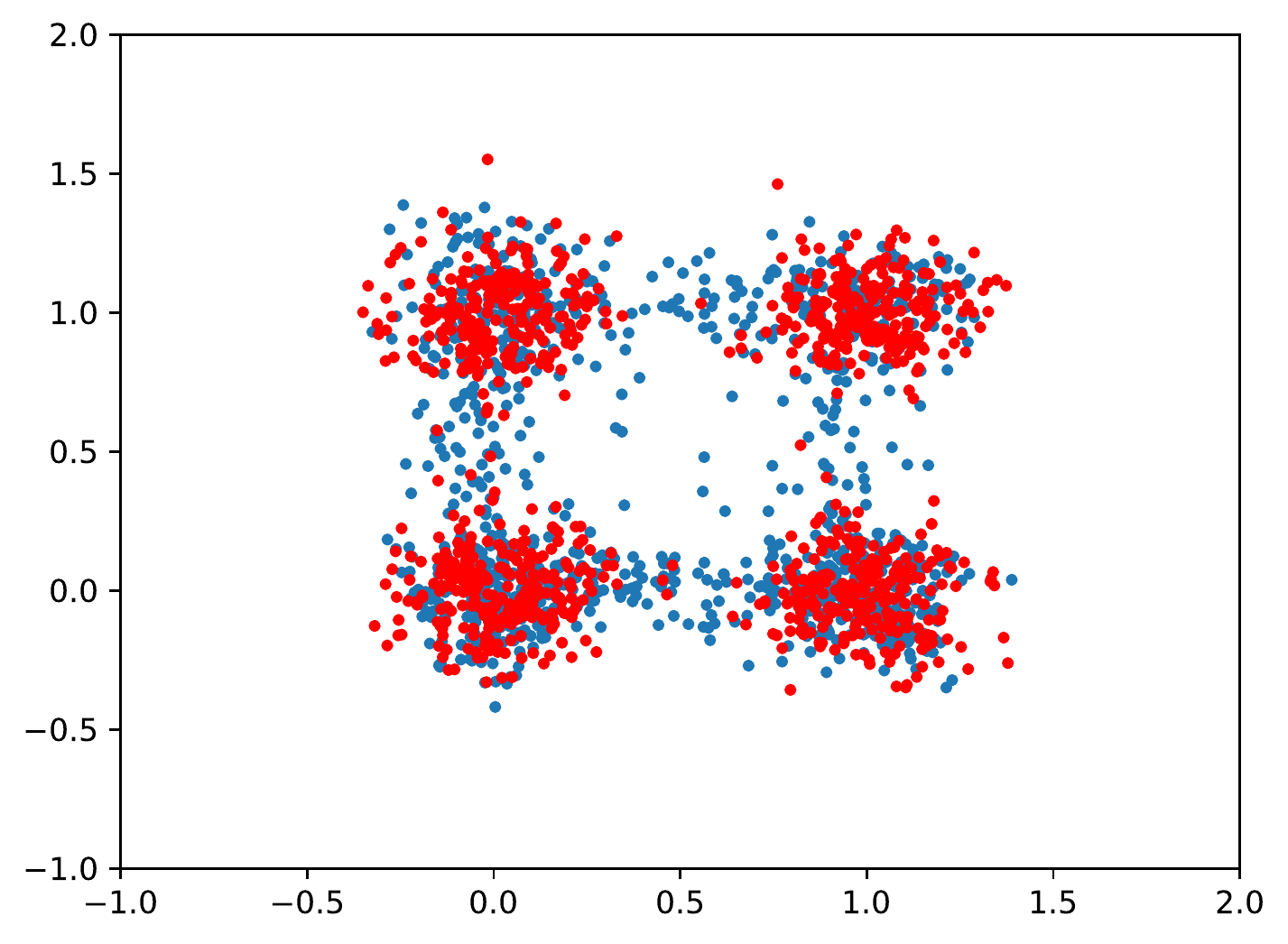}
        \caption{5 layers}
    \end{subfigure}%
    ~
    \begin{subfigure}[t]{0.23\textwidth}
        \centering
        \includegraphics[height=0.7\textwidth]{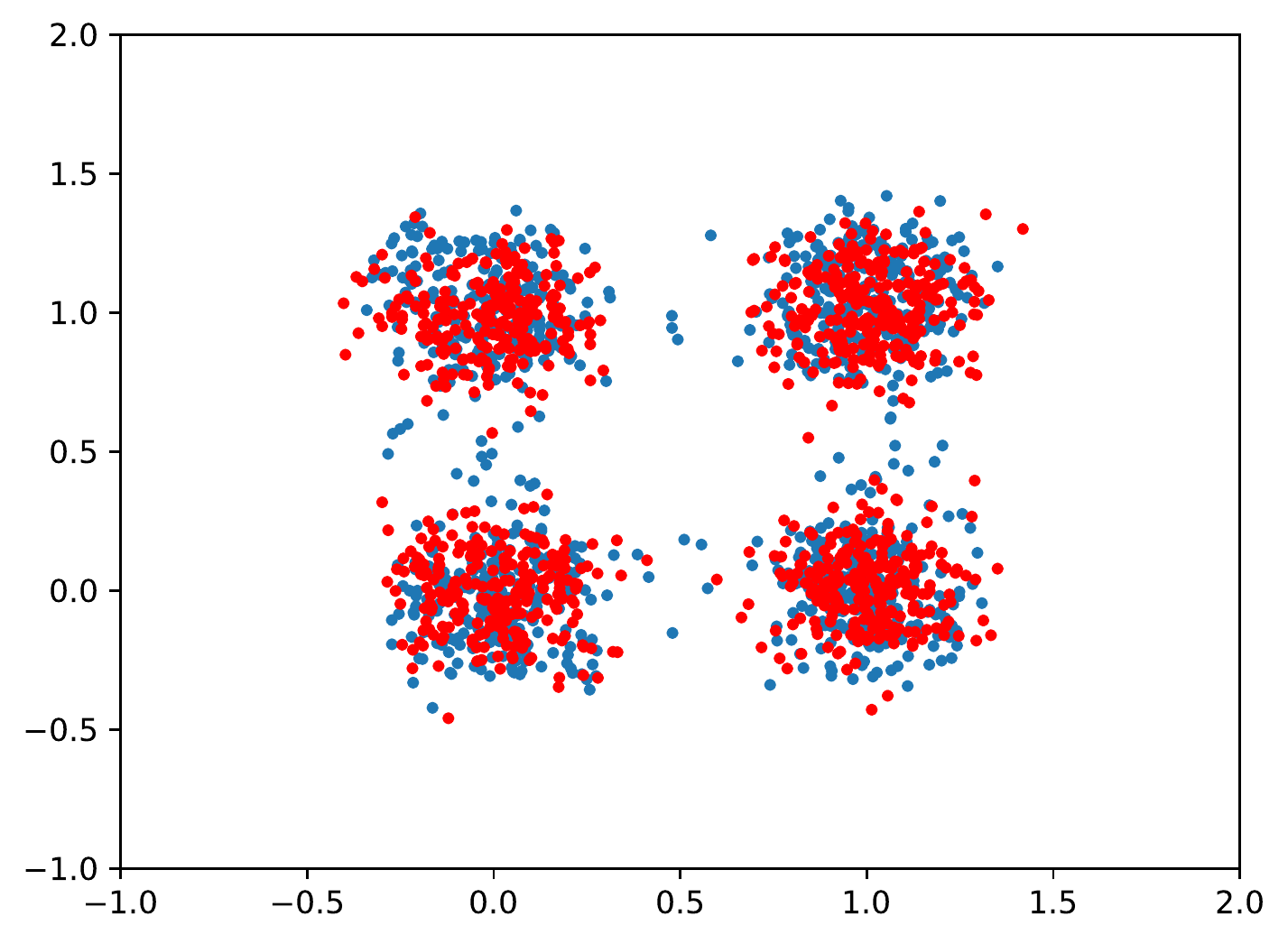}
        \caption{7 layers}
    \end{subfigure}
    \caption{Sinkhorn GAN estimation between a $2$D Gaussian and a mixture of four $2$D Gaussians with generator space $\T$ of increasing complexity (depth). Real (Red) vs generated (Blue) samples. \label{fig:example-complexity-push}}
\end{figure*}

\section{Learning the Latent Distribution}\label{sec:theory}

The arguments above suggest that choosing the latent distribution a-priori can be limiting in several settings. Therefore, in this work we propose to learn the latent distribution jointly with the generator. Given a family $\hh$ of latent distributions, we aim to solve
\eqal{\label{eq:erm-joint-gan}
    (\hat T,\hat \eta) ~=~ \argmin_{T\in\T,\eta\in\hh} ~ \msf{d}_\F(T_\push\eta,\rho_n).
}
A natural question is how the learning rates of $(\hat T,\hat \eta)$ are affected by the choice of $\T$ and $\hh$. In this work we address this question for the case where $\msf{d}_\F$ is the Sinkhorn divergence \cite{cuturi2013sinkhorn}. Indeed, Optimal Transport is particularly suited to capture the geometric properties of distribution supported on low-dimensional manifolds \cite{weed2019sharp} (e.g. pushforward measures from a low-dimensional latent space). Moreover, for the Sinkhorn diveregence, discriminator training, i.e. finding $F$ in \cref{eq:gan-problem}, can be efficiently solved to arbitrary precision via the Sinkhorn-Knopp algorithm (see \cite{cuturi2013sinkhorn} and \cref{sec:implementation}). Below, we introduce our choices for $d_\F, \hh$ and $\T$ and then proceed to characterize the learning rates of $\hat T_\push \hat\eta$.

\paragraph{Choosing $\msf{d}_\F$: Sinkhorn divergence}
For any $\mu,\nu \in \prob(\dom)$, the Optimal Transport problem with entropic regularization is defined as follows \cite{peyre2017computational,cuturi2013sinkhorn,genevay2016}
\begin{equation}\label{eq:primal_pb}
\oteps(\mu,\nu) ~=~ \min_{\pi\in\Pi(\mu,\nu)}~\int \nor{x-y}^2\,d\pi(x,y) + \eps\kl(\pi|\mu\otimes\nu),\qquad \eps\geq0
\end{equation}
where $\kl(\pi|\mu\otimes\nu)$ is the \emph{Kullback-Leibler divergence} between the candidate transport plan $\pi$ and the distribution  $\mu \otimes \nu$, and $\Pi(\mu,\nu)=\{\pi\in\prob(\dom^2)\colon \mathsf{P}_{1\push}\pi=\mu,\,\,\mathsf{P}_{2\push}\pi=\nu\}$, with $\mathsf{P}_{i}\colon\dom\times \dom\rightarrow\dom$ the projector onto the $i$-th component and $\push$ the push-forward operator. For $\eps = 0$, $\oteps$ corresponds to the $2$-Wasserstein distance \cite{peyre2017computational}. The Sinkhorn {\em divergence} $\sink\colon\prob(\dom)\times\prob(\dom)\to\R$ is defined as
\begin{equation}\label{eq:sink_divergence}
\sink(\mu,\nu) ~=~ \oteps(\mu,\nu) ~-~ \frac{1}{2}\oteps(\mu,\mu) ~-~\frac{1}{2}\oteps(\nu,\nu),
\end{equation}
and can be shown to be nonnegative, biconvex and to metrize the convergence in law \cite{feydy2018interpolating}. Entropic regularization was originally introduced as a computationally efficient surrogate to evaluating the Wasserstein distance \cite{cuturi2013sinkhorn}. Recent work showed that the Sinkhorn divergence is also advantageous in terms of sample complexity, with better dependency to the ambient space dimension \cite{genevay2018sample,weed2019sharp}. 

\paragraph{Choosing $\hh$: sub-Gaussian distributions}
For the purpose of our analysis, in the following we will restrict to a class $\hh$ of distribution that are not too spread out on the entire latent domain $\Z\subseteq \R^k$. In particular, we will parametrize $\hh \subset \mathcal{G}_\sigma(\Z)$ the space of $\sigma$-{\itshape sub-Gaussian} distributions on $\Z$, namely distributions $\eta$, such that  $\int e^{\nor{z}^2/2k\sigma^2}~d\eta(z) \leq 2$. Gaussian distributions and probabilities supported on a compact set belong to this family.
%
%
Thus, $\hh$ recovers the case of standard GANs. Note  that the parameter $\sigma$ allows us to upper bound all moments of a distribution and can therefore be interpreted as a quantity that controls the complexity of $\eta$.

\paragraph{Choosing $\T$: balls in $C^s(\Z,\X)$}
In the following we will restrict our analysis to spaces of functions that satisfy specific regularity conditions. In particular, we will consider $\T \subset C_{\tau,L}^s(\Z,\X)$ to be contained in the set of $L$-Lipscthitz functions in the ball of radius $\tau$ in the space of continuous functions from $\Z\subset\R^k$ to $\X\subset\R^d$ equipped with the uniform norm $\nor{\cdot}_{\infty,s}$ on all partial derivatives up to order $s$. Intuitively, the norm $\nor{T}_{\infty,s}$ quantifies the complexity of the generator $T$, hence reflecting how easy (or hard) it is to learn it in practice. In our analysis we will require $s\geq\lceil k/2\rceil + 1$.
To simplify our analysis in the following, we add the additional requirements that $T(0) = 0$ for any $T\in\T$ (in case of an offset, one can factor out a translation first (see \cite[Remark 2.19]{peyre2017computational}).   This choice of the space $\T$ allows to formally model a wide range of smooth generators $T$ (e.g. pushforward maps parametrized by neural networks with smooth activations or via smooth reproducing kernels).


%
We are now ready to state our main result, which characterizes the learning rates of the estimator in \cref{eq:erm-joint-gan} in terms of the complexity parameters associated to the spaces $\T$ and $\hh$ introduced above.

\begin{restatable}{theorem}{thmMainRates}\label{thm:main-rates}
Let $\Z\subset\R^k$, $\X\subset\R^d$ and $\rho = T^*_\push \eta^*$ with  $T^*\in\T \subset C_{\tau,L}^{\lceil k/2\rceil + 1}(\Z,\X)$ and $\eta^*\in\hh \subset \mathcal{G}_\sigma(\Z)$. Let $(\hat T,\hat \eta)$ satisfy \cref{eq:erm-joint-gan} with $\msf{d}_\F = \sink$ and $\rho_n$ a sample of $n$ i.i.d. points from $\rho$. Then,
\eqals{
    \mathbb{E}~\sink(\hat T_\push\hat\eta,\rho) ~\leq~ \frac{\msf c(\tau,L,\sigma,k)}{\sqrt{n}}
}
where $\msf c(\tau,L,\sigma,k) ~=~  \msf C_{k}~(1+ \tau^k L^{\lceil 3k/2\rceil + 1}~\sigma ^{\lceil 5k/2 \rceil + 6} ~\eps^{-\lceil 5k/4  \rceil - 3 })$ with $\msf C_k$ a constant depending only on the latent space dimension $k$ and where the expectation is taken with respect to $\rho_n$.
\end{restatable}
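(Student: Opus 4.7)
I would prove the result via a standard ERM decomposition in the well-specified setting, combined with a sample-complexity bound for $\sink$ that exploits the pushforward structure in order to replace the ambient dimension $d$ by the latent dimension $k$ in the final rate.

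\textbf{Step 1 (Realizability and decomposition).} Since $\rho=T^*_\push\eta^*$ with $(T^*,\eta^*)\in\T\times\hh$, the pair $(T^*,\eta^*)$ is feasible in \cref{eq:erm-joint-gan}, hence by optimality
\[
\sink(\hat T_\push\hat\eta,\rho_n)\;\le\;\sink(T^*_\push\eta^*,\rho_n)\;=\;\sink(\rho,\rho_n).
\]
Writing $\sink(\hat T_\push\hat\eta,\rho)=[\sink(\hat T_\push\hat\eta,\rho)-\sink(\hat T_\push\hat\eta,\rho_n)]+\sink(\hat T_\push\hat\eta,\rho_n)$, upper-bounding the bracket by a supremum over $(T,\eta)\in\T\times\hh$, and taking expectations yields
\[
\E\,\sink(\hat T_\push\hat\eta,\rho)\;\le\;\E\,\sink(\rho,\rho_n)\;+\;\E\!\!\sup_{(T,\eta)\in\T\times\hh}\!\bigl|\sink(T_\push\eta,\rho)-\sink(T_\push\eta,\rho_n)\bigr|.
\]
It then suffices to show both terms are of order $\msf c(\tau,L,\sigma,k)/\sqrt n$.

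\textbf{Step 2 (Dual potentials in an RKHS ball).} Using the dual representation of $\sink$ (as in \cite{feydy2018interpolating,genevay2018sample}), each Sinkhorn divergence is the value of a concave problem in a pair of Sinkhorn potentials $(u,v)$ on $\X$. For sub-Gaussian marginals, these potentials lie in a ball of a Sobolev-type RKHS $H_\eps$ whose radius and smoothness depend polynomially on $\eps^{-1}$ and the sub-Gaussian constant. Consequently, each of the two terms identified in Step 1 is dominated by an empirical-process quantity of the form $\E\sup_{u\in H_\eps}|\int u\,d\nu - \int u\,d\nu_n|$, where $\nu=T_\push\eta$ is the pushforward of a $\sigma$-sub-Gaussian measure through some smooth $T\in\T$.

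\textbf{Step 3 (Pullback to the latent space).} Here the pushforward structure is exploited. By the transfer lemma \cref{eq:transfer_lemma}, $\int u\,d(T_\push\eta)=\int (u\circ T)\,d\eta$. For $T\in C^{s}_{\tau,L}(\Z,\X)$ with $s=\lceil k/2\rceil+1$ and $u$ of bounded $C^s$ norm, the Fa\`a di Bruno chain rule controls $\|u\circ T\|_{C^s(\Z)}$ by a polynomial in $\tau$, $L$ and $\|u\|_{C^s(\X)}$. Since $s>k/2$, the Sobolev embedding $H^s(\Z)\hookrightarrow C^0(\Z)$ applies, and classical Rademacher-complexity bounds for an $H^s(\Z)$ ball against $\sigma$-sub-Gaussian measures (cf.\ \cite{weed2019sharp}) yield the $1/\sqrt n$ rate with constant polynomial in $\tau,L,\sigma,\eps^{-1}$ and exponents depending only on the latent dimension $k$. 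Uniformity in $(T,\eta)\in\T\times\hh$ is automatic because the bound depends on $T$ only through $(\tau,L)$ and on $\eta$ only through $\sigma$, so the supremum in Step 1 incurs no extra penalty.

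\textbf{Main obstacle.} The main technical difficulty is the bookkeeping of constants in Step 3: compounding (i) the $(\sigma,\eps^{-1})$-polynomial Sobolev radius of the Sinkhorn potentials, (ii) the Fa\`a di Bruno inflation of composition norms in $(\tau,L)$, and (iii) the $k$-dimensional Rademacher complexity against $\sigma$-sub-Gaussian measures, all uniformly in $(T,\eta)$, is precisely what produces the stated exponents $\tau^k$, $L^{\lceil 3k/2\rceil+1}$, $\sigma^{\lceil 5k/2\rceil+6}$, $\eps^{-\lceil 5k/4\rceil-3}$ in $\msf c(\tau,L,\sigma,k)$. The smoothness requirement $s=\lceil k/2\rceil+1$ is the minimal integer with $s>k/2$ for the Sobolev embedding into $C^0(\Z)$ used in Step 3.
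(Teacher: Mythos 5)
Your proposal follows essentially the same route as the paper's own proof: the Step-1 ERM decomposition reproduces the paper's bound $\sink(\hat T_\push\hat\eta,\rho)\le 2\sup_{(T,\eta)\in\T\times\hh}\abs{\sink(T_\push\eta,\rho)-\sink(T_\push\eta,\rho_n)}$, and Steps 2--3 correspond to the technical lemmas in Appendix C (pointwise bounds on the Sinkhorn potentials and their derivatives, controlled on the latent space $\Z$ via Fa\`a di Bruno, then a covering-number argument following \cite{mena2019statistical}). The only cosmetic differences are that the paper realizes the pullback to $\Z$ through an equivalence between Sinkhorn divergences of pushforwards and Sinkhorn with a modified cost (Lemma C.1) rather than via an explicit transfer-lemma step, and frames the final empirical-process bound via covering numbers of an explicit derivative-bounded class $\F_{\sigma,\tau,L}$ rather than the Sobolev/RKHS Rademacher phrasing you sketch --- these amount to the same machinery and yield the same $k$-dependent exponents.
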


\cref{thm:main-rates} quantifies the tradeoff between the complexity terms $\tau, L$ of the pushforward map $T^*$ and $\sigma$ of the latent distribution $\eta^*$. In particular, we note that: $i)$ we pay a polynomial cost in terms of the sub-Gaussian parameter $\sigma$ of the latent distribution $\eta^*$; $ii)$ we pay a cost proportional to the complexity of the target generator, including its $\nor{T^*}_{s,\infty}$ norm and Lipschitz constant $L$. $iii)$ all terms depend on the dimension $k$ of the latent space and {\itshape not} on the target space dimension $d$. This result suggests that the GAN paradigm is particularly suited to settings where the target distribution can be modeled in terms of a low-dimensional latent distribution {\itshape and} a regular pushforward map. Extending the result to larger families of pushforward maps (e.g. with weaker regularity assumptions) will be the subject of future work. 


\begin{proof}[Sketch of the proof]
We report the proof of \cref{thm:main-rates} in \cref{app:rates}. Here we present the main steps and key ideas. We begin by observing that leveraging the optimality of the estimator $\hat T_\push \hat \eta$ in minimizing \cref{eq:erm-joint-gan}, the matching error is controlled by
\eqal{\label{eq:upper-bound-sample-complexity}
    \sink(\hat T_\push\hat\eta,\rho) ~\leq~ 2 \sup_{T\in\T,\eta\in\hh}~ \abs{\sink(T_\push\eta,\rho)-\sink(T_\push\eta,\rho_n)}.
}
The right hand side corresponds to the largest generalization error of estimators in $(\T,\hh)$. This quantity is related to the sample complexity of $\rho_n$ with respect to the Sinkhorn divergence. The latter is a topic recently studied in \cite{genevay2018sample,mena2019statistical}, with bounds available for controlling $\abs{\sink(\mu,\rho)-\sink(\mu,\rho_n)}$ for $\mu$ a fixed distribution. However, to control \cref{eq:upper-bound-sample-complexity} we need to provide a uniform upper bound for the sample complexity of the Sinkhorn divergence over the class $(\T,\hh)$. To do so, we use the following.
%
%
\begin{lemma}[Informal]\label{lem:informal-uniform-upper-bound-space-of-functions}
Let $\eta,\nu_1,\nu_2\in\mathcal{G}_\sigma(\Z)$ and $T,T'\in \T$ with $\T$ as in \cref{thm:main-rates}. 
Then,
\eqal{\label{eq:upper-bound-informal-lemma}
    \abs{\sink(T_\push\eta,T'_\push\nu_1)-\sink(T_\push\eta, T'_\push\nu_2)}~\leq~ \sup_{f\in\F_{\sigma,\tau,L}}\abs{\int f(z)~d\nu_1 - \int f(z)~d\nu_2(z)}
}
with $\F_{\sigma,\tau,L}$ a suitable space of functions $f:\Z\to\R$ that does not depend on $\eta, T$ and $T'$ but only on the complexity parameters $\sigma,\tau$ and $L$ (see  \cref{app:technical-results} for the characterization of $\F_{\sigma,\tau,L}$).

\end{lemma}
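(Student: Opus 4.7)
The plan is to reduce each Sinkhorn divergence difference to an integral of a Sinkhorn potential against the signed measure $T'_\push\nu_1-T'_\push\nu_2$ via an envelope-theorem argument on the dual, then push back through $T'$ using the Transfer Lemma. First, I would expand
\eqals{
\sink(T_\push\eta,T'_\push\nu_1)-\sink(T_\push\eta,T'_\push\nu_2) = [\oteps(T_\push\eta,T'_\push\nu_1)-\oteps(T_\push\eta,T'_\push\nu_2)] -\tfrac{1}{2}[\oteps(T'_\push\nu_1,T'_\push\nu_1)-\oteps(T'_\push\nu_2,T'_\push\nu_2)],
}
noting that the $\oteps(T_\push\eta,T_\push\eta)$ terms cancel.

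Next, I would invoke the strong dual of entropic OT, writing $\oteps(\mu,\nu)=\max_{u,v} \int u\, d\mu+\int v\, d\nu -\eps(\int e^{(u(x)+v(y)-\cost(x,y))/\eps}d(\mu\otimes\nu)-1)$. Let $(u_1,v_1)$ be optimal for $\oteps(T_\push\eta,T'_\push\nu_1)$ and extend $v_1$ to all of $\X$ via the Sinkhorn fixed point $\tilde v_1(y)=-\eps\log\int e^{(u_1(x)-\cost(x,y))/\eps}d(T_\push\eta)(x)$. This extension automatically makes $\int e^{(u_1+\tilde v_1-\cost)/\eps}d(T_\push\eta\otimes\nu)=1$ for any $\nu$, so feeding $(u_1,\tilde v_1)$ as a suboptimal candidate yields
\eqals{
\oteps(T_\push\eta,T'_\push\nu_1)-\oteps(T_\push\eta,T'_\push\nu_2)\;\leq\;\int \tilde v_1\, d(T'_\push\nu_1-T'_\push\nu_2),
}
with a matching lower bound from the symmetric choice $(u_2,\tilde v_2)$. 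An identical argument applied to $\oteps(T'_\push\nu_i,T'_\push\nu_i)$ controls the self-transport terms by the integral of a symmetric potential against the same signed measure. Applying the Transfer Lemma then rewrites each $\int \phi\, d(T'_\push\nu_1-T'_\push\nu_2)=\int (\phi\circ T')\, d(\nu_1-\nu_2)$, and collecting terms gives \cref{eq:upper-bound-informal-lemma} with
\eqals{
\F_{\sigma,\tau,L}\;=\;\{\phi\circ T': T'\in\T,\ \phi \text{ a Sinkhorn potential arising from some } T_\push\eta,T'_\push\nu\in\pp(\T,\hh)\}.
}

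The main obstacle, and the reason $\F_{\sigma,\tau,L}$ depends only on $\sigma,\tau,L$ and not on the specific $\eta,T,T'$, is the uniform regularity control of the extended Sinkhorn potentials. The log-sum-exp extension is infinitely smoothing, with derivatives of $\tilde v$ controlled by derivatives of the quadratic cost $\cost$ on $\X$ and negative powers of $\eps$, but the constants involved require uniform bounds on moments and tail behavior of the marginals $T_\push\eta$, $T'_\push\nu$. Here the sub-Gaussian assumption on $\hh$ together with the $C^{\lceil k/2\rceil+1}_{\tau,L}$ control on $\T$ is exactly what propagates to uniform $C^s$ bounds on $\tilde v$, and thence on $\tilde v\circ T'$ via the Faà di Bruno formula applied to an $L$-Lipschitz map with bounded higher derivatives. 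Carrying out this bookkeeping — tracking the explicit polynomial dependence on $\sigma,\tau,L,\eps^{-1}$ — is the delicate part, and it determines the function class $\F_{\sigma,\tau,L}$ precisely enough for the subsequent Rademacher/covering arguments in \cref{app:technical-results} to yield the $n^{-1/2}$ rate of \cref{thm:main-rates}.
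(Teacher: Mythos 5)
Your overall architecture — extend the optimal dual potential via the log-sum-exp fixed point, feed it as a suboptimal candidate into the dual of the perturbed problem to get a one-sided bound, symmetrize, and push back to the latent space via the Transfer Lemma — is precisely the Mena--Niles-Weed Cor.~2 argument that the paper invokes, so this is essentially the paper's route. The paper phrases it slightly differently: it first proves (Lemma~C.1) that $\mathsf{S}_{\eps,\cost}(T_\push\eta,T'_\push\nu)=\mathsf{S}_{\eps,\cost_{T,T'}}(\eta,\nu)$ where $\cost_{T,T'}(z,w)=\cost(T(z),T'(w))$, and then runs the MNW argument directly on $\Z$ with the modified cost, whereas you run it on $\X$ and transfer at the end. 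These are two presentations of the same calculation, since the potentials of the modified-cost problem are exactly the $\X$-potentials composed with $T$, $T'$.

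There is one point where your sketch glosses over a subtlety that the paper's ordering makes transparent, and it is worth flagging because it is what makes the constants in \cref{thm:main-rates} depend only on the latent dimension $k$ and not on the ambient dimension $d$. You write that the derivatives of $\tilde v$ on $\X$ are "controlled by derivatives of the quadratic cost $\cost$ on $\X$" and then composed with $T'$ via Faà di Bruno. If you literally bound $\|D^\beta\tilde v\|_\infty$ over $d$-dimensional multi-indices $\beta$ and then compose, both the MNW potential bounds and the multivariate Faà di Bruno sum over component indices introduce factors of $d$. What saves the argument — and what the paper's cost-change reformulation automates — is that after substituting $x=T(w)$ via the Transfer Lemma and expanding the squared norm, the $z$-dependence of $\tilde v(T'(z))$ enters only through the scalar $\scal{T'(z)}{T(w)}$; the $d$-dimensional contraction is then handled once and for all by Cauchy--Schwarz, $\abs{\scal{D^\beta T'(z)}{T(w)}}\leq\tau L\nor{w}$, and all remaining combinatorics live on $\Z\subset\R^k$. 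Your proposal would need to make this expansion explicit rather than compose abstract $C^s(\X)$ bounds with $T'$; otherwise the "bookkeeping" you describe would land you with $d$-dependent constants. A second, smaller point: for the self-transport term $\oteps(T'_\push\nu_1,T'_\push\nu_1)-\oteps(T'_\push\nu_2,T'_\push\nu_2)$ both marginals change, so the envelope bound does not immediately produce an integral against $T'_\push(\nu_1-\nu_2)$; one needs the standard telescoping of $\mu_1^{\otimes 2}-\mu_2^{\otimes 2}$ into $\mu_1\otimes(\mu_1-\mu_2)+(\mu_1-\mu_2)\otimes\mu_2$ (this is inside MNW Cor.~2, so not a gap, just an elision).
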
\vspace{-5pt}

The result implies that we can upper bound the generalization error of $T_\push\eta$ in terms of the integral probability metric $\msf{d}_{\F_{\sigma,\tau,L}}(\eta^*,\eta^*_n)$ (i.e. (right hand side of  \cref{eq:upper-bound-informal-lemma}) between the true latent $\eta^*$ and its empirical sample $\eta^*_n$. Note that this quantity is uniform with respect to the sub-Gaussian parameter of distributions in $\hh$ and the regularity of the class $\T$.
Following \cite[Thm. 2]{mena2019statistical}, we can control $\msf{d}_{\F_{\sigma,\tau,L}}(\eta^*,\eta^*_n)$ in expectation by estimating the covering numbers of a rescaling of $\F_{\sigma,\tau,L}$.
%
\end{proof}
\cref{thm:main-rates} studies the learning rates of the estimator in \cref{eq:erm-joint-gan} when the GAN model is exact. A natural question is whether similar results hold when this is only an approximation. Below, we consider the case where the target distribution is ``almost'' a low-dimensional pushforward (e.g. it is concentrated around a low-dimensional manifold) but is supported on a larger domain (e.g. due to noise).

\paragraph{Approximation error for noisy models} 
Let the target distribution $\rho$ be obtained by convolving $T^*_\push \eta^*$ with a distribution $\Phi_\sig$ with sub-Gaussian parameter $\delta>0$. Recall that the convolution $\Phi_\sig\ast \mu$ is defined as the distribution such that, for any measurable $f:\X\to\R$
\eqals{
    \int f(x)~d(\Phi_\sig\ast  \mu)(x) ~=~ \int f(w + y)~d\mu(y)d\Phi_\sig(w).
}
Therefore, $\rho = \Phi_\sig\ast T_\push\eta$ can be interpreted as the process of ``perturbing'' the distribution $T_\push\eta$ by means of a probability $\Phi_\sig$. Standard examples are the cases where Gaussian or uniform noise is added to samples from the pushforward. This perturbation affects the generalization of the proposed estimators by a term proportional to the sub-Gaussian parameter $\delta$ of the noise $\Phi_\sig$, as follows.
\begin{restatable}{corollary}{corPerturbation}\label{cor:perturbation}
Under the same assumption of \cref{thm:main-rates}, let $\Phi_\sig\in\mathcal{G}_\delta(\X)$ and $\rho = \Phi_\sig\ast T^*_\push\eta^*$. Let $\msf c$ be the same constant of \cref{thm:main-rates}. Then,
\eqals{
    \mathbb{E}~\sink(\hat T_\push \hat \eta, \rho) ~\leq~ \frac{2\,\msf c(\tau,L,\sigma,k)}{\sqrt{n}} ~+~ 3\tau d~\delta.
}
\end{restatable}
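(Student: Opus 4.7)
The plan is to reduce the noisy-model statement to \cref{thm:main-rates} by introducing the clean pushforward $\mu^{*} := T^{*}_\push \eta^{*}$ as an auxiliary distribution. Since $(T^{*},\eta^{*}) \in \T \times \hh$ still lies in the search space, the optimality of $(\hat T, \hat\eta)$ for \cref{eq:erm-joint-gan} yields $\sink(\hat T_\push\hat\eta, \rho_n) \leq \sink(\mu^{*}, \rho_n)$. Adding and subtracting empirical Sinkhorn divergences then gives the decomposition
\begin{equation*}
\sink(\hat T_\push\hat\eta, \rho) \leq |\sink(\hat T_\push\hat\eta, \rho) - \sink(\hat T_\push\hat\eta, \rho_n)| + |\sink(\mu^{*}, \rho_n) - \sink(\mu^{*}, \rho)| + \sink(\mu^{*}, \rho).
\end{equation*}
Upper-bounding each of the first two fluctuation terms by the supremum over $(T,\eta)\in\T\times\hh$ and taking expectation, both reduce to the empirical-process quantity already controlled in the proof of \cref{thm:main-rates} through \cref{lem:informal-uniform-upper-bound-space-of-functions} together with the covering-number estimate on $\F_{\sigma,\tau,L}$. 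Each contributes at most $\msf c(\tau,L,\sigma,k)/\sqrt n$ in expectation, producing the announced $2\msf c/\sqrt n$ term.

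It remains to establish the deterministic bias bound $\sink(\mu^{*}, \rho) \leq 3\tau d \delta$. The plan is to couple $X \sim \mu^{*}$ with $Y = X + W$, $W \sim \Phi_\delta$ independent of $X$, so that the joint law has marginals $\mu^{*}$ and $\rho$. Writing \eqref{eq:sink_divergence} as the sum of three $\oteps$ terms and using that $\supp \mu^{*} \subset B(0,\tau)$ (because $\|T^{*}\|_{\infty,s}\le\tau$ and $T^{*}(0)=0$), one argues that the Sinkhorn potentials entering each of the three $\oteps$ terms admit an $O(\tau)$-Lipschitz control on the relevant support, independently of $\eps$. Combining this with $|f(X+W)-f(X)| \leq C\tau \|W\|$ and gathering the three contributions yields
\begin{equation*}
\sink(\mu^{*}, \rho) ~\leq~ 3\tau\, \E\|W\|.
\end{equation*}
Finally, the sub-Gaussian hypothesis $\Phi_\delta \in \mathcal G_\delta(\X)$ gives $\E\|W\|^2 \leq 4d\delta^2$, so that $\E\|W\| \leq 2\sqrt d\,\delta \leq d\delta$ after absorbing the numerical factor into the constant, producing the stated $3\tau d \delta$.

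The main technical obstacle is the Lipschitz control on the Sinkhorn potentials when $\rho$ has unbounded (but sub-Gaussian) support, since classical diameter-based Lipschitz bounds for Sinkhorn potentials require both marginals to be compactly supported. I would address this through a truncation argument: split $\rho$ into a bulk part supported in $B(0,\tau+O(\sqrt d\delta))$ and a light sub-Gaussian tail whose contribution to $\sink(\mu^{*},\rho)$ is bounded separately by the mass it carries and absorbed into the additive $\delta$ error. The empirical-process piece, by contrast, is a verbatim repetition of the machinery already assembled for \cref{thm:main-rates} and introduces no new ideas.
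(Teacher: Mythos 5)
Your three‑term decomposition is attractive, but it cannot be closed with the machinery the paper actually has, and the gap is not the one you flagged. You propose to bound
\[
\big|\sink(\hat T_\push\hat\eta,\rho)-\sink(\hat T_\push\hat\eta,\rho_n)\big|
\quad\text{and}\quad
\big|\sink(\mu^{*},\rho_n)-\sink(\mu^{*},\rho)\big|
\]
uniformly over $(T,\eta)\in\T\times\hh$ and feed them into the empirical‑process bound used for \cref{thm:main-rates}. But that bound (\cref{thm:sample_complexity}, built on \cref{lem:formal-uniform-upper-bound-space-of-functions}) crucially relies on $\rho$ and $\rho_n$ being pushforwards $T'_\push\nu$ with $T'\in\T$ and $\nu\in\mathcal G_\sigma(\Z)$: the cost‑change trick of \cref{lemma:equivalence_push_change_cost} transports the whole comparison into the latent space $\R^k$, which is exactly why the final constant $\msf c(\tau,L,\sigma,k)$ depends only on $k$. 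In the noisy setting, $\rho=\Phi_\sig\ast T^{*}_\push\eta^{*}$ is \emph{not} such a pushforward — it has full $d$-dimensional sub‑Gaussian support — so neither Lemma~\ref{lem:formal-uniform-upper-bound-space-of-functions} nor the cost‑change reduction applies. Falling back on the generic Sinkhorn sample‑complexity bound of \cite{mena2019statistical} for a $d$-dimensional sub‑Gaussian target would still give $O(1/\sqrt n)$, but with a prefactor scaling like $\sigma^{\lceil 5d/2\rceil+6}\eps^{-\lceil 5d/4\rceil -3}$ — the $k$-dependent constant of the statement is lost. This is the genuine obstruction, and your proposed truncation of $\rho$’s tails does not address it (it only touches the unbounded-support worry about potential Lipschitz estimates).

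The paper’s proof avoids this by a finer six‑term decomposition: write $\rho_n=\frac1n\sum_i\delta_{T^{*}(z_i)+w_i}$ with $z_i\sim\eta^{*}$, $w_i\sim\Phi_\sig$, and compare against the \emph{clean} empirical $T^{*}_\push\eta^{*}_n=\frac1n\sum_i\delta_{T^{*}(z_i)}$, not against $\rho_n$ directly. The empirical fluctuations ($A_2$, $A_6$ in \cref{cor:formal-perturbation}) are then between $T^{*}_\push\eta^{*}$ and $T^{*}_\push\eta^{*}_n$, both clean pushforwards, so \cref{thm:sample_complexity} applies with the $k$-dependent constant. The noise is isolated in \emph{three} separate perturbation terms ($A_1$, $A_3$, $A_5$), each controlled by \cref{lemma:perturbation}; this is where the factor $3$ in the $\delta$-term comes from, not — as your sketch suggests — from bounding a single bias $\sink(\mu^{*},\rho)$ by $3\tau d\delta$. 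Your coupling‑plus‑Lipschitz‑potential idea for the bias term is in the right spirit (the paper’s \cref{lemma:perturbation} is indeed proven via gradient bounds on the potentials from \cite{mena2019statistical}), but it only accounts for one of the three $\delta$ contributions. If you want to salvage your outline, you would need to split $\rho_n$ itself into a clean and a noisy part and absorb the noisy part into a perturbation term, which brings you back to the paper’s decomposition.
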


\cref{cor:perturbation} characterizes the approximation behavior of the GAN paradigm (see \cref{app:perturbation} for a proof). It shows that when the target distribution is essentially low-dimensional, we can recover it up to a quantity that depends on the intensity of the noise $\Phi_\sig$. This is reminiscent of the irreducible error in supervised learning settings when approximating a function lying outside the hypothesis space \cite{shalev2014understanding}. 

\section{Optimization}\label{sec:implementation}
In this section we discuss how to parametrize the spaces $\T$ and $\hh$ and tackle the joint GAN problem in practice. We note that minimizing $\sink(T_\push\eta,\rho_n)$ with respect to either $T$ or $\eta$ critically hinges on the dual formulation of entropic Optimal Transport. Therefore, we first review its formulation and main properties.  Then, we use such notion to address \cref{eq:erm-joint-gan}.

\paragraph{Dual Formulation of Entropic OT}
The dual formulation of \cref{eq:primal_pb} for $\oteps(\mu,\rho)$ is (see e.g. \cite{chizat2018scaling})
\eqals{\label{eq:dual_pb}
\sup_{u,v\in \cont(\X)}~\int u(x)~d\mu(x) ~+~ \int v(y)~d\rho(y) ~-~\eps \int e^{\frac{u(x) + v(y) - \|x-y\|^2}{\eps}}~d\mu(x)d\rho(y).
}
This problem always admits a pair of minimizers $(u^*,v^*)$, also known as {\itshape Sinkhorn potentials} \cite{sinkhorn1964}. When $\mu$ and $\nu$ are probability distributions with finite support, the well-established {\sc SinkhornKnopp} algorithm can be applied to efficiently obtain the values of $u^*$ and $v^*$ on the support points of $\mu$ and $\nu$ respectively \cite{sinkhorn1964,cuturi2013sinkhorn}. Then, the value of $u^*$ and $v^*$ can be evaluated on any point of $\X$ by means of the following characterization of the Sinkhorn potentials \cite{feydy2018interpolating}
\eqals{\label{eq:characterization-sinkhorn-potential}
    u^*(x) ~=~ -~\eps~\log \int e^{\frac{v^*(y) - \|x-y\|^2}{\eps} }~d\rho(y).
}
This characterization will be of particular interest in the following. Indeed both optimization of the generator and latent distribution will make use of the explicit calculation of the gradients of $u^*$. 

\paragraph{Learning the Generator} Minimizing \cref{eq:erm-joint-gan} with respect to $T$ for a fixed latent distribution $\eta$ corresponds to training a standard GAN. The case of Sinkhorn GANs was originally studied in \cite{genevay2018learning}. In practice, one considers a parametric family of generators $T_\theta$ with $\theta\in\Theta$. The gradients of Sinkhorn divergence $\sink({T_\theta}_\push\eta,\rho_n)$ with respect to  $\theta$ can be obtained via automatic differentiation. Here, we provide an analytic formula and discuss its potential benefits in \cref{app:gradient}. We will assume the parametrization to be differentiable a.e., and denote by $\nabla_\theta T_\theta$ the gradient of $T_\theta$ with respect to $\theta$. By leveraging the characterization of dual potential in \cref{eq:characterization-sinkhorn-potential}, we have the following.
%
\begin{restatable}{proposition}{PGeneratorGradient}\label{prop:generator-gradient}
Let $\eta\in\prob(\Z)$ and $\rho \in \prob(\X)$. Let $(u^*,v^*)$ be a pair of minimizers of \cref{eq:dual_pb} with $\mu = {T_\theta}_\push \eta$ and $\nu = \rho$. Then, the gradient of $\oteps({T_{\theta}}_\push\eta,\rho)$ in $\theta_0$ is
\eqals{\label{eq:generator-gradient-step}
\big[\nabla_{\theta}\oteps({T_{\theta}}_\push\eta,\rho)\big]|_{\theta ~=~ \theta_0} = \int \big[\nabla_x u^*(\cdot)\big]|_{x=T_{\theta_0}(z)}~ \big[\nabla_\theta T_\theta(z)\big]|_{\theta=\theta_0} ~d\eta(z).
}
\end{restatable}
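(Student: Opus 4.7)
The plan is to combine the dual formulation of $\oteps$ with an envelope-type argument (Danskin's theorem) and then simplify the resulting expression using the fixed-point characterization of the Sinkhorn potentials.

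First I would use the Transfer lemma \cref{eq:transfer_lemma} to rewrite the dual problem \cref{eq:dual_pb} for $\mu = {T_\theta}_\push\eta$ and $\nu = \rho$ as an explicit function of $\theta$, namely
\eqals{
\oteps({T_\theta}_\push \eta, \rho) = \sup_{u,v \in \cont}~ J(\theta; u, v),
}
where
\eqals{
J(\theta; u, v) = \int u(T_\theta(z))\,d\eta(z) + \int v(y)\,d\rho(y) - \eps \int e^{\frac{u(T_\theta(z)) + v(y) - \|T_\theta(z) - y\|^2}{\eps}}\,d\eta(z)\,d\rho(y).
}
By the envelope theorem (Danskin-type result, applicable since the objective is smooth in $\theta$ and $(u^\ast,v^\ast)$ is an optimal pair), the gradient of $\oteps$ at $\theta_0$ equals the partial gradient of $J(\theta; u^\ast, v^\ast)$ in $\theta$ evaluated at $\theta_0$. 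The chain rule then yields two contributions: one from $u^\ast(T_\theta(z))$ and one from the exponential term, each producing a factor $\nabla_\theta T_\theta(z)|_{\theta_0}$.

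Next I would simplify the exponential contribution using the characterization \cref{eq:characterization-sinkhorn-potential} of the Sinkhorn potential $u^\ast$. That identity is equivalent to the marginal constraint
\eqals{
\int e^{\frac{u^\ast(x) + v^\ast(y) - \|x-y\|^2}{\eps}}\,d\rho(y) = 1 \qquad \text{for all } x \in \supp({T_{\theta_0}}_\push \eta).
}
Differentiating this identity in $x$ and rearranging gives
\eqals{
\nabla_x u^\ast(x) = 2\int (x-y)\, e^{\frac{u^\ast(x) + v^\ast(y) - \|x-y\|^2}{\eps}}\,d\rho(y).
}
Plugging $x = T_{\theta_0}(z)$ into this relation, the exponential contribution to the $\theta$-gradient becomes
\eqals{
-\int \Big(\nabla_x u^\ast(T_{\theta_0}(z)) - 2\!\int (T_{\theta_0}(z) - y) e^{\frac{u^\ast(T_{\theta_0}(z)) + v^\ast(y) - \|T_{\theta_0}(z)-y\|^2}{\eps}} d\rho(y)\Big) \nabla_\theta T_\theta(z)|_{\theta_0} \, d\eta(z),
}
and by the identity above the integrand vanishes identically. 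What remains is precisely the first contribution $\int \nabla_x u^\ast(T_{\theta_0}(z))\, \nabla_\theta T_\theta(z)|_{\theta_0}\, d\eta(z)$, which is \cref{eq:generator-gradient-step}.

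The main obstacle is the rigorous justification of the envelope step: the supremum is taken over an infinite-dimensional space of potentials, so one cannot directly invoke the finite-dimensional Danskin theorem. I would address this by either (i) appealing to the strong concavity of $J$ in $(u,v)$ along the relevant directions, which ensures uniqueness of the optimal pair (up to the standard additive constant) together with the smoothness of the value function and permits the interchange, or (ii) differentiating the objective for a fixed pair $(u^\ast,v^\ast)$ and showing that the residual term (the Fréchet derivative with respect to $(u,v)$ applied to the infinitesimal change of the potentials induced by $\theta$) is zero by the first-order optimality of $(u^\ast,v^\ast)$. Mild regularity on $T_\theta$ in $\theta$ and on $u^\ast, v^\ast$ (differentiability and integrability of the integrands, which follow from standard regularity of Sinkhorn potentials under the assumed cost) justifies the exchange of differentiation and integration throughout.
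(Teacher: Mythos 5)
Your proposal is correct and follows essentially the same route as the paper: apply a Danskin/envelope argument to the dual formulation (with care about uniqueness of the potentials up to an additive constant, which the paper handles by restricting to a linear subspace where the dual is strictly concave and invoking Bonnans--Shapiro), then differentiate under the integral by the chain rule and use the Sinkhorn fixed-point characterization $\int e^{(u^*(x)+v^*(y)-\|x-y\|^2)/\eps}d\rho(y)=1$ and its $x$-derivative to collapse the resulting terms. The only cosmetic difference is bookkeeping: you show the exponential contribution vanishes identically while the $u^*\circ T_\theta$ term survives, whereas the paper cancels the $u^*\circ T_\theta$ term against part of the exponential contribution and keeps the cost part -- algebraically the same.
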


The formula above is efficient when $\eta$ is discrete. When $\eta$ has dense support, computing the gradient of $\oteps$ with \cref{eq:generator-gradient-step} can become prohibitive. A potential approach proposed in \cite{genevay2018learning} is to sample $m$ points from $\eta$ and $\rho_n$ and compute $\nabla_\theta\oteps({T_\theta}_\push\eta_m,\rho_m)$ as a proxy of the target gradient.  


\paragraph{Learning the Latent Distribution} To guarantee $\hat\eta$ to be sub-Gaussian, we consider $\hh=\P(\Z)$ the set of probability measures over a compact subset $\Z$ of the latent space $\R^k$. Optimization over a space of measures $\P(\Z)$ is itself an active research topic. Possible strategies include Conditional Gradient \cite{bredies2013inverse,boyd2017alternating,mensch2019geometric,luise2019sinkhorn}, Mirror Descent \cite{hsieh2018finding} or the particle based approaches discussed below \cite{feydy2018interpolating,chizat2019sparse}.

\textit{Flow-based methods} approximate the target distribution with a set of $m$ particles $\eta = \sum_{i=1}^m \omega_i \delta_{z_i}$ whose position is then optimized to minimize $\sink(T_\push\eta,\rho_n)$ (for simplicity, here we do not learn the $\omega_i$ but fix them to $1/m$). This problem can be solved by a gradient descent-based algorithm in the direction minimizing the associated Sinkhorn potentials \cite{feydy2018interpolating}. More precisely, given $(u^*,v^*)$ a minimizer of \cref{eq:dual_pb}, we update the position of each particle via a gradient step of size $\alpha>0$
\eqal{\label{eq:flow-gradient-step}
z_+ ~=~ z ~-~ \alpha \nabla_z u^*(T(z)).
}
We refer to \cite{chizat2019sparse} for more details and a comprehensive analysis of convergence and approximation guarantees for particle-based methods with respect to $m$ the number of particles.


\begin{algorithm}[t]
\footnotesize
\caption{{\sc Latent Distribution Learning GANs}}
\label{alg:latent-GAN}
\begin{algorithmic}
\vspace{0.25em}
  \STATE {\bfseries Input:} Target $\rho_n$, latent dimension $k$,  initial network params $\theta$, Sinkhorn param $\eps>0$, step sizes $\alpha_1,\alpha_2>0$
  \STATE \phantom{\bfseries Input:} perturbation $\Phi_\sig$ (e.g. Gaussian $\mathcal{N}(0,\delta I_k)$ in $\R^k$), starting particles $(z_i)_{i=1}^m$, sampling size $\ell$.
  \vspace{0.45em}
  \STATE {\bfseries Until convergence do:} 
  \vspace{0.3em}
  \STATE \qquad Sample ${(i_j,w_j)_{j=1}^\ell}$ \qquad~~~ with \quad  $i_j\sim\textrm{Unif.}\{1,\dots,m\}$ \quad and \quad $w_j\sim \Phi_\sig$
  \vspace{0.1em}
  \STATE \qquad Let ${\mu ~=~ \frac{1}{\ell}\sum_{j=1}^\ell \delta_{x_j}}$ \qquad with \quad $x_j ~=~ T_\theta(z_{i_j} + w_j)$
  \vspace{0.1em}
  \STATE \qquad $(u^*,v^*) ~=~ ${\sc SinkhornKnopp}$(\mu, \rho_n,\eps)$
  \vspace{0.1em}
  \STATE \qquad ${\theta \,\,\gets \theta - \alpha_1 \sum_{j=1}^\ell \nabla_x u^*(x_j)~ \nabla_\theta T_\theta(z_{i_j}+w_j)}$
  \vspace{0.1em}
  \STATE \qquad ${z_i \gets z_i - \alpha_2 \sum_{j~|~i_j = i}\nabla_x u^*(x_j) ~ \nabla_z T_\theta(z_i + w_j)}$
  \vspace{0.45em}
\STATE {\bfseries Return:} \quad $\hat\mu = {T_\theta}_\push\hat\eta$ \quad with ${\hat\eta = \Phi_\sig\ast \hat\nu}$ \quad and ${\hat\nu = \frac{1}{m} \sum_{i=1}^m z_i}$.
  \vspace{0.1em} \STATE {\bfseries Sampling:} ~~ $x\sim\hat\mu$ \quad~~~ obtained as $x = T(z_i + w)$ \quad with $i\sim\textrm{Unif.}\{1,\dots,m\}$ \quad and $w\sim \Phi_\sig$. 
\end{algorithmic}
\end{algorithm}

\paragraph{Sampling $\&$ Training} 
Both conditional gradient and flow-based methods approximate the ideal $\hat\eta$ via a discrete distribution. While these strategies are guaranteed to approximate to arbitrary precision the ideal solution, they cannot be directly used for sampling new points. To this end here we propose to model $\eta = \Phi_\sig \ast \nu$ as the convolution of a discrete  $\nu = \frac{1}{m}\sum_{i=1}^m\delta_{z_i}$ with a $\delta$-variance Gaussian distribution $\Phi_\sig$. We can then address the following variant to the joint problem \cref{eq:erm-joint-gan}
\eqal{\label{eq:erm-joint-perturbed}
\min_{\theta\in\Theta, \nu \in \hh}~ \sink({T_\theta}_\push (\Phi_\sig \!\ast\! \nu), \rho_n),
}
where $\nu$ is learned by means of the flow-based approaches introduced above (by sampling a new set of points from $\Phi_\sig$ at each iteration). This strategy effectively renders the estimated $\eta$ to be a mixture of $m$ Gaussian distributions, whose position on the latent space is optimized iteratively.

\cref{alg:latent-GAN} summarizes the process of jointly learning $\eta = \Phi_\sig \ast \nu$ and $T_\theta$, according to  \cref{eq:erm-joint-perturbed}. For simplicity, we consider the case where we optimize simultaneously both network parameters and support points of the latent distribution. However other options are viable, such as block coordinate descent or alternating minimization, where each term is optimized while keeping the other fixed to the previous step. The algorithm proceeds iteratively by: $i)$ sampling points from the current estimate of $\hat\eta$ in terms of the discrete $\hat\nu$ and the perturbation $\Phi_\sig$; $ii)$ computing the Sinkhorn potential $u^*$ via the {\sc SinkhornKnopp}\footnote{We used the implementation from \cite{feydy2018interpolating} available at \url{https://www.kernel-operations.io/geomloss/}.} algorithm; $iii)$ update the network parameters $\theta$ and latent $\hat\nu$ according to the gradient steps \cref{eq:generator-gradient-step} and \cref{eq:flow-gradient-step} respectively. 

\section{Experiments}\label{sec:experiments}
\vspace{-0.1em}

We tested the proposed strategy of jointly learning the latent distribution and generator on two synthetic experiments. We do so by comparing the performance of the joint GAN estimator from \cref{alg:latent-GAN} and of the standard GAN estimator, with fixed latent distribution. We report both the qualitative sampling behavior of the two methods as well as their quantitative performance in terms of {\itshape generalization gap}, namely the value $\sink(T_\push\eta, \rho)$ attained at convergence (using the Sinkhorn distance between generated and {\itshape new} real samples as a proxy). Details on the setup, data generation, networks specifications and training are reported in \cref{app:sec_experiments}. \\

{\textbf{Spiral}.} We chose the target $\rho$ to be a multimodal probability measure in $\R^2$ supported on a spiral-shaped $1$D manifold (\cref{fig:spiral-target}, color intensity proportional to higher density). Given the low-dimensionality of the target, we consider a GAN model with latent space $\Z$ in $\R$. We compare \cref{alg:latent-GAN} against a GAN trained with fixed latent distribution $\eta_0 = \mathcal{N}(0,1)$ (the univariate Gaussian measure on $\R$) by training them on a sample $\rho_n$ of $n=1000$ i.i.d. points sampled from $\rho$. \cref{fig:spiral} reports the density and a sample of the ground-truth target $\rho$ (\cref{fig:spiral-target}), our $\hat T_\push\hat\eta$ estimated via \cref{alg:latent-GAN} (\cref{fig:spiral-ours}) and $\hat T'_\push\eta_0$ trained with standard Sinkhorn GAN (\cref{fig:spiral-fixed}). We see that the generator $\hat T'$ is unable to apply enough distortion to $\eta_0$ to match $\rho$ (see our discussion in \cref{sec:motivation}). In contrast, our method recovers the target distribution with high accuracy. This is quantitatively reflected by the generalization gaps: $\sink(\hat T_\push \hat \eta, \rho) <10^{-5}$ for \cref{alg:latent-GAN} and $\sink(\hat T'_\push \eta_0,\rho)>0.1$ for the fixed latent. \cref{fig:spiral-ours-latent} reports the density of the latent $\hat\eta$ on $\R$ to show how our method captures the bi-modality of $\rho$. \\


{\textbf{Swiss Roll}.}
Similarly to the previous setting, we consider $\rho$ a multimodal distribution in $\R^3$ supported on the $2$D swiss roll manifold. Latent space was set as $\Z\subset\R^2$. \cref{fig:swissroll} (Left to right) shows samples from the ground truth, our joint $\hat T_\push\hat\eta$ and the standard GAN $\hat T_\push\eta_0$ with $\eta_0 = \mathcal{N}(0,I)$. Also in this case, the latter generator was not able to fully recover the geometry of $\rho$, as indicated by the different generalization gaps $\sink(\hat T_\push \hat \eta, \rho)<0.5$ and $\sink(\hat  T'_\push \eta_0,\rho)>1.8$.\\


    

\begin{figure*}[t!]
    \centering
    \hspace{-2em}
    \begin{subfigure}[t]{0.3\textwidth}
        \centering
        \includegraphics[scale=0.17]{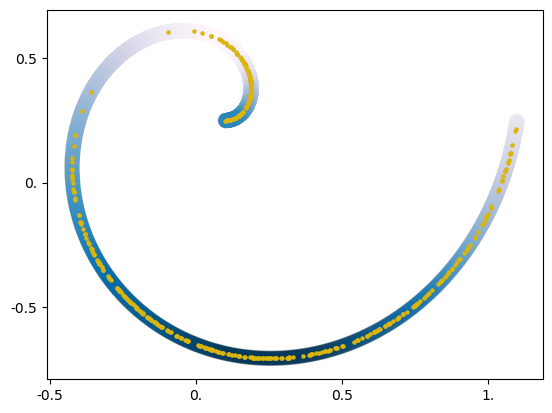}
        \caption{Target density\label{fig:spiral-target}}
    \end{subfigure}%
    \hspace{-1.45cm}
    \begin{subfigure}[t]{0.3\textwidth}
        \centering
        \includegraphics[scale=0.17]{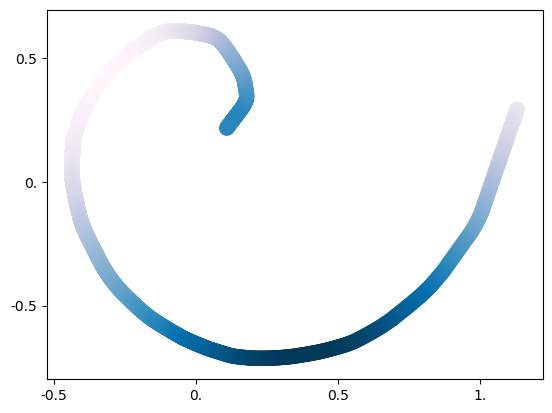}
        \caption{\cref{alg:latent-GAN}\label{fig:spiral-ours}}
    \end{subfigure}%
     \hspace{-1.45cm}
    \begin{subfigure}[t]{0.3\textwidth}
        \centering
        \includegraphics[scale=0.17]{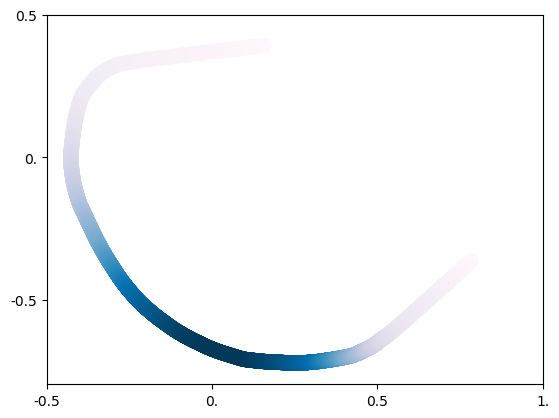}
        \caption{Fixed latent\label{fig:spiral-fixed}}
    \end{subfigure}%
    \hspace{-0.5 cm}
        \begin{subfigure}[t]{0.3\textwidth}
        \centering
        \includegraphics[scale=0.17]{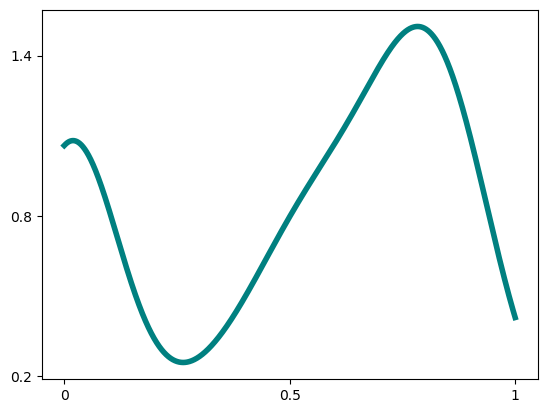}
        \caption{Learned $\hat\eta$ by \cref{alg:latent-GAN}\label{fig:spiral-ours-latent}
        }
    \end{subfigure}%
    \vspace{-0.25em}
    \caption{\label{fig:spiral}
    Multimodal distribution supported on the spiral: true and estimated densities.}
\end{figure*}


\begin{figure*}[t!]
\vspace{-1.3em}
    \centering
    \hspace{-2em}
    \begin{subfigure}[t]{0.33\textwidth}
        \centering        \includegraphics[scale=0.32]{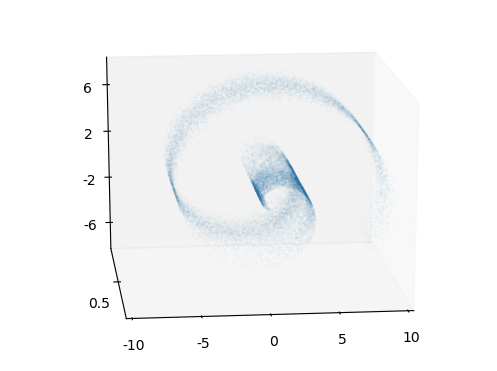}
        \caption{Samples from the target $\rho$}
    \end{subfigure}%
    \begin{subfigure}[t]{0.33\textwidth}
        \centering
        \includegraphics[scale=0.32]{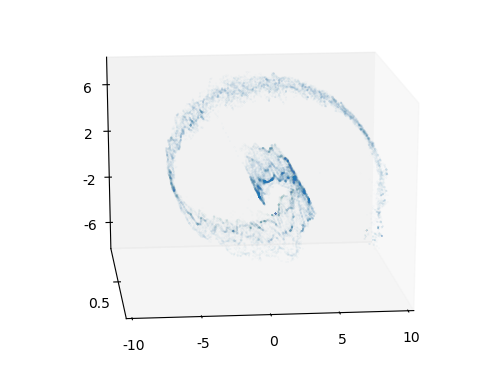}
        \caption{Samples from $\hat T_\push\hat\eta$ from \cref{alg:latent-GAN}}
    \end{subfigure}%
    ~
    \begin{subfigure}[t]{0.33\textwidth}
        \centering
        \includegraphics[scale=0.32]{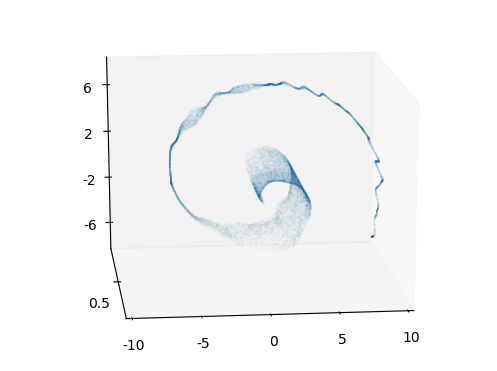}
        \caption{Samples from GAN with fixed $\eta_0$}
    \end{subfigure}%
    \vspace{-0.25em}
    \caption{\label{fig:swissroll}Multimodal distribution supported on the swiss roll: true and estimated distributions.}
    \vspace{-1em}
\end{figure*}

\begin{figure*}[t!]
    \begin{subfigure}[t]{0.38\textwidth}
        \includegraphics[height=0.36\textwidth]{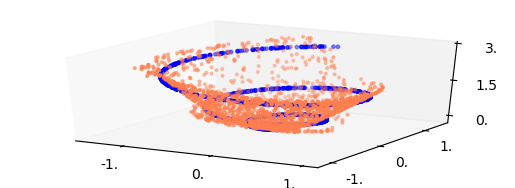}
        \caption{Samples standard GAN}
        \label{fig:samplegauss1}
    \end{subfigure}%
    \begin{subfigure}[t]{0.38\textwidth}
        \includegraphics[height=0.36\textwidth]{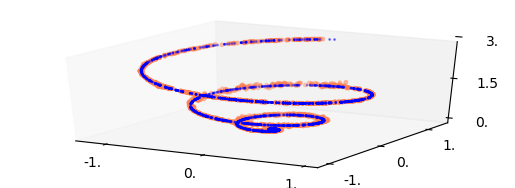}
        \caption{Samples \cref{alg:latent-GAN}}\label{fig:samples_our1}
    \end{subfigure}
    \begin{subfigure}[t]{0.22\textwidth}
    \centering
        \includegraphics[height=0.5\textwidth]{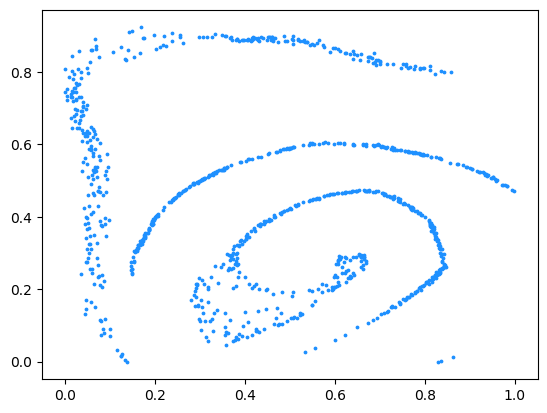}
        \caption{Samples from $\hat\eta$ by \cref{alg:latent-GAN}}\label{fig:latent1}
    \end{subfigure}%

           \begin{subfigure}[t]{0.38\textwidth}
        \includegraphics[height=0.36\textwidth]{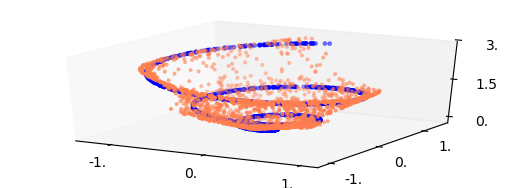}
        \caption{Samples standard GAN}
        \label{fig:samplegauss2}
    \end{subfigure}%
    \begin{subfigure}[t]{0.38\textwidth}
        \includegraphics[height=0.36\textwidth]{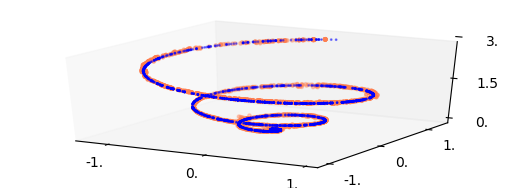}
        \caption{Samples \cref{alg:latent-GAN}}\label{fig:samples_our2}
    \end{subfigure}
    \begin{subfigure}[t]{0.22\textwidth}
    \centering
        \includegraphics[height=0.5\textwidth]{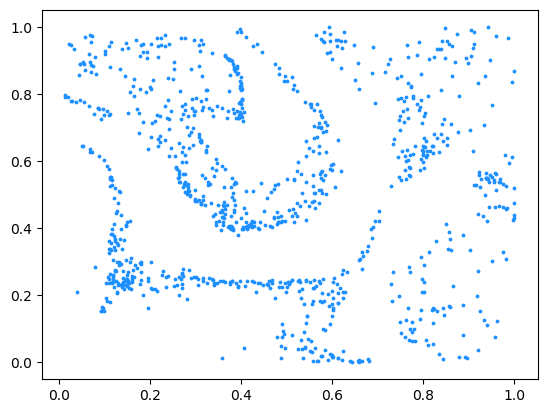}
        \caption{Samples from $\hat\eta$ by \cref{alg:latent-GAN}}\label{fig:latent2}
    \end{subfigure}%
           \caption{results for GAN training with $\T$ space of generators of increasing complexity. (Top row) $\T$ space of {\itshape $2$-layers generators}, (Bottom row) $\T$ space of {\itshape $2$-layers generators}. (First two columns) Samples from the target distribution $\rho$ (blue) and generated samples (orange) for respectively the standard GANs with fixed latent Gaussian distribution (left column) and $\hat T_\push\hat\eta$ learned via \cref{alg:latent-GAN} (central column). (Right column) samples from the latent distribution $\hat\eta$ learned via \cref{alg:latent-GAN}.}\label{fig:helix1}
\end{figure*}

\textbf{From $2$D to $3$D: matching of a 1-dimensional helix.} We considered the task of matching a probability measure $\rho\in\P(\R^3)$ supported on a $1$-dimensional helix-shaped manifold. While the target distribution could be modeled in terms of a latent distribution on the real line and a suitable pushforward, here we consider a model where $\eta\in\P(\R^2)$ is a probability in $2D$ and $T:\R^2\to\R^3$. The goal of this experiment is to qualitatively assess the impact of learning the latent distribution. Analogously to the previous experiments, we compare our algorithm against the standard GAN approach, with latent distribution fixed and equal to the Gaussian measure $\eta_0 = \mathcal{N}([0.0,0.0],I)$. We consider two options for the family $\T$ of candidate generators from $\R^2$ to $\R^3$ with increasing complexity. We aim to show that the joint GAN estimator can efficiently learn the target distribution while the standard GAN algorithm requires a significantly larger space of generators. We show results corresponding to two architectures for learning the generator.

\textit{$2$-layers generator }: we considered $\T$ the space of neural networks from $\R^2$ to $\R^3$ with $2$ hidden layers of dimensions $128,128$ and respectively ReLu and Tanh activation functions. \cref{fig:helix1} (top row) reports the results for GAN training in this setting for the standard GAN (\cref{fig:samplegauss1}) and the proposed estimator (\cref{fig:samples_our1}). When keeping the latent distribution fixed, the generator is not able to match the target. In contrast, by applying \cref{alg:latent-GAN} to learn the latent distribution, part of the complexity of modeling $\rho$ is offloaded to $\eta$ and therefore the final estimator is able to approximately match the target. This is visually reported in \cref{fig:latent1}, which shows a sample from the learned latent distribution $\hat\eta$. It can be noticed that such distribution is significantly different from the Gaussian measure used for standard GAN training. As a result, the generalization gaps (see section \cref{sec:experiments}) are respectively $0.0003$ for \cref{alg:latent-GAN} and $0.0311$ when keeping the latent distribution fixed.

\textit{$4$-layers generator}: we considered $\T$ the space of neural networks from $\R^2$ to $\R^3$ with 4 hidden layers with dimensions $128,512,512, 128$ and ReLu activation functions for layers 1,3 and Tanh for layers 2,4. \cref{fig:helix1} (bottom row) reports the results of GAN training for the standard estimator with fixed latent distribution $\eta_0$ (\cref{fig:samplegauss2}) and the estimator from \cref{alg:latent-GAN} (\cref{fig:samples_our2}). We note that the standard GAN method is still not able to correctly match the target distribution but is incurring in a smaller error. The two methods achieve generalization gaps respectively $0.0003$ for \cref{alg:latent-GAN} and $0.0209$ when keeping the latent distribution fixed. We note that estimator from \cref{alg:latent-GAN} is achieving similar qualitative and quantitative performance to the one obtained using a simpler space of generators $\T$. However we observe a key difference in \cref{fig:latent2}, which reports a sample from the estimated latent distribution $\hat\eta$. Since the generator class is larger, it allows to apply more distortion to latent distributions. As a consequence, the latent distribution can have a less sharp support shape and still realize a good matching.

\begin{figure}[h!]
    \centering
    \includegraphics[height=0.15\textwidth]{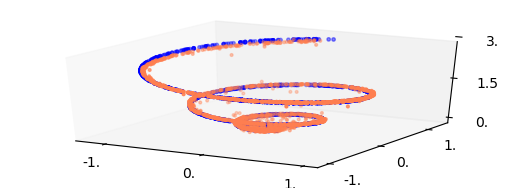}
    \caption{result with standard GAN with $\T$ a space of generators with 6 layers with 512 dimensions.}
    \label{fig:goodgaussian}
\end{figure}

{\itshape Deeper models.} In our experiments, the standard GAN was able to match the target distribution only when $\T$ allowed for deeper networks (\cref{fig:goodgaussian} shows this for $\T$ a space of networks up to 6 layers with 512 neurons each and ReLU activation functions). This is in line with the intuition in \cref{sec:motivation} and the theoretical analysis in this paper: while choosing a fixed latent distribution still allows to recover the target probability, doing so might impose tight requirements on the complexity of the space of generators that needs to be considered.

\section{Conclusions}\label{sec:conclusion}

In this work we studied the role of pushforward maps (generators) and latent distributions within the  Sinkhorn GAN paradigm, from a theoretical perspective. We characterized the learning rates of a GAN estimator in terms of the complexity (i.e. smoothness) of the class of generators. 
We introduced a novel GAN estimator that jointly learns both latent distribution generator, studied its generalization properties 
and proposed a practical algorithm to train it. Future work will focus on two main directions. First, we plan to investigate more empirically oriented questions related to our framework. In particular, we plan to evaluate our approach on large scale real data, to test the limits and benefits of the proposed strategy in practice. 
Secondly, on a more theoretical direction, we plan to extend our analysis to a larger family of adversarial divergences and generator networks.  





{
\bibliographystyle{plain}
\bibliography{biblio}
}

\newpage

\appendix

\crefname{assumption}{Assumption}{Assumptions}
\crefname{equation}{}{}
\Crefname{equation}{Eq.}{Eqs.}
\crefname{figure}{Figure}{Figures}
\crefname{table}{Table}{Tables}
\crefname{section}{Section}{Sections}
\crefname{theorem}{Theorem}{Theorems}
\crefname{proposition}{Proposition}{Propositions}
\crefname{fact}{Fact}{Facts}
\crefname{lemma}{Lemma}{Lemmas}
\crefname{corollary}{Corollary}{Corollaries}
\crefname{example}{Example}{Examples}
\crefname{remark}{Remark}{Remarks}
\crefname{algorithm}{Alg.}{Alg.}
\crefname{enumi}{}{}

\crefname{appendix}{Appendix}{Appendices}

\numberwithin{equation}{section}
\numberwithin{lemma}{section}
\numberwithin{proposition}{section}
\numberwithin{theorem}{section}
\numberwithin{corollary}{section}
\numberwithin{definition}{section}
\numberwithin{algorithm}{section}
\numberwithin{fact}{section}
\numberwithin{remark}{section}

\section*{\Huge\textbf{Supplementary Material}}

The supplementary material is organized as follows:
\begin{itemize}
    \item \cref{app:adversarial-losses} recalls how common loss functions used for GAN training can be formulated as adversarial divergences.
    \item \cref{app:pushforwards} provides details on the examples made in \cref{sec:motivation}. 
    \item \cref{app:technical-results} derives technical results that will be used to prove the main results of this work.
    \item \cref{app:rates} proves the learning rates of the proposed joint GAN estimator proposed in \cref{sec:theory}.
    \item \cref{app:gradient} proves the formula of the gradient of Sinkhorn divergence with respect to network parameters.
    \item \cref{app:sec_experiments} describes the experimental setup and network specification for the empirical evaluation reported in \cref{sec:experiments}. 
\end{itemize}

\section{Adversarial Divergences}\label{app:adversarial-losses}

The notion of {\itshape adversarial divergence} was originally introduced in \cite{liu2017approximation}. 
For completeness, we review how most loss functions used for probability matching within the GANs literature can be formulated as an adversarial divergence in \cref{eq:adversarial-losses}. We recall here the definition in \cref{eq:adversarial-losses} of adversarial divergence over a space $\F$ of functions $F:\X\times\X\to\R$, bewteen two distributions $\mu,\rho\in\P(\X)$ as
\eqal{\label{eq:adversarial-loss-app}
    \msf{d}_\F(\mu,\rho) ~=~ \sup_{F\in\F}~ \int F(x',x) ~ d\mu(x')d\rho(x).
}
Depending on the choice of $\F$ we recover different choices of adversarial divergences as discussed below.

\paragraph{Integral Probability Metrics} One of the most notable examples of adversarial divergences are integral probability metrics (IPM). The  IPM over a space $\F_{0}$ of functions $f:\X\to\R$, between two distribution $\mu,\rho$ is defined as
\eqal{\label{eq:IPM-definition}
    \msf{IPM}_{\F_0}(\mu,\rho) ~ = ~ \sup_{f\in\F_0} ~ \abs{ \int f(x)~d\mu(x) - \int f(x)d\rho(x)}.
}
Clearly, if $\F_0$ is closed under the operation of subtraction, the IPM is an adversarial divergence with $\F$ in \cref{eq:adversarial-loss-app} defined as 
\eqals{
    \F ~=~ \{~F ~|~ F: (x',x) \mapsto f(x') - f(x), \quad f\in\F_0~ \}.
}
Examples include:
\begin{itemize}
\item {\itshape Maximum Mean Discrepancy (MMD).} Here $\F_0$ is the ball of radius $1$ in a Reproducing Kernel Hilbert Space \cite{dziugaite2015training}. 
\item {\itshape $\mu$-Sobolev IPM.} Proposed in \cite{mroueh2017sobolev}. Given a reference measure $\mu\in\P(\X)$, Sobolev-IPM consider $\F_0$ to be
\eqals{
    \F_0~=~\{~f~|~ \mathbb{E}_\mu~\nor{\nabla f(\cdot)}^2\leq 1, \quad f\in W^{1,2}(\X,\mu)~\}
}
with $W^{1,2}(\X,\mu)$ denoting the space of $\mu$-square integrable functions on $\X$ with $\mu$-square integrable first weak derivative. Similarly, {\itshape $\mu$-Fisher-IPM} \cite{mroueh2017fisher} are IPM defined over $\F_0$ the ball of radius $1$ in $L^{2}(\X,\mu)$. 
\item {\itshape $1$-Wasserstein.} The $1$-Wasserstein distance is defined as in \cref{eq:primal_pb} with cost function the Euclidean norm (rather than the squared Euclidean norm considered in this work) and $\eps=0$. The dual fomrmulation is in an IPM loss, with $\F_0$ the ball of radius $1$ in the space of Lipschitz functions (namely all functions with Lipschitz constant less or equal than $1$).
\end{itemize}

\paragraph{$f$-Divergences}  $f$-Divergences are discrepancy measures between two distributions of the form 
\eqal{\label{eq:f-divergence}
    \msf{d}_f(\mu,\rho) ~=~ \int f\Bigg(\frac{d\mu}{d\rho}(x)\Bigg)~d\rho(x)
}
where $d\mu/d\rho$ denotes the Radon-Nykodin derivative of $\mu$ with respect to $\rho$ and $f:\X\to\R\cup\{\infty\}$ is suitable a convex function. By leveraging the notion of Fenchel dual $f^*(y) = \sup_{x} \scal{x}{y} - f(x)$ and the fact that $f^{**} = f$ for convex functios, in \cite{nowozin2016f,liu2017approximation} it was observed that $f$-divergences of the form in \cref{eq:f-divergence} can be written as adversarial divergences with 
\eqals{
    \F_f ~=~ \{ ~ F ~|~F:(x',x)\mapsto g(x') - f^*(g(x)), \quad g\in C_b(\X), ~ {\rm dom}(g)\subset{\rm dom}(f) ~\}
}
with $C_b(\X)$ the set of continuous bounded functions on $\X$.

Examples include (see \cite{nowozin2016f} for more):
\begin{itemize}
    \item {\itshape Kullback-Leibler.} Here $f(x) = x\log x$
    \item {\itshape Jensen-Shannon} divergence. Considered in the orginal GAN paper (up to a constant) \cite{goodfellow2014generative}, here $f(x) = x\log x - (x+1)\log(x+1)$.
    \item {\itshape Squared Hellinger.} here $f(x) = (\sqrt{x} - 1)^2$
\end{itemize}

\paragraph{Entropic Optimal Transport} We conclude this section by reviewing how entropic optimal transport functions can be formulated as adversarial divergences. As observed in \cref{sec:implementation}, the dual probem associated to the definition of $\oteps$ corresponds to \cref{eq:dual_pb}, namely
\eqals{\label{eq:dual_pb-app}
\sup_{u,v\in \cont(\X)}~\int u(x)~d\mu(x) ~+~ \int v(y)~d\rho(y) ~-~\eps \int e^{\frac{u(x) + v(y) - \|x-y\|^2}{\eps}}~d\mu(x)d\rho(y).
}
This problem can be written in the form of adversarial divergence in \cref{eq:adversarial-losses} by taking
\eqals{
    \F ~=~ \{~ F ~ |~ F:(x,y)\mapsto u(x) + v(y) - \eps e^{\frac{u(x)+v(y)-\nor{x-y}^2}{\eps}}, \quad u,v\in C(\X) ~\}. 
}

\section{The Complexity of Pushforward Maps/Generators}\label{app:pushforwards}

We provide here some examples and observations on pushforward maps. Assume that $\Z=\X= \mathbb{R}^n$ and consider $\mu$ to be a Gaussian measure. Consider $\rho\in\prob(\X)$ a target distribution. Since $\mu$ is absolutely continuous with respect to Lebesgue measure $\mathcal{L}^d$, there always exists a measurable map $T$ such that $T_\push \mu = \rho$ (see for example \cite[Thm. 1.33 ]{ambrosio2013user}. However, existence of a pushforward map $T$ does not imply anything on its regularity, unless further assumptions hold on the measures $\mu$ and $\rho$. For example, consider the case where the support of $\rho$ is disconnected: in this case, any pushforward will exhibit discontinuities (since the image through a continuous map of a connected set is always connected). 
In the following we provide a few examples of pushforward map between distributions.

\textbf{1. Book-shifting.} Let $\mu = \chi_{[0,1]}$ and $\rho =  \chi_{[1,2]}$. The maps $T_1, T_2:[0,1]\rightarrow [1,2]$ defined by $T_1(x) = 2-x$,  $T_2(x) = x+1$ are pushforwards from $\mu$ to $\nu$, i.e. ${T_i}_\push \mu = \rho$ for $i=1,2$. \\
\textbf{2. \textbf{Gaussians.}} Let $\mu = \mathcal{N}(m_\mu, \Sigma_\mu)$ and $\rho = \mathcal{N}(m_\rho, \Sigma_\rho)$ be two Gaussians on $\R^d$. The following map 
\eqal{
T: x\rightarrow m_\rho + A(x - m_\mu), \qquad A = \Sigma_\mu^{-\frac{1}{2}} \big(\Sigma_\mu^{\frac{1}{2}}\Sigma_\rho \Sigma_\mu^{\frac{1}{2}}\big)^{\frac{1}{2}}  \Sigma_\mu^{-\frac{1}{2}} 
}
is such that $T_\push \mu = \rho$.\\
Intuitively, when considering measures with the same `structure' (i.e. in the examples above, both uniform or both Gaussian), the `distortion' needed to match the two distribution is mild and this results in very regular pushforward maps, namely linear in the case above. Viceversa, given a measure $\mu$, pushforward via linear maps can target measures with the same structure as $\mu$ only. \\
\paragraph{Example: class of functions and correspondent pushforward measures}
Consider  $\mathcal{T}$ the class of affine  maps on the real line, i.e. 
\begin{equation}
    \mathcal{T}:= \{T^{m,q}: \R \rightarrow \R: \,\, T^{m,q}(x) = mx + q, \,\,\, m,q\in\R, \,\,\ m\neq 0.\}
\end{equation} Let $\rho = r\mathcal{L}^1$ with $r = \mathbbm{1}_{[0,1]}$. Then, all the pushforward measures $T_{\push}\rho$ with $T \in \mathcal{T}$ are of the form 
\begin{equation}
    T^{m,q}_{\push}\rho = \frac{\mathbbm{1}_{[0,1]} (\frac{x - q}{m})}{m} \mathcal{L}^1.
\end{equation}
Hence, all the measures that can be written as pushforward of $\rho$ via maps in $\mathcal{T}$ are uniform measures on intervals in $\R$, namely  $T^{m,q}_{\push}\rho = \frac{1}{m}\mathbbm{1}_{[q, q+m]}$. 

When $\mu$ and the target measure $\rho$ are very different, pushforward maps will be more complex:\\ 
\textbf{3. From uniform to troncated Gaussian} Let $\mu = f \mathbbm{1}_{[-1,1]}$ and $\rho = g  \mathbbm{1}_{[-1.1]} $ with $f(x) =  \sqrt{2/(\pi c)}e^{-\frac{x^2}{2}}$, $c = \textnormal{erf}(1/\sqrt{2})$ and $g(x) = \frac{1}{2}$. Using \cref{eq:pushforward_ODE}, one can show that the map $T$ such that $T_\push \mu = \rho$ is of the form
\begin{equation}\label{eq:push_gauss_uniform}
    T(x) = \sqrt{2}\textnormal{erf}^{-1}\big(  c x \big).
    \end{equation}
Since $\rho$ is not of the form \cref{eq:description_push}, there exists not $T\in \T$ with $\T$ defined in \cref{eq:class_of_linear_push} such that $T_{\push}\rho = \rho$. In order to be able to map $\mu$ into $\rho$, one has to consider a class of function $\T$ large enough to include the function $\textnormal{erf}^{-1}$.\\

From the examples above, it is clear that when given a fixed $\mu$ and a target $\rho$, the regularity (in terms of upper bounds of derivatives ) varies significantly, depending on the properties of $\mu$ and $\rho$. In particular, e measure $\mu$ with a specific structure, may require a pushforward $T$ to have big derivatives, in order to satisfy $T\mu = \rho$. 

\textbf{Example: pushforward from one Gaussian to a mixture of three Gaussians in 1d.}
We computed the pushforward from a Gaussian distribution to a mixture of the Gaussian distributions in 1D with variance $0.05$ and $0.01$ (see \cref{fig:distrib1} and \cref{fig:distrib2}). We computed the pushforward map using a neural network with 5 layers, alternating ReLu and tanh as activation functions. \cref{fig:push1} and \cref{fig:push2} display the graphs of the computed pushforward maps. One can notice that the maps alternate regions with steep derivatives to regions with flat derivatives, needed to distort the mass of the gaussian in order to match the multimodal shape of the target. The steepness is significantly higher in the case where the target has smaller variance (i.e. the three Gaussians are more concentrated leading to areas with a very small amount of mass). 

\textbf{Experimental setup for \cref{fig:example-complexity-push}.}
We used neural network with 1 layer (linear network), 2 layers (with ReLu activation), 5 (up to 256 dimensions) and 7 layers (up to 512 dimensions), alternating ReLu to Tanh activation functions).

\begin{figure*}\label{fig:push_gauss_mixture_gauss}
    \centering
    \begin{subfigure}[t]{0.25\textwidth}
        \centering
        \includegraphics[height=0.8\textwidth]{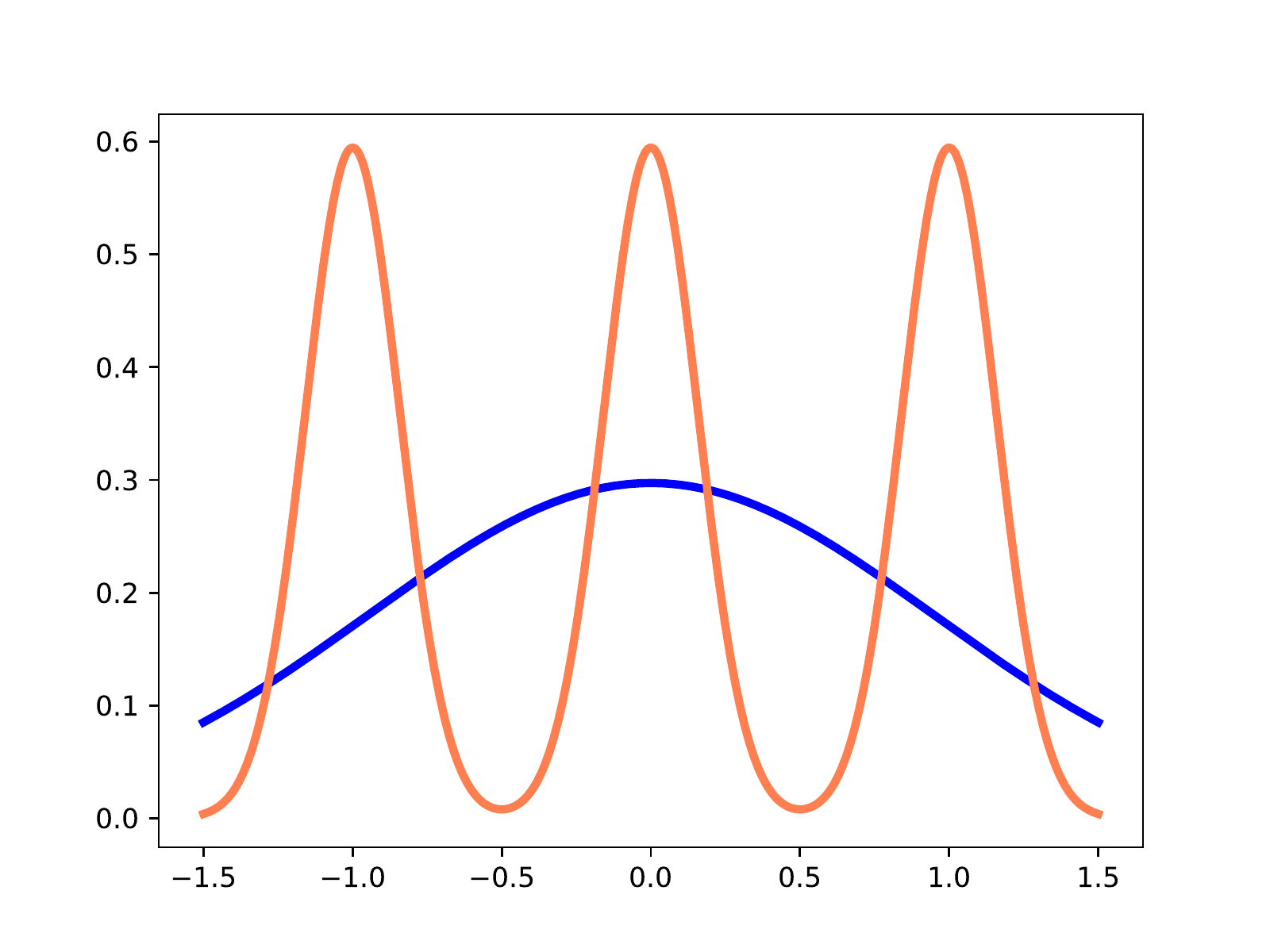}
        \caption{Distributions}
        \label{fig:distrib1}
    \end{subfigure}%
    \begin{subfigure}[t]{0.25\textwidth}
        \centering
        \includegraphics[height=0.8\textwidth]{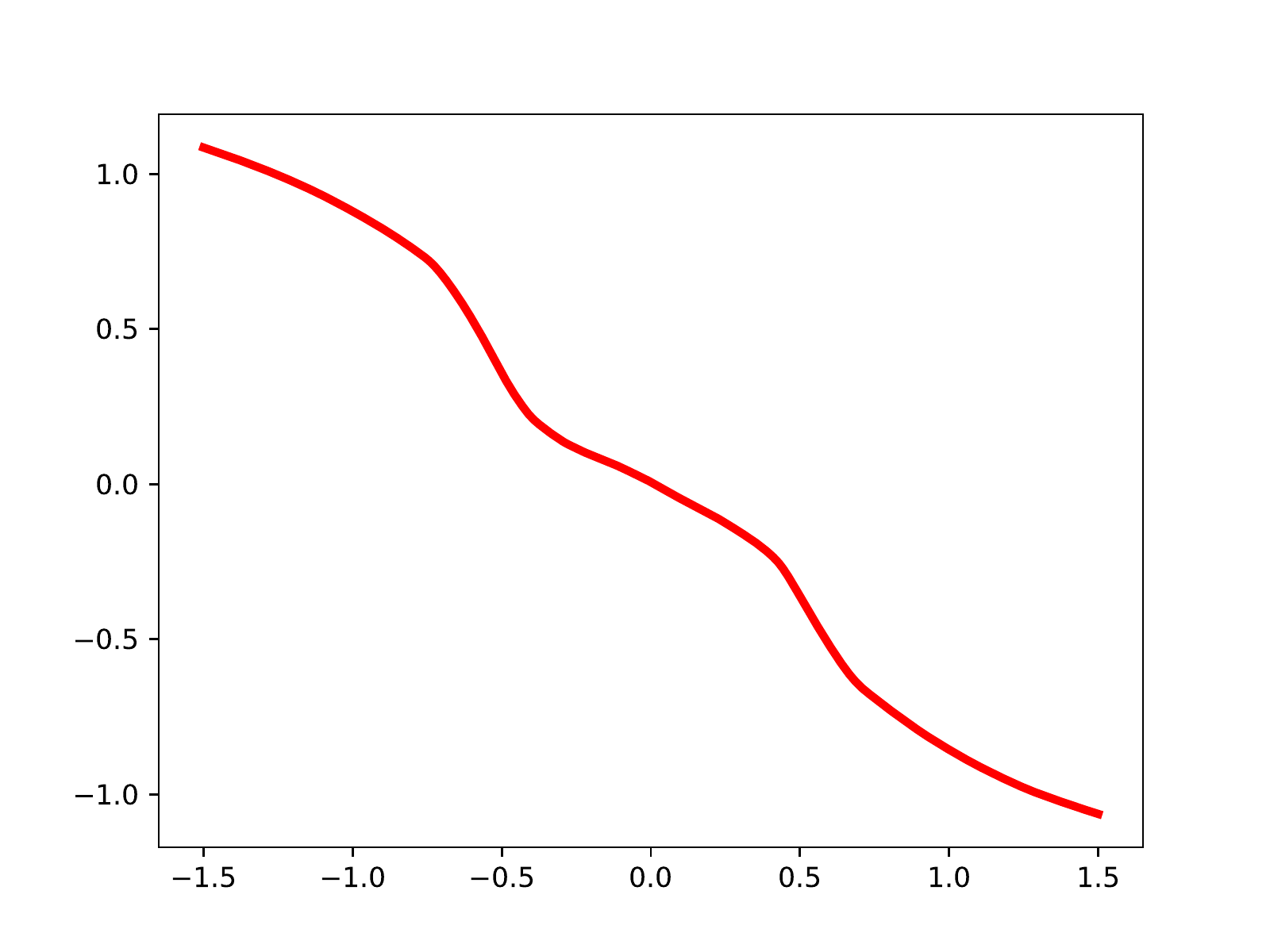}
        \caption{Pushforward map}\label{fig:push1}
    \end{subfigure}\\
    \begin{subfigure}[t]{0.25\textwidth}
        \centering
        \includegraphics[height=0.8\textwidth]{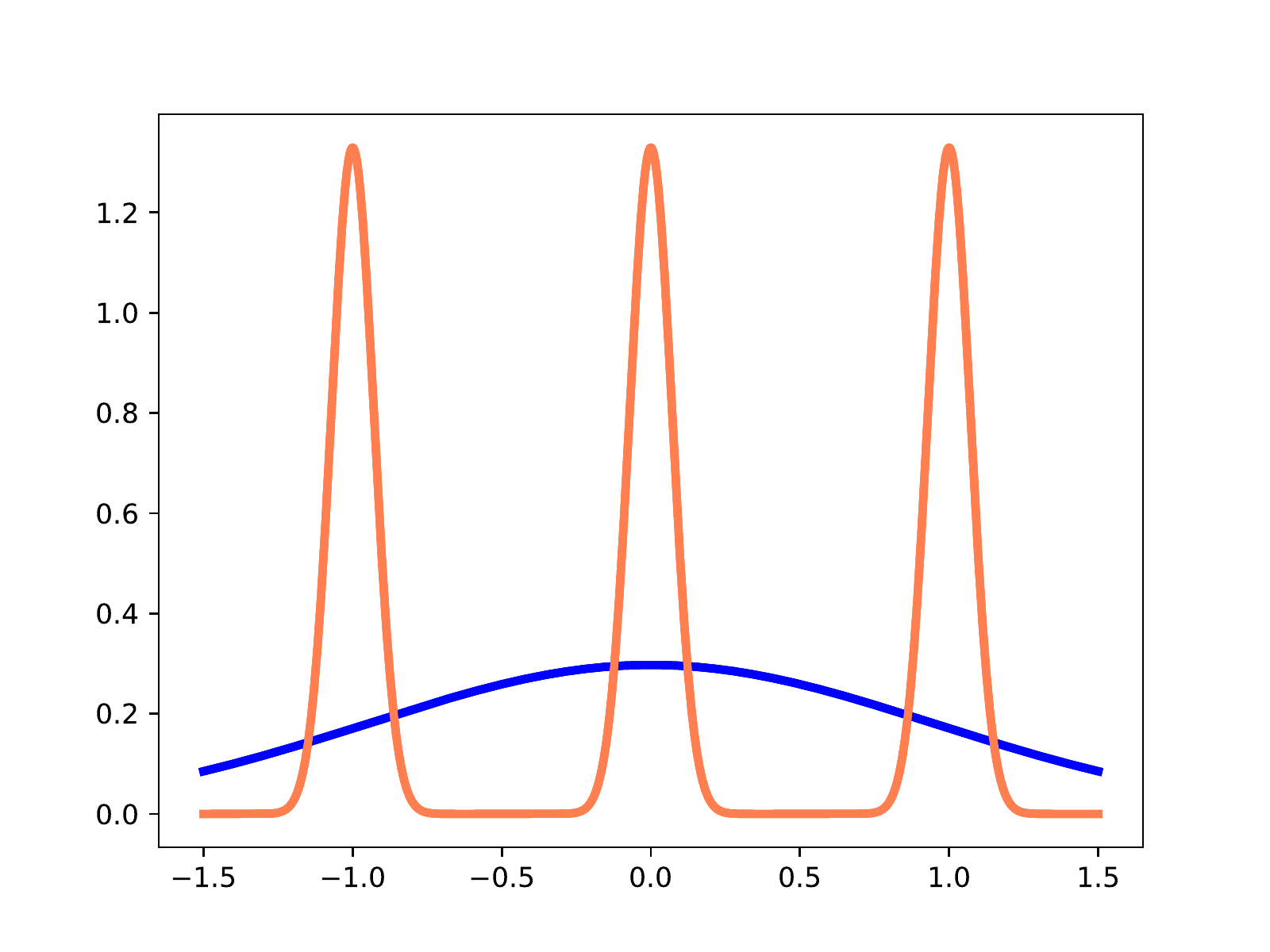}
        \caption{Distributions}\label{fig:distrib2}
    \end{subfigure}%
    \begin{subfigure}[t]{0.25\textwidth}
        \centering
        \includegraphics[height=0.8\textwidth]{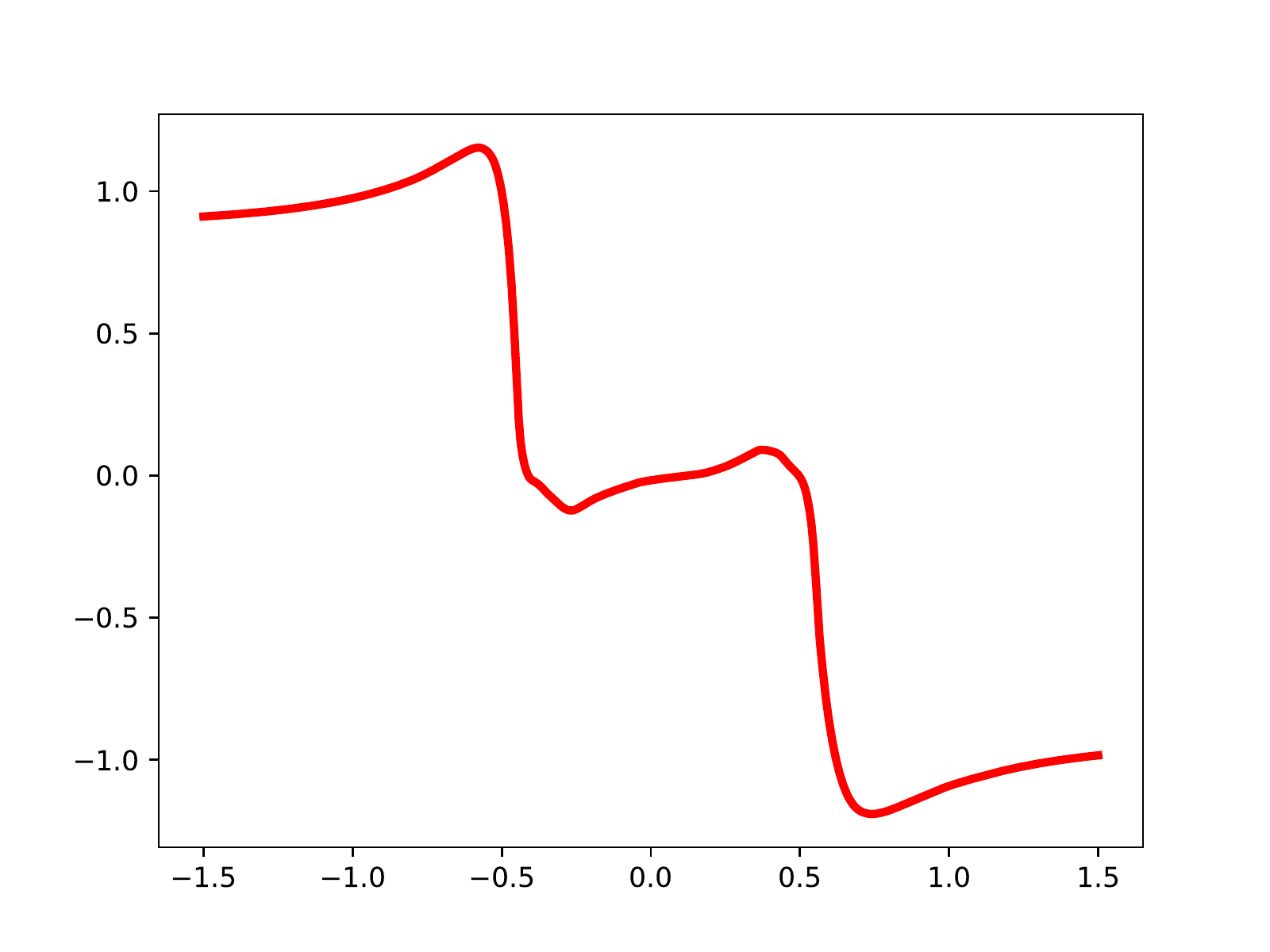}
        \caption{Pushforward map}\label{fig:push2}
    \end{subfigure}
    \caption{Pushforward functions that map $1$ Gaussian to a mixture of three Gaussians. Distributions displayed on the left. Graphs of pushforward maps on the right}.
\end{figure*}

\section{Technical results}
\label{app:technical-results}
We introduce some notation first; in the following, the map $\mathsf{c}:\X \times \X\rightarrow \R$ denotes the Euclidean squared norm, i.e. $\cost(x,y) = \frac{1}{2}\norm{x-y}^2$. Given two maps $V,T:\Z \rightarrow \X$, we used the symbol $\cost$ decorated with subscripts $T$ and $V$ to denote the following: $\cost_{\subt,\subv} :\Z \times \Z \rightarrow \R$ is the function defined by
\eqals{
\cost_{\subt,\subv}(z,w) = \cost(T(z), V(w)), \qquad \textnormal{for all}\,\, z,w\in\Z.
}
Since we need to highlight the dependence on the cost, we will incorporate it in the notation used for Sinkhorn divergence, namely $\mathsf{S}_{\eps, c}$ with $c$ denoting the cost function used. We first recall a straightforward result which links Sinkhorn divergence of pushforward measures with Sinkhorn divergence with modified cost function.  
\begin{lemma}\label{lemma:equivalence_push_change_cost}
 Let $\mu,\nu\in\prob(\Z)$ and $T,V: \Z \rightarrow \X$ be continuous maps. Let $\cost:\X\times \X \rightarrow \R$  and $\mathsf{c}_{T,V}:\Z\times \Z \rightarrow \R$  be as defined above. Then, 
\eqals{
\mathsf{S}_{\eps, \cost} (T_\push \mu, V_\push \nu) = \mathsf{S}_{\eps, \cost_{\subt,\subv}} (\mu, \nu). 
}
\end{lemma}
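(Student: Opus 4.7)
The plan is to reduce the identity for $\mathsf{S}_{\eps,\cost}$ to the underlying entropic-OT identity $\oteps^{\cost}(T_\push\mu, V_\push\nu) = \oteps^{\cost_{\subt,\subv}}(\mu,\nu)$, and then apply this to each of the three $\oteps$ terms in the definition of the Sinkhorn divergence (the two self-terms being obtained by specializing to $V=T,\nu=\mu$ and to $T=V,\mu=\nu$ respectively). Both directions of the $\oteps$ identity proceed through the primal formulation in \cref{eq:primal_pb}, with candidate transport plans transferred between $\Pi(\mu,\nu)$ and $\Pi(T_\push\mu, V_\push\nu)$ via the product map $T\times V : \Z\times\Z \to \X\times\X$ and the transfer lemma \cref{eq:transfer_lemma}.

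For the inequality $\oteps^{\cost}(T_\push\mu, V_\push\nu) \leq \oteps^{\cost_{\subt,\subv}}(\mu,\nu)$, I would take any $\tilde\pi\in\Pi(\mu,\nu)$, push it forward to $\pi := (T\times V)_\push\tilde\pi$, and verify its marginals to obtain $\pi\in\Pi(T_\push\mu, V_\push\nu)$. The transfer lemma gives $\int \cost\,d\pi = \int \cost_{\subt,\subv}\,d\tilde\pi$, while the data-processing inequality for $\kl$ under $T\times V$ gives $\kl(\pi \,|\, T_\push\mu \otimes V_\push\nu) \leq \kl(\tilde\pi\,|\,\mu\otimes\nu)$. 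Infimizing over $\tilde\pi$ yields the inequality. For the reverse direction I would build a cost- and entropy-preserving \emph{lift} via disintegration: write $\mu = \int \mu_x\,d(T_\push\mu)(x)$ with $\mu_x$ supported on $T^{-1}(x)$ and, analogously, $\nu = \int \nu_y\,d(V_\push\nu)(y)$; given $\pi\in\Pi(T_\push\mu, V_\push\nu)$, set
\eqals{
\tilde\pi ~:=~ \int_{\X\times\X} \mu_x\otimes\nu_y\,d\pi(x,y) ~\in~ \Pi(\mu,\nu).
}
A short marginal computation confirms $(T\times V)_\push\tilde\pi = \pi$ and that the Radon--Nikodym density satisfies $\tfrac{d\tilde\pi}{d(\mu\otimes\nu)}(z,w) = \tfrac{d\pi}{d(T_\push\mu\otimes V_\push\nu)}(T(z),V(w))$; a second application of the transfer lemma then yields both $\int \cost_{\subt,\subv}\,d\tilde\pi = \int \cost\,d\pi$ and the \emph{exact} equality $\kl(\tilde\pi\,|\,\mu\otimes\nu) = \kl(\pi\,|\,T_\push\mu\otimes V_\push\nu)$, so infimizing over $\pi$ gives the matching inequality.

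The main technical point is the lift, and in particular the exact preservation of $\kl$ (rather than the mere one-sided bound supplied by data processing); this is precisely what disintegration gains us, by recovering the information lost when collapsing $\Z$ onto $\X$ through $T$ and $V$. The disintegration we need is standard under the paper's assumptions that $\Z\subset\R^k$ and that $T,V$ are continuous (hence Borel-measurable), so no additional regularity is required to carry out the argument; stringing together the three resulting $\oteps$ identities then immediately gives the claim for $\mathsf{S}_{\eps,\cost}$.
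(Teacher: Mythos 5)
Your proof is correct, but it takes a genuinely different route from the paper's. The paper works entirely in the \emph{dual} of \cref{eq:primal_pb}: it rewrites the dual functional $F(T_\push\mu,V_\push\nu,u,v,\cost)$ as $F(\mu,\nu,u\circ T,v\circ V,\cost_{\subt,\subv})$ by the transfer lemma, so that $\oteps^{\cost}(T_\push\mu,V_\push\nu)$ is a supremum over the restricted potential class $(\cont(\X)\circ T)\times(\cont(\X)\circ V)$; it then invokes the Schr\"odinger softmin fixed-point characterization of optimal potentials, $\tilde u(z)=-\log\int e^{\tilde v(w)-\cost_{\subt,\subv}(z,w)}\,d\nu(w)$, to conclude that the unrestricted optimum for $\oteps^{\cost_{\subt,\subv}}(\mu,\nu)$ already lies in that restricted class, giving equality. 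You instead argue in the \emph{primal}, transferring transport plans in both directions: $(T\times V)_\push$ moves $\Pi(\mu,\nu)$ into $\Pi(T_\push\mu,V_\push\nu)$ with equal transport cost and, by data processing, non-increasing $\kl$; and your disintegration lift $\tilde\pi=\int\mu_x\otimes\nu_y\,d\pi(x,y)$ (with $\mu_x$ the conditional of $\mu$ given $T=x$, etc.) goes back the other way with the density identity $\tfrac{d\tilde\pi}{d(\mu\otimes\nu)}(z,w)=\tfrac{d\pi}{d(T_\push\mu\otimes V_\push\nu)}(T(z),V(w))$, which upgrades the one-sided data-processing bound to an exact $\kl$ identity. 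The paper's route is shorter because it leans on a structural fact about optimal Sinkhorn potentials (existence and softmin form); yours is more hands-on and self-contained, avoiding any appeal to dual attainment or potential regularity, at the price of having to set up the disintegration and verify the Radon--Nikodym computation. Both correctly reduce the Sinkhorn divergence identity to the three constituent $\oteps$ identities, with the self-correlation terms handled by the diagonal specializations $\cost_{\subt,\subt}$ and $\cost_{\subv,\subv}$.
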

\begin{proof}
Similarly to $\mathsf{S}_{\eps, \cost_{\subt,\subv}}$, let $\mathsf{OT}_{\eps,\cost}$ be the biased entropic OT problem with cost function $\cost$. 
Let $F(\mu, \nu, u, v, \cost)$ be defined as 
\begin{align*}
F(\alpha, \beta, u, v, \cost) = \int_{\X} u(x)~d\alpha(x) + \int_{\X} v(y)~d\beta(y) -\eps \int e^{\frac{u(x) + v(y) - \cost(x,y)}{\eps}}\,d\alpha(x)d\beta(y).
\end{align*}
By the dual definition of $\mathsf{OT}_{\eps,\cost}$, we have
\begin{align}
  \mathsf{OT}_{\eps,\cost}(T_\push \mu,V_\push \nu) & =\sup_{(u,v)\in \cont(\X)\times \cont(\X) }~ F(T_\push\mu,V_\push\nu,u,v, \cost)\\
  &= \sup_{(u,v)\in \cont(\X)\times \cont(\X) }~ F(\mu,\nu,u\circ T,v\circ V, \cost_{\subt,\subv})\\
    &\label{eq:sup_restricted} = \sup_{(\tilde u,\tilde v)\in (\cont(\X)\circ T)\times(\cont(\X)\circ V) } ~ F(\mu,\nu, \tilde u, \tilde v, \cost_{\subt,\subv}),
\end{align}
by the property of the pushforward and where $\cont(\X)\circ T := \{ f\circ T:\,\, f\in\cont(\X)\}$ and similarly for $\cont(\X)\circ V$. Now, consider 
\eqals{\label{eq:dual_with_diff_cost}
\mathsf{OT}_{\eps,\cost_{\subt,\subv}}(\mu, \nu) = \sup_{(\tilde u, \tilde v) \in \cont(\Z)\times \cont(\Z)} \int_{\Z}\tilde u(z)~d\mu(z) + \int_{\Z}\tilde v(w)~d\nu(w) -\eps \int e^{\frac{u(x) + v(y) - \cost_{\subt,\subv}(x,y)}{\eps}}\,d\mu(z)d\nu(w).
}
We note that the optimal potentials $\tilde u, \tilde v$ of $\mathsf{OT}_{\eps,\cost_{\subt,\subv}}$ have the form \cite{feydy2018interpolating}
\begin{align*}
\tilde u(z) = -\log \int_{\Z} e^{\tilde v(w) - \cost_{\subt, \subv}(z,w)}~d\nu(w).
\end{align*}
Recalling that $\cost_{\subt, \subv}(z,w) = \cost(T(z), V(w))$, we note that $\tilde u$ and $\tilde v$ are functions of the form $u\circ T$ and $v\circ V$. Hence the supremum in \cref{eq:dual_with_diff_cost} can be restricted to be on the sets $\cont(\X)\circ T$ and $\cont(\X)\circ V$. Thus, the quantity in \cref{eq:sup_restricted} equals $\mathsf{OT}_{\eps,\cost_{\subt,\subv}}$. Extending the same argument to the autocorrelation terms, we conclude that $\mathsf{S}_{\eps, \cost} (T_\push \mu, V_\push \nu) = \mathsf{S}_{\eps, \cost_{\subt,\subv}} (\mu, \nu)$ as desired. 
\end{proof}

Before proceeding with the results bounding the potentials of Sinkhorn divergence with cost function $\cost_{\subt, \subv}$, we provide some technical results needed.

\begin{lemma}[Lemma $1$ in {\cite{mena2019statistical}}]\label{lem:bound_on_moment_subg}
If $\mu\in\prob(\Z)$ with $\Z\subset \R^k$ is $\sigma^2$-sub-Gaussian, then
\eqals{
\E_\mu \norm{X}^{2r} \leq (2k\sigma^2)^r r!,
}
for all nonnegative integers $r$. Also, 
\eqals{
\E_\mu e^{v \cdot X}\leq \E_\mu e^{\norm{v}\norm{X}} \leq 2 e^{\frac{k\sigma^2}{2} \norm{v}^2}
}
for any $v \in \R^k$.
\end{lemma}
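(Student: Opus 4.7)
The plan is to exploit the defining sub-Gaussian inequality $\int e^{\|z\|^{2}/(2k\sigma^{2})}\,d\mu(z)\leq 2$ (introduced when the class $\mathcal{G}_\sigma(\Z)$ is defined in \cref{sec:theory}) and convert it into the two stated bounds by, respectively, a Taylor expansion for the moment bound and a Young-type decoupling of the inner product $v\cdot X$ for the MGF bound.

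First I would handle the moment bound $\E_\mu\|X\|^{2r}\leq (2k\sigma^{2})^{r} r!$. The idea is to expand the exponential in a power series:
\[
e^{\|X\|^{2}/(2k\sigma^{2})} \;=\; \sum_{r=0}^{\infty}\frac{\|X\|^{2r}}{(2k\sigma^{2})^{r}\,r!}.
\]
By monotone convergence (all terms are nonnegative) I can exchange sum and expectation, so the sub-Gaussianity assumption yields
\[
\sum_{r=0}^{\infty}\frac{\E_\mu\|X\|^{2r}}{(2k\sigma^{2})^{r}\,r!}\;\leq\; 2.
\]
The $r=0$ term equals $1$, so the tail series is bounded by $1$, and since each tail term is nonnegative the bound applies term-by-term: $\E_\mu\|X\|^{2r}\leq (2k\sigma^{2})^{r}r!$ for every $r\geq 1$, and the case $r=0$ is trivial. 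The only subtle point is pulling the expectation inside the sum, which is justified by Tonelli.

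Next I would handle the MGF bound. The first inequality $\E_\mu e^{v\cdot X}\leq \E_\mu e^{\|v\|\|X\|}$ is immediate from Cauchy--Schwarz: $v\cdot X\leq \|v\|\,\|X\|$ pointwise, and the exponential is monotone. For the second inequality, I would decouple $\|v\|\|X\|$ by the Young-type estimate $ab\leq \tfrac{a^{2}}{2t}+\tfrac{t\,b^{2}}{2}$ with the specific choice $t=1/(k\sigma^{2})$, which gives
\[
\|v\|\,\|X\|\;\leq\; \frac{k\sigma^{2}\|v\|^{2}}{2}\;+\;\frac{\|X\|^{2}}{2k\sigma^{2}}.
\]
Exponentiating this pointwise inequality and taking expectations separates the deterministic factor from the random one:
\[
\E_\mu e^{\|v\|\|X\|}\;\leq\; e^{k\sigma^{2}\|v\|^{2}/2}\cdot\E_\mu e^{\|X\|^{2}/(2k\sigma^{2})}\;\leq\; 2\,e^{k\sigma^{2}\|v\|^{2}/2},
\]
where the last step uses the sub-Gaussianity assumption once more.

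The two steps are short and there is no real obstacle; the only mildly delicate point is choosing the right $t$ in the Young inequality so that the exponent matches the denominator $2k\sigma^{2}$ in the sub-Gaussian assumption, and noticing that the $r=0$ term of the Taylor expansion must be extracted so that the per-term bound loses the factor of $2$ on the right-hand side.
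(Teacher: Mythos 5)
Your proof is correct and self-contained; note that the paper itself does not prove this lemma but simply cites it from Mena and Niles-Weed (their Lemma 1), and the argument you reconstruct -- Tonelli plus a term-by-term read-off of the exponential series for the moments, and Cauchy--Schwarz followed by Young's inequality with the tuned parameter $t=1/(k\sigma^2)$ for the moment generating function -- is the standard one used there. Your observation that peeling off the $r=0$ term sharpens the moment bound from $2(2k\sigma^2)^r r!$ to $(2k\sigma^2)^r r!$ is the only slightly non-mechanical point, and you handled it correctly.
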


\begin{lemma}\label{lem:bound_on_moment}
Let $P = T_\push\mu$ with $T:\R^k \rightarrow \R^d$ such that $\norm{T(z)} \leq L\norm{z}$ and $\mu$ $\sigma^2$-sub-Gaussian. Then
\eqals{
\E_P \norm{X}^2 \leq 2 k (L\sigma)^2 \qquad 
\E_P \norm{X} \leq  L\sigma \sqrt{2k} 
}
\end{lemma}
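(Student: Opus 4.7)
The plan is to reduce both bounds to moment estimates on $\mu$ via the Transfer Lemma, and then invoke the sub-Gaussian moment bound already stated in \cref{lem:bound_on_moment_subg}.

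First I would apply the Transfer Lemma \cref{eq:transfer_lemma} with $f(x) = \nor{x}^2$ to the pushforward $P = T_\push\mu$, which gives
\eqals{
\E_P \nor{X}^2 ~=~ \int_{\R^d} \nor{x}^2 ~d(T_\push\mu)(x) ~=~ \int_{\R^k} \nor{T(z)}^2 ~d\mu(z).
}
Using the hypothesis $\nor{T(z)} \leq L\nor{z}$ pointwise, the integrand is bounded by $L^2 \nor{z}^2$, so
\eqals{
\E_P \nor{X}^2 ~\leq~ L^2 \int_{\R^k} \nor{z}^2 ~d\mu(z) ~=~ L^2 \, \E_\mu \nor{Z}^2.
}
The sub-Gaussian moment bound in \cref{lem:bound_on_moment_subg} with $r = 1$ yields $\E_\mu \nor{Z}^2 \leq 2k\sigma^2 \cdot 1! = 2k\sigma^2$, which immediately gives the first inequality $\E_P \nor{X}^2 \leq 2k(L\sigma)^2$.

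For the second inequality, I would again apply the Transfer Lemma, now with $f(x) = \nor{x}$, and use the Lipschitz-type bound on $T$ to obtain
\eqals{
\E_P \nor{X} ~=~ \int \nor{T(z)} ~d\mu(z) ~\leq~ L \, \E_\mu \nor{Z}.
}
Then Jensen's inequality applied to the concave function $\sqrt{\cdot}$ gives
\eqals{
\E_\mu \nor{Z} ~=~ \E_\mu \sqrt{\nor{Z}^2} ~\leq~ \sqrt{\E_\mu \nor{Z}^2} ~\leq~ \sqrt{2k\sigma^2} ~=~ \sigma\sqrt{2k},
}
yielding $\E_P \nor{X} \leq L\sigma\sqrt{2k}$ as desired. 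There is no real obstacle here: the result is essentially a one-line application of the Transfer Lemma combined with the pointwise growth bound on $T$ and the already-established sub-Gaussian moment estimates. The only subtlety worth noting is that although the second bound can also be derived directly from the bound on $\E_\mu \eexp^{\nor{v}\nor{Z}}$ in \cref{lem:bound_on_moment_subg}, the Jensen route is cleaner and gives exactly the stated constant.
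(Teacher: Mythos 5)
Your proof is correct and follows essentially the same route as the paper: rewrite $\E_P\norm{X}^2$ via the pushforward (Transfer Lemma), bound $\norm{T(z)}$ by $L\norm{z}$, invoke the sub-Gaussian moment estimate, and finish the first-moment bound with Jensen. The only cosmetic difference is that you apply Jensen to $\E_\mu\norm{Z}$ before pushing forward, whereas the paper applies it directly to $\E_P\norm{X}$; both give the same constant.
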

\begin{proof}

\begin{align*}
\E_P\norm{X}^2 = &\int_{\X}\norm{x}^2~dP(x) = \int_{\X}\norm{x}^2~d(T_\push\mu)(x)\\
= &\label{eq:last_equation}\int_{\Z} \norm{T(z)}^2\,d\mu(z)\leq  L^2\int_{\Z} \norm{z}^2\,d\mu(z).
\end{align*}
Since $\mu$ is $\sigma^2$-sub-Gaussian, we have that 
\eqals{ \E_\mu
\frac{\norm{z}^{2r}}{(2k\sigma^2)^r r!}\leq \E_\mu e^{\frac{\norm{z}^2}{2k\sigma^2}}-1 \leq 1.
}
Thus, $L^2\E_\mu\norm{z}^2 \leq 2 L^2 k\sigma^2$ and combining this with \cref{eq:last_equation}  we obtain 
\eqals{
\E_P\norm{X}^2  \leq L^2 2k\sigma^2.
}
An easy application of Jensen inequality yields the bound for $\E_P\norm{X}$.
\end{proof}

\begin{lemma}[\textnormal{\textbf{Bounds on potentials with changed cost function}}]\label{lem:bounds_pot1}
Let $\mu, \nu \in \prob(\Z)$ be $\sigma^2$-sub-Gaussian measures and consider Sinkhorn divergence with $\eps=1$ and $\cost(x,y) = \frac{1}{2}\norm{T(x) - V(y)}$ as cost function, where $T,V:\Z\rightarrow \X$ are such that $\norm{K(z)}\leq L \norm{z}$  for $K=T,V$. Let $(f,g)$ denote a pair of optimal potentials. Then,
\eqals{
-1 -k(L\sigma)^2 \big( 1 + \frac{1}{2}(\norm{T(x)} + \sqrt{2k}L\sigma)\big)^2 \leq f(x) \leq \frac{1}{2}\big(\norm{T(x)} + \sqrt{2k}L\sigma\big)^2\\
-1 -k(L\sigma)^2 \big( 1 + \frac{1}{2}(\norm{V(y)} + \sqrt{2k}L\sigma)\big)^2 \leq g(y) \leq \frac{1}{2}\big(\norm{V(y)} + \sqrt{2k}L\sigma\big)^2. 
}
\end{lemma}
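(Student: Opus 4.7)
The plan is to exploit the Sinkhorn fixed-point equations (soft $c$-transforms) that any optimal pair $(f,g)$ at $\eps = 1$ must satisfy: $f(x) = -\log \int e^{g(y) - c_{T,V}(x,y)} d\nu(y)$, and symmetrically for $g$, where $c_{T,V}(x,y) = \tfrac{1}{2}\|T(x)-V(y)\|^2$. Two preparatory remarks set the stage. First, optimal potentials are unique only up to the shift $(f,g) \mapsto (f+C, g-C)$, so I will fix the constant so that $\int g\, d\nu = 0$. Second, strong duality gives $\oteps(\mu,\nu) = \int f\, d\mu + \int g\, d\nu - 1$ at $\eps = 1$, and since $\oteps \geq 0$ (it is a minimum of a nonnegative cost plus a $\kl$ term), the normalization forces $\int f\, d\mu \geq 1 > 0$.

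For the upper bounds I would apply Jensen's inequality to the convex map $-\log$ on the soft $c$-transform, yielding $f(x) \leq \int c_{T,V}(x,y)\, d\nu(y) - \int g\, d\nu = \int c_{T,V}(x,y)\, d\nu(y)$. Expanding $\|T(x) - V(y)\|^2 \leq (\|T(x)\| + \|V(y)\|)^2$ and plugging in the moment bounds $\E_\nu \|V(y)\| \leq \sqrt{2k}\,L\sigma$ and $\E_\nu \|V(y)\|^2 \leq 2kL^2\sigma^2$ from \cref{lem:bound_on_moment} produces exactly $\tfrac{1}{2}(\|T(x)\| + \sqrt{2k}L\sigma)^2$. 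The analogous upper bound on $g$ follows by the symmetric Jensen argument, now using $\int f\, d\mu \geq 1 \geq 0$ in place of the normalization to discard the relevant mean term.

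The lower bounds are the delicate step. Writing $r = \sqrt{2k}L\sigma$ and $a = \|T(x)\|$, I would plug the just-proved upper bound on $g$ into the soft $c$-transform to get
\begin{equation*}
e^{-f(x)} \leq \int \exp\!\bigl(\tfrac{1}{2}(\|V(y)\| + r)^2 - \tfrac{1}{2}\|T(x) - V(y)\|^2\bigr) d\nu(y).
\end{equation*}
Expanding both squares causes the $\|V(y)\|^2$ terms to cancel, and the Cauchy--Schwarz bound $\langle T(x), V(y)\rangle \leq a\|V(y)\|$ dominates the remaining exponent by $(a+r)\|V(y)\| + (r^2 - a^2)/2$. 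Since $\|V(y)\| \leq L\|y\|$, the expectation of $e^{(a+r)\|V(y)\|}$ is controlled by \cref{lem:bound_on_moment_subg} applied with $\|v\| = (a+r)L$, which gives the factor $2\exp\!\bigl(kL^2\sigma^2(a+r)^2/2\bigr)$. Taking logarithms and using $\log 2 \leq 1$ together with $kL^2\sigma^2 = r^2/2$ produces a bound of the form $-f(x) \leq 1 + (r^2 - a^2)/2 + r^2(a+r)^2/4$.

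The main obstacle is the last algebraic step: recasting this raw estimate in the precise form $1 + k(L\sigma)^2 (1 + \tfrac{1}{2}(\|T(x)\| + \sqrt{2k}L\sigma))^2$ claimed in the statement. This is a routine but careful comparison of quadratics in $(a+r)$ that requires dropping the (negative) contribution $-a^2/2$ and completing a square to write $\tfrac{r^2}{2}(\cdot)^2$; constants need to be tracked to match the stated form exactly. The corresponding lower bound on $g$ follows by the symmetric argument, exchanging the roles of $(\mu,T)$ and $(\nu,V)$.
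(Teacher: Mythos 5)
Your proposal follows the same three-step route as the paper's proof: normalize the potentials so that the relevant mean is nonnegative and can be discarded, obtain the upper bounds by applying Jensen's inequality to $-\log$ of the soft $c$-transform, and obtain the lower bounds by substituting the upper bound into the opposite soft $c$-transform, expanding the squares, using Cauchy--Schwarz, and invoking the sub-Gaussian moment bound of \cref{lem:bound_on_moment_subg}. The only cosmetic difference is the choice of normalization: you fix $\int g\,d\nu = 0$ and deduce $\int f\,d\mu \geq 0$ via strong duality, whereas the paper (following Mena and Weed) takes the symmetric normalization $\E_\mu f_0 = \E_\nu g_0 = \tfrac{1}{2}\msf{OT}_\eps(\mu,\nu)\ge 0$; both serve the identical purpose of discarding the mean term in the Jensen step. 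Your hesitation about the final algebraic step is warranted but not a flaw specific to your argument: after dropping the nonpositive $-\tfrac{1}{2}\norm{T(x)}^2$ term, \cref{lem:bound_on_moment_subg} yields the exponent $k(L\sigma)^2 + \tfrac{1}{2}k(L\sigma)^2\bigl(\norm{T(x)}+\sqrt{2k}L\sigma\bigr)^2$, which is not literally dominated by $k(L\sigma)^2\bigl(1+\tfrac{1}{2}(\norm{T(x)}+\sqrt{2k}L\sigma)\bigr)^2$ for all arguments (compare the $b^2/2$ versus $b+b^2/4$ coefficients when $b=\norm{T(x)}+\sqrt{2k}L\sigma$ is large); the paper's own ``elementary computations'' step carries the same looseness, so the stated constants should be read as correct up to an absolute factor, not line-by-line.
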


\begin{proof}
Let $(f_0, g_0)$ any pair of optimal potentials. Since potentials are defined up to constant, we assume as in \cite{mena2019statistical} that $\E_\mu f_0 = \E_\nu g_0 = \frac{1}{2}\fS(\mu,\nu)$. We define 
\eqals{
f(x) = -\log \int_{\X} e^{g_0(y) - \frac{1}{2} \norm{T(x)-V(y)}^2}~d\nu(y)\\
g(y) = \log \int_{\X} e^{f(x) - \frac{1}{2}\norm{T(x)-V(y)}^2}~d \mu(x),
}
for any $x,y\in\X$.  Once we have proved that they are well defined and we have shown the desired lower and upper bound, the proof that they are optimal potentials is exactly the same as in \cite[Prop 6]{mena2019statistical}.
By Jensen inequality
\begin{align*}
&g_0(y) = -\log \int_\X e^{f_0(x) - \frac{1}{2}\norm{T(x)-V(y)}^2}~d\mu(x) \\
&\leq -\E_\mu f_0(x) + \frac{1}{2}\E_\mu\norm{T(X)-V(y)}^2 \leq \frac{1}{2}\E_\mu\norm{T(X)-V(y)}^2.
\end{align*}
Therefore 
\begin{align*}
e^{g_0(y) - \frac{1}{2}\norm{T(x)-V(y)}^2} \leq e^{\frac{1}{2}\E_\mu\norm{T(X) - V(y)}^2 - \frac{1}{2}\norm{T(x)-V(y)}}.
\end{align*}
Expanding the squares we have 
\begin{align*}
& \frac{1}{2}\E_\mu\norm{T(X)-V(y)}^2 - \frac{1}{2}\norm{T(x)-V(y)}^2\\
&= \frac{1}{2}\E_\mu \norm{T(X)}^2 + \frac{1}{2}\norm{V(y)}^2 - \E_\mu\scal{T(X)}{V(y)} - \frac{1}{2}\norm{T(x)}^2 - \frac{1}{2}\norm{V(y)}^2 - \scal{T(x)}{V(y)}.
\end{align*}
Using \cref{lem:bound_on_moment}, we have 
\begin{align*}
 \frac{1}{2}\E_\mu\norm{T(X)-V(y)}^2  - \frac{1}{2}\norm{T(x)-V(y)}^2 & \leq 
L^2  k \sigma^2 +\E_\mu \norm{T(X)}\norm{V(y)} + \norm{T(x)}\norm{V(y)}\\
 & \leq k (L\sigma)^2 +\norm{V(y)}(\norm{T(x)} +\sqrt{2k}L\sigma).
\end{align*}

 Now, with elementary computations and using $\sigma^2$-sub-Gaussianity of $\nu$, we have 
\begin{align*}
 \int_{\Z} e^{k(L\sigma)^2 +\norm{V(y)}( \norm{T(x)}+ \sqrt{2k}L\sigma)}\,d\nu(y) \leq 2 e^{k(L\sigma)^2 \big( 1 + \frac{1}{2}(\norm{T(x)} + \sqrt{2k}L\sigma)\big)^2}.
 \end{align*}

We have shown that 
\begin{align*}
\int e^{g_0(y) - \frac{1}{2}\norm{T(x)-V(y)}^2} ~d\nu(y) \leq   2 e^{k(L\sigma)^2 \big( 1 + \frac{1}{2}(\norm{T(x)} + \sqrt{2k}L\sigma)\big)^2}.
 \end{align*}
Now,
\begin{align*}
f(x) =& -\log\int e^{g_0(y) - \frac{1}{2}\norm{T(x)-V(y)}^2}~d\nu(y) \\
\geq & -\log( 2 e^{k(L\sigma)^2 \big( 1 + \frac{1}{2}(\norm{T(x)} + \sqrt{2k}L\sigma)\big)^2} )\\
\geq & -1 -k(L\sigma)^2 \big( 1 + \frac{1}{2}(\norm{T(x)} + \sqrt{2k}L\sigma)\big)^2,
 \end{align*}
proving the desired lower bound.
We now study the upper bound for $f$: 
\begin{align*}
f(x) &= - \log \int e^{g_0(y)-\frac{1}{2}\norm{T(x)-V(y)}^2}~d\nu(y) \\
& \leq \int -\log e^{g_0(y)-\frac{1}{2}\norm{T(x)-V(y)}^2}~d\nu(y)\\
& = -\int g_0(y)~d\nu(y) + \int \frac{1}{2}\norm{T(x)-V(y)}^2~ d \nu(y)\\
& \leq \int \frac{1}{2}\norm{T(x)-V(y)}^2~ d \nu(y).
 \end{align*}
Developing the square and bounding $\E_\nu \norm{V(Y)}^2$ and $\E_\nu \norm{V(Y)}$, with \cref{lem:bound_on_moment}, we have 
\eqals{
f(x) \leq \frac{1}{2}\big(\norm{T(x)} + \sqrt{2k}L\sigma\big)^2.
}
With the exact same reasoning one can derive the analogous bound for $g$.
\end{proof}
Note that in terms of $\norm{x}$ (and not $\norm{T(x)}$) the derived bounds become
\eqals{\label{eq:bp1}
-1 -k(L\sigma)^2 \big( 1 + \frac{1}{2}L^2(\norm{x} + \sqrt{2k}\sigma)\big)^2 \leq f(x) \leq \frac{1}{2}L^2\big(\norm{x} + \sqrt{2k}\sigma\big)^2\\\label{eq:bp2}
-1 -k(L\sigma)^2 \big( 1 + \frac{1}{2}L^2(\norm{y} + \sqrt{2k}\sigma)\big)^2 \leq g(y) \leq \frac{1}{2}L^2\big(\norm{y} + \sqrt{2k}\sigma\big)^2. 
}
Therefore we have the following result:
\begin{lemma}\label{lemma:bound1_neat}
In the assumptions of \cref{lem:bounds_pot1} we have 
\eqals{\label{eq:bound_potentials_T}
\abs{f(z)} \leq C_{k} \begin{cases}
1 + (L\sigma)^4 \qquad &\textnormal{if }\norm{z}\leq \sqrt{k}\sigma\\
1 +(1 + (L\sigma)^2) L^2 \norm{z}^2 &\textnormal{if }\norm{z}> \sqrt{k}\sigma.
\end{cases}
}
 where $C_k$ is a constant depending only on $k$. 
\end{lemma}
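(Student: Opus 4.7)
The plan is to start from the two-sided bound on $f$ given by \cref{eq:bp1} in the previous lemma, take absolute values, and then simplify the resulting expression by a case split on whether $\|z\|$ is small or large compared to $\sqrt{k}\sigma$. Concretely, \cref{eq:bp1} gives
\begin{equation*}
|f(z)| ~\leq~ \max\!\left(\tfrac{1}{2}L^2(\|z\|+\sqrt{2k}\sigma)^2,\ 1 + k(L\sigma)^2\bigl(1+\tfrac{L^2}{2}(\|z\|+\sqrt{2k}\sigma)\bigr)^2\right),
\end{equation*}
so the job reduces to controlling the polynomial $(\|z\|+\sqrt{2k}\sigma)^2$ and $\bigl(1+\tfrac{L^2}{2}(\|z\|+\sqrt{2k}\sigma)\bigr)^2$ by expressions with the form stated in the lemma.

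In the regime $\|z\|\le \sqrt{k}\sigma$, I would use $\|z\|+\sqrt{2k}\sigma \le (\sqrt{k}+\sqrt{2k})\sigma = \kappa_k \sigma$ with $\kappa_k$ depending only on $k$. Expanding both candidate upper bounds, every term in the resulting polynomial has the form $k$-constant times a monomial $L^{2j}\sigma^{2j}$ for $j\in\{1,2\}$ (plus an additive constant from the ``$1$''). Collecting these and bounding each monomial by $1+(L\sigma)^4$ yields the first case of the claim, with $C_k$ absorbing all $k$-dependent factors. For the mixed terms like $(L\sigma)^2 L^2\sigma$ that appear when expanding $(1+\tfrac{L^2}{2}\kappa_k\sigma)^2$, I would use the AM--GM-type bound $L^2\sigma\le\tfrac12(1+L^4\sigma^2)$ (or simply absorb $L$ and $\sigma$ into a worst-case power) to reduce them to the target monomial $(L\sigma)^4$, up to a constant.

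In the regime $\|z\|>\sqrt{k}\sigma$, the key observation is $\sqrt{2k}\sigma \le \sqrt{2}\|z\|$, which gives $\|z\|+\sqrt{2k}\sigma \le (1+\sqrt{2})\|z\|$. Thus $\tfrac12 L^2(\|z\|+\sqrt{2k}\sigma)^2 \le C L^2\|z\|^2$ directly. For the second branch, expanding $(1+\tfrac{L^2}{2}(1+\sqrt 2)\|z\|)^2 \le 2+C'L^4\|z\|^2$, the bound becomes $1 + 2k(L\sigma)^2 + C'k(L\sigma)^2 L^4\|z\|^2$. Here I would again use $\|z\|^2> k\sigma^2$ to absorb the additive $k(L\sigma)^2$ term into $L^2\|z\|^2$, and group the remaining leading term as $(L\sigma)^2\cdot L^2\|z\|^2$ up to a multiplicative constant. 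Combined with the $L^2\|z\|^2$ contribution from the first branch, one obtains a bound of the form $C_k\bigl(1+(1+(L\sigma)^2)L^2\|z\|^2\bigr)$, matching the second case.

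The main bookkeeping obstacle is tracking monomials when expanding the squared terms and making sure each is dominated by one of the two model expressions $(L\sigma)^4$ or $(1+(L\sigma)^2)L^2\|z\|^2$, with all residual numerical constants and $k$-dependencies safely absorbed into the single symbol $C_k$. No delicate analytic argument is needed beyond elementary inequalities such as $(a+b)^2\le 2a^2+2b^2$ and using the case hypothesis to trade $\sigma$ for $\|z\|$ (and vice versa). After this manipulation, taking the maximum of the two resulting bounds gives the piecewise form in the lemma.
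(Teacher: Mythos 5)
Your overall plan (take absolute values in the two-sided bound from the previous lemma, then simplify by the case split on $\|z\|\lessgtr\sqrt{k}\sigma$) is the same route the paper takes; the paper just says ``immediate consequence'' of \cref{eq:bp1}--\cref{eq:bp2} without expanding the algebra. However, your write-up inherits a problem from \cref{eq:bp1} as printed and does not actually repair it.

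The displayed bound \cref{eq:bp1} has the inner factor $1+\tfrac12 L^2(\|z\|+\sqrt{2k}\sigma)$, but substituting $\|T(z)\|\le L\|z\|$ into the bound of \cref{lem:bounds_pot1} gives $\|T(z)\|+\sqrt{2k}L\sigma\le L(\|z\|+\sqrt{2k}\sigma)$, i.e.\ a factor $L$, not $L^2$ (the $L^2$ appears to be a typo; since $L\ge1$ it is a valid but strictly weaker inequality). If you run with the $L^2$ version, the top-degree monomial in the regime $\|z\|\le\sqrt{k}\sigma$ becomes $k(L\sigma)^2\cdot L^4\sigma^2 = kL^6\sigma^4$, which is \emph{not} dominated by $C_k(1+(L\sigma)^4)=C_k(1+L^4\sigma^4)$ --- it has an extra uncontrolled $L^2$. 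Your proposed fix, ``$L^2\sigma\le\tfrac12(1+L^4\sigma^2)$'', does not close the gap: it targets a cross term $(L\sigma)^2L^2\sigma$, not the dominant term $kL^6\sigma^4$, and the latter cannot be absorbed into $C_k(1+(L\sigma)^4)$ without a bound on $L$. The same extra $L^2$ appears in the $\|z\|>\sqrt{k}\sigma$ branch, where you would end up with $(L\sigma)^2 L^4\|z\|^2$ rather than the claimed $(L\sigma)^2 L^2\|z\|^2$.

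The clean argument goes back to the bound of \cref{lem:bounds_pot1} written in terms of $\|T(z)\|$ and uses $\|T(z)\|\le L\|z\|$ directly (equivalently, \cref{eq:bp1} with $L$ in place of $L^2$). Then in the small-$\|z\|$ regime the inner factor is $1+\tfrac{\kappa_k}{2}L\sigma$, its square is $\le 2+\tfrac{\kappa_k^2}{2}(L\sigma)^2$, and multiplying by $k(L\sigma)^2$ yields exactly a $C_k(L\sigma)^4$ top term, matching the first case. In the large-$\|z\|$ regime one uses $\sqrt{2k}\sigma\le\sqrt{2}\|z\|$ exactly as you describe, obtains $1+2k(L\sigma)^2+C_k(L\sigma)^2L^2\|z\|^2$, and then trades $k\sigma^2<\|z\|^2$ to absorb the middle term into $L^2\|z\|^2$, giving the second case. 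So your case split and the trading of $\sigma$ for $\|z\|$ are the right moves; the gap is in trusting the $L^2$ factor of \cref{eq:bp1} and then attempting an AM--GM repair that does not apply.
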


\begin{proof}
This is an immediate consequence of  \cref{eq:bp1} and \cref{eq:bp2}.
\end{proof}

\begin{lemma}[\textnormal{\textbf{Bounds on  derivatives of potentials with changed cost function}}]\label{lem:bounds_pot2}
Let $\mu, \nu \in \prob(\Z)$ be $\sigma^2$-sub-Gaussian measures and consider Sinkhorn divergence with $\eps=1$ and $\cost(x,y) = \frac{1}{2}\norm{T(x) - V(y)}$ as cost function, where $T,V:\Z\rightarrow \X$ are such that $\norm{K(z)}\leq L \norm{z}$  for $K=T,V$. Also, assume that for any multiindex $\alpha$ with length at most $\abs{\alpha} \leq \lfloor k/2 \rfloor + 1$, $\norm{D^\alpha T(x)}_\infty \leq \tau.$ Let $(f,g)$ denote a pair of optimal potentials. Then,
\eqals{\label{eq:bound_derivative_potentials_T}
\abs{D^\alpha\big(f(\cdot) - \frac{1}{2}\norm{T(\cdot)}^2\big)(z)} \leq C_{k, \abs{\alpha}}
\begin{cases}
(\tau L\sigma)^{\abs{\alpha}} (1 + ((L\sigma)^2 + L\sigma)^{\abs{\alpha}}) \quad &\textnormal{if }\norm{z} \leq \sqrt{k\sigma}\\
(\tau L\sigma)^{\abs{\alpha}} (1 + (\sqrt{(L\sigma)\norm{z}} +(L^2 \sigma)\norm{z} )^{\abs{\alpha}}) \quad &\textnormal{if }\norm{z} > \sqrt{k\sigma}.
\end{cases}
}
\end{lemma}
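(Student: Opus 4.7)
The plan is to recast $f - \tfrac12\|T\|^2$ as (minus) the composition of a cumulant generating function with $T$, and then apply Faà di Bruno to separate derivatives of $T$ (absorbed by the $\tau$ bound) from cumulants of $V(Y)$ under an associated tilted measure (controlled by the $L\sigma$-sub-Gaussianity of $V(Y)$ together with a mean shift depending on $T(z)$). Concretely, starting from the optimality relation
\begin{align*}
f(x) \;=\; -\log \int e^{g(y)-\frac12\|T(x)-V(y)\|^2}\,d\nu(y),
\end{align*}
expanding the square and cancelling $\tfrac12\|T(x)\|^2$ yields $f(x)-\tfrac12\|T(x)\|^2 = -K(T(x))$, where $K(u) := \log\int e^{g(y)-\frac12\|V(y)\|^2+\langle u,V(y)\rangle}\,d\nu(y)$ is the log-moment-generating function of $V(Y)$ against the reference density $e^{g-\frac12\|V\|^2}d\nu$. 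Hence $D^\alpha(f-\tfrac12\|T\|^2)(z) = -D^\alpha(K\circ T)(z)$ for every multi-index $\alpha$ with $|\alpha|\geq 1$.

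The multivariate Faà di Bruno formula expresses $D^\alpha(K\circ T)(z)$ as a finite sum, indexed by set-partitions $\pi$ of $\alpha$ with combinatorial coefficients depending only on $|\alpha|$, of tensor contractions of the form $D^{|\pi|}K(T(z))\,[\bigotimes_{B\in\pi}D^{\alpha_B}T(z)]$. Since the block sizes of $\pi$ sum to $|\alpha|$ and each $\|D^{\alpha_B}T\|_\infty\leq\tau$ by hypothesis, the $T$-factor contributes $\tau^{|\alpha|}$ overall. The derivative $D^mK(u)$ is precisely the order-$m$ cumulant of $V(Y)$ under the tilted probability measure $d\nu^u(y)\propto e^{g(y)-\frac12\|V(y)\|^2+\langle u,V(y)\rangle}\,d\nu(y)$, and cumulants of order $m$ are universal polynomials (with coefficients depending only on $m$) in the centered moments of $V(Y)$ under $\nu^u$ of total degree at most $m$. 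I would bound these moments by verifying, via the symmetric version of \cref{lemma:bound1_neat} applied to $g$, that $\nu^u$ is sub-Gaussian with parameter of order $L\sigma$ and mean shift of order $\|u\|\leq L\|z\|$; then invoking \cref{lem:bound_on_moment_subg} and \cref{lem:bound_on_moment} gives $|D^mK(T(z))|\leq C_{k,m}(L\sigma)^m(1 + \text{shift}^m)$.

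The main obstacle is proving sub-Gaussianity of $\nu^u$ with the correct dependence on $\|u\|$: the tilt $\langle u,V(y)\rangle$ grows linearly in $\|u\|\|y\|$ and must be dominated by the $-\tfrac12\|V(y)\|^2$ term together with the $\nu$-Gaussian tail, all while $g(y)$ already contributes a quadratic term of size $C_k(L\sigma)^2L^2\|y\|^2$ from \cref{lemma:bound1_neat}. The presence of $\sqrt{L\sigma\|z\|}$ in the target bound is the signature of a Cauchy--Schwarz split between the tilt density and the centered moment, whereas the $L^2\sigma\|z\|$ term captures the squared mean of $V(Y)$ under $\nu^u$, which scales linearly in $\|u\|\leq L\|z\|$ times the sub-Gaussian parameter $L\sigma$. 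Combining the cumulant bounds with the $\tau^{|\alpha|}$ factor, summing Faà di Bruno over the finitely many partitions of $\alpha$, and performing the case split $\|z\|\leq\sqrt{k}\sigma$ versus $\|z\|>\sqrt{k}\sigma$ to match the piecewise statement, yields the claimed inequality upon choosing $C_{k,|\alpha|}$ large enough to absorb all universal combinatorial constants.
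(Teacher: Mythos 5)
Your reformulation of $f - \tfrac12\|T\|^2$ as $-K\circ T$ with $K$ a log--moment--generating function, followed by Fa\`a di Bruno, is the same algebraic starting point as the paper: the proof there also expands the square, cancels $\tfrac12\|T\|^2$, and writes $D^\alpha\bar f$ in terms of the derivatives of $\log\int e^{g(y)-\frac12\|V(y)\|^2+\langle V(y),T(\cdot)\rangle}d\nu(y)$, with Fa\`a di Bruno producing a polynomial $\mathbf P([\langle D^jT(x),V(y)\rangle]_{j\le|\alpha|})$ inside the integral and the factor $\tau^{|\alpha|}$ extracted exactly as you propose. The divergence is in what happens next.

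Where you propose to identify $D^mK$ with cumulants of $V(Y)$ under the tilted probability $\nu^u$, establish sub-Gaussianity of $\nu^u$ with parameter $\sim L\sigma$ and mean shift $\sim\|u\|$, and then invoke the moment lemmas, the paper never touches the normalised tilted measure. Instead it bounds the ratio $A(x)=\int\mathbf P\,E(g,V)\,d\nu\,/\,\mathsf B$ by truncation: it splits the integral at radius $\mathsf h$, controls the ball part trivially (there $\|V(y)\|\le L\mathsf h$ and the remaining weight $\int_{\mathcal D}E(g,V)\,d\nu/\mathsf B\le 1$), and controls the tail by Cauchy--Schwarz combined with (i) the upper bound $e^{\bar f(x)}\le e^{k(L\sigma)^2+\sqrt{2k}L\sigma\|T(x)\|}$ from \cref{lem:bounds_pot1} for $1/\mathsf B$, (ii) the analogous upper bound on $e^{g(y)-\frac12\|V(y)\|^2}$, and (iii) the raw $\sigma^2$-sub-Gaussianity of $\nu$ via \cref{lem:bound_on_moment_subg}. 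The radius $\mathsf h$ is then chosen \emph{adaptively} in $\|x\|$ so that the tail term is dominated by $(\tau L\sigma)^{|\alpha|}$; this adaptive choice is precisely what produces the two regimes $\|z\|\le\sqrt k\sigma$ and $\|z\|>\sqrt k\sigma$ in the statement, and is the source of the $\sqrt{L\sigma\|z\|}$ and $L^2\sigma\|z\|$ terms you correctly identify as coming from a Cauchy--Schwarz split and from the mean-shift, respectively.

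You have correctly flagged the gap in your own plan: the sub-Gaussianity of $\nu^u$ with the right uniform dependence on $\|u\|$ is asserted, not proved, and it is not a one-line consequence of \cref{lemma:bound1_neat}. You would need to lower-bound the normalisation $\mathsf B$ (via the upper bound on $f$) and upper-bound the un-normalised density (via the upper bound on $g$), and the resulting tilt on $\nu$ is of the form $e^{c\|y\|}$ rather than a linear tilt $e^{\langle\lambda,y\rangle}$, so the usual ``tilt of a sub-Gaussian is sub-Gaussian with shifted mean'' does not apply verbatim; one needs an extra Cauchy--Schwarz or Young step to convert $e^{c\|y\|}$ into $e^{\|y\|^2/(4k\sigma^2)}\cdot e^{c^2k\sigma^2}$ and absorb the first factor into the sub-Gaussian tail. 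Once you do that you are essentially reproducing the paper's $A_2$ estimate, so the cleaner route is to skip the cumulant framing altogether and bound the ratio by truncation as the paper does. The conceptual picture you give is nevertheless correct and helpfully exposes what the truncation is really doing, but as written the proposal does not constitute a proof: the tilted-measure moment control is the entire analytic content of the lemma and cannot be cited as if it were known.
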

\begin{proof}
Potentials $(f,g)$ are chosen as in \cref{lem:bounds_pot1}. For convenience, set $\bar{f}$ the fuction defined by $\bar{f}(x) = f(x) - \frac{1}{2}\norm{T(x)}^2$. The goal in now to bound the derivatives of $\bar f$, namely $\abs{D^{\alpha} \bar f(x)}$.
Note that 
\begin{align*}
D^\alpha \bar f(x) = - D^\alpha \log (e^{-\bar f(x)}) &= - D^\alpha (\log (\int e^{g(y) - \frac{1}{2}\norm{T(x) - V(y)}^2 + \frac{1}{2}\norm{T(x)}^2}\,d\nu(y))\\
& =  - D^\alpha (\log (\int e^{g(y) - \frac{1}{2}\norm{V(y)}^2 + \scal{V(y)}{T(x)}}\,d\nu(y))\\
& = \frac{D^\alpha \int e^{g(y) - \frac{1}{2}\norm{V(y)}^2 + \scal{V(y)}{T(x)}}\,d\nu(y))}{\int e^{g(y) - \frac{1}{2}\norm{V(y)}^2 + \scal{V(y)}{T(x)}}\,d\nu(y))}.
\end{align*}

Using Faa' di Bruno formula, we have 
\begin{align*}
D^\alpha \int e^{g(y) - \frac{1}{2}\norm{V(y)}^2 + \scal{V(y)}{T(x)}}\,d\nu(y)) = \int \mathbf{P}([\scal{D^j T(x)}{V(y)}]_{j\leq \abs{\alpha}}) e^{g(y) - \frac{1}{2} \norm{V(y)}^2 - \scal{T(x)}{V(y)}} \,d\nu(y),
\end{align*}
where $\mathbf{P}$ is a polynomial of degree $\abs{\alpha}$. In order to bound $\abs{D^\alpha \bar f(x)}$, we have to bound the quantity 
\begin{align*}
A(x) := \frac{\int \mathbf{P}([\scal{D^j T(x)}{V(y)}]_{j\leq \abs{\alpha}}) e^{g(y) - \frac{1}{2} \norm{V(y)}^2 - \scal{T(x)}{V(y)}} \,d\nu(y)}{\int e^{g(y) - \frac{1}{2}\norm{V(y)}^2 + \scal{V(y)}{T(x)}}\,d\nu(y))}.
\end{align*}
To simplify the notation, set
\begin{align*}
E(g,V)(y) : = e^{g(y) - \frac{1}{2} \norm{V(y)}^2 - \scal{T(x)}{V(y)}}\quad \textnormal{and} \quad \mathsf{B}: =\int e^{g(y) - \frac{1}{2}\norm{V(y)}^2 + \scal{V(y)}{T(x)}}\,d\nu(y)).
\end{align*}
Now, set $\mathcal{D}:=\{y: \,\norm{y} \leq \mathsf{h} \}$ with $\mathsf{h}$ to be chosen later, and $\mathcal{D}^c$ the complementary set. We split the quantity $A(x)$ as follows:
\begin{align*}
A(x) = A_1(x) + A_2(x)
\end{align*}
with 
\begin{align*}
&A_1(x) = \int \mathbbm{1}_{\mathcal{D}} \mathbf{P}([\scal{D^j T(x)}{V(y)}]_{j\leq \abs{\alpha}}) E(g,V)(y)\,d\nu(y) / \mathsf{B},\\
&A_2(x) = \int \mathbbm{1}_{\mathcal{D}^c} \mathbf{P}([\scal{D^j T(x)}{V(y)}]_{j\leq \abs{\alpha}}) E(g,V)(y)\,d\nu(y) / \mathsf{B}.
\end{align*}
We bound the two terms separately: note that on $\mathcal{D}$ we have $\norm{V(y)} \leq L \norm{y} \leq L \mathsf{h}$ and hence
\begin{align*}
A_1(x) \leq \sup_{x} \mathbf{P}([\scal{\norm{D^j T(x)}}{L \mathsf{h}}]_{j\leq \abs{\alpha}})\leq C_{\abs{\alpha}} \tau^{\abs{\alpha}}(L\mathsf{h})^{\abs{\alpha}},
\end{align*}
since we can assume without lost of generality that $\tau \geq 1$ and $L\geq 1$. 
As for $A_2$, we proceed as follows. First, applying \cref{lem:bounds_pot1} we have that 
\begin{align*}
\frac{1}{\mathsf{B}} = \Big(\int E(g,V)(y)\,d\nu(y)\Big)^{-1} = e^{\bar f(x)} \leq e^{-\frac{1}{2}\norm{T(x)}^2 + f(x)} \leq e^{k (L \sigma)^2 + \norm{T(x)}\sqrt{2k} L\sigma} 
\end{align*}
and
\begin{align*}
e^{g(y) - \frac{1}{2}\norm{V(y)}^2}\leq e^{k (L \sigma)^2 + \norm{V(y)}\sqrt{2k} L\sigma}.
\end{align*}
Using these inequalities, we obtain
\begin{align*}
A_2 & \leq c_1\int \mathbbm{1}_{\mathcal{D}^c} \mathbf{P}([\scal{D^j T(x)}{V(y)}]_{j\leq \abs{\alpha}}) e^{\norm{V(y)} \sqrt{2k} L\sigma + \scal{T(x)}{V(y)}} \,d\nu(y)\\
 & \leq c_1 \int \mathbbm{1}_{\mathcal{D}^c} \mathbf{P}([\scal{D^j T(x)}{V(y)}]_{j\leq \abs{\alpha}}) e^{\norm{V(y)}(\sqrt{2k} L\sigma + \norm{T(x)})} \,d\nu(y)\\
 & \leq C_{\abs{\alpha}} c_1 \tau^{\abs{\alpha}} L^{\abs{\alpha}} \Big( \int \mathbbm{1}_{\mathcal{D}^c}\norm{y}^{2\abs{\alpha}} \,d\nu(y)\Big)^{1/2} \Big( \int \mathbbm{1}_{\mathcal{D}^c} e^{2\norm{V(y)}(\sqrt{2k} L\sigma + \norm{T(x)})} \,d\nu(y) \Big)^{1/2},
\end{align*}
with $c_1 = e^{2k (L \sigma)^2 + \norm{T(x)}\sqrt{2k}\sigma L }$.
Now, 
\begin{align*}
\Big( \int \mathbbm{1}_{\mathcal{D}^c}\norm{y}^{2\abs{\alpha}} \,d\nu(y)\Big)^{1/2} \leq e^{\frac{-\mathsf{h}^2}{8 k \sigma^2}} \Big(\int\mathbbm{1}_{\mathcal{D}^c} e^{\frac{\norm{y}^2}{4 k \sigma^2}} \norm{y}^{2\abs{\alpha}}\,d\nu(y) \Big)^{1/2},
\end{align*}
and applying Young inequality, the subgaussianity of $\nu$ and \cref{lem:bound_on_moment_subg}, we have
\begin{align*}
\Big( \int \mathbbm{1}_{\mathcal{D}^c}\norm{y}^{2\abs{\alpha}} \,d\nu(y)\Big)^{1/2} \leq e^{\frac{-\mathsf{h}^2}{8 k \sigma^2}}\sqrt{2}(2\abs{\alpha})!^{1/4} (\sqrt{2k}\sigma)^{\abs{\alpha}}.
\end{align*}
Also, 
\begin{align*}
\Big( \int \mathbbm{1}_{\mathcal{D}^c} e^{2\norm{V(y)}(\sqrt{2k} L\sigma + \norm{T(x)})} \,d\nu(y) \Big)^{1/2}
\leq 2 e^{2L^2(\sqrt{2k} L\sigma + \norm{T(x)})^2 k\sigma^2}.
\end{align*}
Choosing $\mathsf{h}^2\geq C_{k,\abs{\alpha}}\sigma^2 ((L\sigma)^2+ (L\sigma)^4)$ if $\norm{x} \leq \sqrt{k\sigma}$ and $\mathsf{h}^2\geq C_{k,\abs{\alpha}}\sigma^2 (\sigma L \norm{x} + \sigma^2 L^4 \norm{x}^2)$ if $\norm{x} > \sqrt{k\sigma}$ for a sufficiently large constant $C_{k,\abs{\alpha}}$, then we have that
\begin{align*}
A_2 \leq C_{k,\abs{\alpha}} (\tau \sigma L)^{\abs{\alpha}}.
\end{align*}
Combining this with the bound on $A_1$, we obtain:
\begin{align*}
A(x) \leq C_{k,\abs{\alpha}} (\tau L\sigma)^{\abs{\alpha}} (1 + ((L\sigma)^2 + L\sigma)^{\abs{\alpha}}) \qquad \textnormal{if }\norm{x} \leq \sqrt{k\sigma},\
\end{align*}
and
\begin{align*}
A(x) \leq C_{k,\abs{\alpha}} (\tau L\sigma)^{\abs{\alpha}} (1 + (\sqrt{(L\sigma)\norm{x}} +(L^2 \sigma)\norm{x} )^{\abs{\alpha}}) \qquad \textnormal{if }\norm{x} > \sqrt{k\sigma}.
\end{align*}
\end{proof}

\begin{lemma}\label{lemma:bound2_neat}
In the assumptions of \cref{lem:bounds_pot2} we have, for any multi-index $\alpha$, 
\eqals{\label{eq:bound_potentials_T_der}
\abs{D^{\alpha}f(z)} \leq C_{k, \abs{\alpha}}\begin{cases}  \tau L \norm{z} + (\tau L\sigma)^{\abs{\alpha}} (1 + (L\sigma)^{2\abs{\alpha}}) 
  \qquad &\textnormal{if }\norm{z}\leq \sqrt{k}\sigma\\
  \tau L \norm{z} +  (\tau L\sigma)^{\abs{\alpha}} (1 + (L^2 \sigma\norm{z})^{\abs{\alpha}})
  &\textnormal{if }\norm{z}> \sqrt{k}\sigma.
\end{cases}
}
 where $C_k$ is a constant depending only on $k$. 
\end{lemma}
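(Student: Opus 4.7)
\textbf{Proof plan for \cref{lemma:bound2_neat}.}

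The plan is to decompose $f$ as $f(z) = \bar f(z) + \tfrac12 \|T(z)\|^2$, where $\bar f(z) := f(z) - \tfrac12\|T(z)\|^2$ is the quantity already controlled by \cref{lem:bounds_pot2}. Applying the triangle inequality to the derivative,
\begin{equation*}
|D^\alpha f(z)| \;\le\; |D^\alpha \bar f(z)| \;+\; \bigl|D^\alpha\bigl(\tfrac12 \|T(z)\|^2\bigr)\bigr|,
\end{equation*}
the first summand is bounded directly by \cref{lem:bounds_pot2}, and everything reduces to bounding $D^\alpha(\tfrac12\|T\|^2)$ in a way that is at worst linear in $\|z\|$.

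First I would expand via the Leibniz rule on the coordinates $T_i$ of $T$:
\begin{equation*}
D^\alpha\bigl(\tfrac12 \|T(z)\|^2\bigr) \;=\; \tfrac12\sum_{i=1}^d\sum_{\beta\le\alpha}\binom{\alpha}{\beta}\,D^\beta T_i(z)\cdot D^{\alpha-\beta} T_i(z).
\end{equation*}
For the endpoint terms $\beta=0$ and $\beta=\alpha$, one factor is $T_i(z)$ itself, which is controlled by $\|T(z)\|\le L\|z\|$, while the other factor is bounded by $\tau$ thanks to the assumption $\|D^\gamma T\|_\infty\le\tau$ for $|\gamma|\le\lfloor k/2\rfloor+1$. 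These endpoints contribute a term of order $\tau L\|z\|$. For the mixed terms $0<\beta<\alpha$, both factors are derivatives of $T_i$ of positive order, so they are each bounded by $\tau$, contributing a term of order $\tau^2$. Hence there is a combinatorial constant $C_{|\alpha|,d}$ (absorbable into $C_{k,|\alpha|}$ since $|\alpha|\le\lfloor k/2\rfloor+1$) with
\begin{equation*}
\bigl|D^\alpha\bigl(\tfrac12\|T(z)\|^2\bigr)\bigr| \;\le\; C_{k,|\alpha|}\bigl(\tau L\|z\| + \tau^2\bigr).
\end{equation*}

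Next I would combine this with the bound from \cref{lem:bounds_pot2}. In the regime $\|z\|\le\sqrt{k}\sigma$ we obtain
\begin{equation*}
|D^\alpha f(z)| \;\le\; C_{k,|\alpha|}\Bigl[(\tau L\sigma)^{|\alpha|}\bigl(1+((L\sigma)^2+L\sigma)^{|\alpha|}\bigr) + \tau L\|z\| + \tau^2\Bigr],
\end{equation*}
and analogously in the regime $\|z\|>\sqrt{k}\sigma$. The final step is to verify that the extraneous $\tau^2$ term is dominated by $(\tau L\sigma)^{|\alpha|}\bigl(1+(L\sigma)^{2|\alpha|}\bigr)$ after enlarging the constant $C_{k,|\alpha|}$, which is the only slightly nontrivial bookkeeping: since we may assume without loss of generality $\tau,L,\sigma\ge 1$ (otherwise rescale, or the claim is vacuous for the relevant range), one has $\tau^2\le (\tau L\sigma)^{|\alpha|}(L\sigma)^{2|\alpha|}$ for every $|\alpha|\ge 1$, and for $|\alpha|=0$ the desired bound is already covered by \cref{lemma:bound1_neat}. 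Collapsing $(L\sigma + (L\sigma)^2)^{|\alpha|}$ into $1+(L\sigma)^{2|\alpha|}$ up to a combinatorial constant then yields exactly the form in the statement.

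I expect no genuine obstacle here: the heavy analytic work (Faà di Bruno expansion, truncated moment estimates, control of the normalization via sub-Gaussianity) has already been carried out in \cref{lem:bounds_pot2}. The only task is to promote the bound on $D^\alpha(f - \tfrac12\|T\|^2)$ to a bound on $D^\alpha f$, and this is a one-line triangle inequality followed by the Leibniz calculation above. The mildly fiddly part is the constant juggling to present the final bound in the streamlined $\tau L\|z\| + (\tau L\sigma)^{|\alpha|}(1+\ldots)$ form rather than keeping all the Leibniz subterms visible.
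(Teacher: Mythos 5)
Your approach is exactly what the paper intends by ``easy manipulation of the terms'': decompose $f = \bar f + \tfrac12\|T\|^2$, invoke \cref{lem:bounds_pot2} for $D^\alpha\bar f$, and absorb $D^\alpha(\tfrac12\|T\|^2)$ via Leibniz. One small slip in the final bookkeeping: the inequality $\tau^2 \le (\tau L\sigma)^{|\alpha|}(L\sigma)^{2|\alpha|}$ is \emph{not} implied by $\tau,L,\sigma\ge 1$ when $|\alpha|=1$ (take $\tau$ large, $L=\sigma=1$). However this case is moot: for $|\alpha|=1$ the Leibniz sum $\sum_{\beta\le\alpha}$ has only the two endpoint terms $\beta=0$ and $\beta=\alpha$, both giving the $\langle T,\partial T\rangle$-type contribution of order $\tau L\|z\|$, so no $\tau^2$ term arises at all; for $|\alpha|\ge 2$, the much weaker bound $\tau^2\le\tau^{|\alpha|}\le(\tau L\sigma)^{|\alpha|}$ already suffices and you need not invoke the extra factor $(L\sigma)^{2|\alpha|}$. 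With that repair, the argument is complete.
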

\begin{proof}
The proof follows by easy manipulation of the terms in \cref{eq:bound_derivative_potentials_T}.
\end{proof}

We present the formal version of \cref{lem:informal-uniform-upper-bound-space-of-functions}.
\begin{lemma}\label{lem:formal-uniform-upper-bound-space-of-functions}
Let  $\F_{\sigma,\tau,L}$ be the space of functions satisfying inequalities \cref{eq:bound_potentials_T} and \cref{eq:bound_potentials_T_der}.
Let $\eta,\nu_1,\nu_2\in\mathcal{G}_\sigma(\Z)$ and $T,T'\in \T$ with $\T$ as in \cref{thm:main-rates}. Let $\mathsf{S}$ denote Sinkhorn divergence with $\eps=1$.  
Then,
\eqal{\label{eq:upper-bound-informal-lemma1}
    \abs{\mathsf{S}(T_\push\eta,T'_\push\nu_1)-\mathsf{S}(T_\push\eta, T'_\push\nu_2)}~\leq~ \sup_{f\in\F_{\sigma,\tau,L}}\abs{\int f(z)~d\nu_1 - \int f(z)~d\nu_2(z)}.
}
\end{lemma}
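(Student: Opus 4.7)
The strategy is a duality argument: rewrite every Sinkhorn divergence in the statement as a Sinkhorn divergence with a modified cost, expand into its three optimal-transport constituents, and then control the $\nu_1$-vs-$\nu_2$ sensitivity of each piece by the integral of an optimal dual potential against $\nu_1-\nu_2$. The potentials appearing will satisfy the hypotheses of \cref{lem:bounds_pot1} and \cref{lem:bounds_pot2}, hence belong to $\F_{\sigma,\tau,L}$, which yields the claimed IPM-type bound.

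\textbf{Step 1 (reduction to modified cost).} Applying \cref{lemma:equivalence_push_change_cost} to each piece, I would write, for $i\in\{1,2\}$,
\begin{equation*}
\mathsf{S}(T_\push\eta, T'_\push\nu_i) = \mathsf{OT}_{\eps,\cost_{T,T'}}(\eta,\nu_i) - \tfrac{1}{2}\mathsf{OT}_{\eps,\cost_{T,T}}(\eta,\eta) - \tfrac{1}{2}\mathsf{OT}_{\eps,\cost_{T',T'}}(\nu_i,\nu_i).
\end{equation*}
Subtracting the $i=1$ and $i=2$ identities, the $\eta$-autocorrelation term cancels, so it suffices to control the cross difference $A := \mathsf{OT}_{\eps,\cost_{T,T'}}(\eta,\nu_1) - \mathsf{OT}_{\eps,\cost_{T,T'}}(\eta,\nu_2)$ and the autocorrelation difference $B := \mathsf{OT}_{\eps,\cost_{T',T'}}(\nu_1,\nu_1) - \mathsf{OT}_{\eps,\cost_{T',T'}}(\nu_2,\nu_2)$.

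\textbf{Step 2 (cross term $A$).} I would apply a standard dual envelope argument. Let $(u_i^\ast,v_i^\ast)$ be optimal potentials for $\mathsf{OT}_{\eps,\cost_{T,T'}}(\eta,\nu_i)$, chosen in their canonical form as in \cref{eq:characterization-sinkhorn-potential} so that $\int e^{(u_i^\ast(x)+v_i^\ast(y)-\cost_{T,T'}(x,y))/\eps}\, d\eta(x) \equiv 1$ holds for every $y\in\Z$. Plugging $(u_1^\ast,v_1^\ast)$ into the dual of $\mathsf{OT}_{\eps,\cost_{T,T'}}(\eta,\nu_2)$, the fixed-point identity makes the exponential integrals against $\nu_1$ and $\nu_2$ both equal to $1$, so they cancel, and symmetrically for $(u_2^\ast,v_2^\ast)$ tested against $\nu_1$. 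This yields the two-sided bound
\begin{equation*}
\int v_2^\ast \,d(\nu_1-\nu_2) \;\le\; A \;\le\; \int v_1^\ast\, d(\nu_1-\nu_2).
\end{equation*}
Both $v_1^\ast$ and $v_2^\ast$ are potentials of OT problems whose marginals lie in $\mathcal{G}_\sigma(\Z)$ and whose cost arises from pushforwards by maps in $\T$. Hence \cref{lem:bounds_pot1} and \cref{lem:bounds_pot2} place both potentials in $\F_{\sigma,\tau,L}$, and $|A|$ is bounded by the IPM on the right-hand side of \cref{eq:upper-bound-informal-lemma1}.

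\textbf{Step 3 (autocorrelation term $B$).} A direct one-potential argument is not clean here because the product measures $\nu_1\otimes\nu_1$ and $\nu_2\otimes\nu_2$ differ in both factors. I would introduce the bridge $\mathsf{OT}_{\eps,\cost_{T',T'}}(\nu_2,\nu_1)$ and split
\begin{equation*}
B = [\mathsf{OT}_{\eps,\cost_{T',T'}}(\nu_1,\nu_1) - \mathsf{OT}_{\eps,\cost_{T',T'}}(\nu_2,\nu_1)] + [\mathsf{OT}_{\eps,\cost_{T',T'}}(\nu_2,\nu_1) - \mathsf{OT}_{\eps,\cost_{T',T'}}(\nu_2,\nu_2)].
\end{equation*}
Each bracket varies a single argument and is controlled exactly as in Step 2, producing an IPM-type bound against $\nu_1-\nu_2$ whose test functions are optimal potentials for the four OT problems $\mathsf{OT}_{\eps,\cost_{T',T'}}(\nu_i,\nu_j)$. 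Since $T'\in\T$ and $\nu_1,\nu_2\in\mathcal{G}_\sigma(\Z)$, these potentials again satisfy the hypotheses of \cref{lem:bounds_pot1} and \cref{lem:bounds_pot2}, so they lie in $\F_{\sigma,\tau,L}$.

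\textbf{Main obstacle.} The delicate point is the use of the \emph{canonical} form of the potentials from \cref{eq:characterization-sinkhorn-potential}. The dual problem only pins $(u^\ast,v^\ast)$ down up to additive constants, and the fixed-point identity $\int e^{(u^\ast+v^\ast-\cost)/\eps}\,d\mu = 1$ holds a priori only on the support of the opposite marginal. To get the clean cancellation $\int e^{(u^\ast+v^\ast-\cost_{T,T'})/\eps}\,d\eta\,d(\nu_1-\nu_2)=0$ one must use the canonical pointwise-everywhere extension of the potentials to all of $\Z$, which is exactly the object to which \cref{lem:bounds_pot1} and \cref{lem:bounds_pot2} provide bounds. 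Once this is in place, combining the bounds for $A$ and $B$ by the triangle inequality (and absorbing the inessential factor $\tfrac{1}{2}$ from the $\nu$-autocorrelation into the constants defining $\F_{\sigma,\tau,L}$) yields \cref{eq:upper-bound-informal-lemma1}.
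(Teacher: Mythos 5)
Your proposal is correct and takes essentially the same route as the paper: the paper's proof is a one-line delegation to the argument of Cor.~2 in Mena and Niles-Weed (2019), pointing out that the function class $\F_\sigma$ there is replaced by $\F_{\sigma,\tau,L}$ thanks to the potential bounds of \cref{lemma:bound1_neat} and \cref{lemma:bound2_neat}. What you have done is to reconstruct that reference argument explicitly: reduce to the modified cost via \cref{lemma:equivalence_push_change_cost}, exploit cancellation of the fixed-marginal autocorrelation, sandwich the difference of cross OT terms between integrals of canonical dual potentials against $\nu_1-\nu_2$ (using the fixed-point normalization to kill the exponential term), and handle the $\nu$-autocorrelation with a bridge $\mathsf{OT}_{\eps,\cost_{T',T'}}(\nu_2,\nu_1)$. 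These are exactly the ingredients of Mena and Niles-Weed's Cor.~2 and Prop.~2, so the two proofs coincide modulo the fact that the paper doesn't spell them out.

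Two small remarks, neither a gap. First, as you note at the end, the triangle inequality applied to the cross term $A$ and to the two brackets in the autocorrelation split $B$ leaves a bounded numerical constant in front of the IPM; the lemma as stated has no such constant because $\F_{\sigma,\tau,L}$ is defined only up to the dimension-dependent constants $C_k$, $C_{k,\abs{\alpha}}$ appearing in \cref{eq:bound_potentials_T} and \cref{eq:bound_potentials_T_der}, so the absorption you propose is precisely what the paper implicitly does, and it is also exactly what happens downstream since \cref{thm:sample_complexity} only needs the constant $\msf C_k$. Second, a purely notational slip: in Step~2 you write the potentials as functions of $x$; after the reduction via \cref{lemma:equivalence_push_change_cost} they are functions on the latent space $\Z$, which is what makes the bounds of \cref{lem:bounds_pot1} and \cref{lem:bounds_pot2} (stated for the modified cost $\cost_{\subt,\subv}$ on $\Z$) applicable.
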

\begin{proof}
The proof follows exactly the same lines as the proof of \cite[Cor 2]{mena2019statistical} with this variant: the set $\F_\sigma$ is replaced by the set $\F_{\sigma,\tau,L}$, thanks to our estimates on the potentials in the bounds \cref{lemma:bound1_neat} and \cref{lemma:bound2_neat}.
\end{proof}

\begin{remark}
Define the set $\F^s$ to be the set of functions satisfying 
\eqals{
&\abs{f(x)} \leq C_{s,k}(1 + \norm{x}^2)\\
&\abs{D^\alpha f(x)} \leq C_{s,k}(1 + \norm{x}^s)\quad \abs{\alpha} \leq s. 
}
Note that for a sufficiently big constant $C_{s,k}$, for any $f\in\F_{\sigma,\tau,L}$ the function $\frac{1}{(\tau L)^{s}+(\sigma L)^{3s}\tau^s} f$ belongs to $\F^s$.
\end{remark}

\begin{theorem}\label{thm:sample_complexity}.
With the same notation as above, the following holds
\eqals{\label{eq:boundwithT}
  \E \sup_{T\in\T, \eta\in\hh}\abs{\sink({T}_{\#}\eta, \rho_n) -  \sink({T}_{\#}\eta, \rho)} ~\leq~ \frac{\msf c(\tau,L,\sigma,k)}{\sqrt{n}}
}
where $\msf c(\tau,L,\sigma,k) ~=~  \msf C_{k}~ (\tau L)^{\lceil \frac{k}{2} \rceil+1}\big(1 + L^{k+2}(1 + \sigma^{\lceil \frac{5k}{2} \rceil + 6 } ) ~\eps^{-\lceil \frac{5k}{4}  \rceil - 3 }\big)$
 with $\msf C_k$ a constant depending only on the latent space dimension $k$.

\end{theorem}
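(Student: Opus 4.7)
The plan is to reduce the outer supremum on the left to a data-independent integral probability metric (IPM) over a smoothness class on the latent space $\Z$, then invoke known sample-complexity estimates for that IPM. First I would couple $\rho_n$ with an empirical version of $\eta^{*}$: since by the standing assumption $\rho = T^{*}_{\push}\eta^{*}$, any i.i.d.\ sample $(x_i)_{i=1}^n \sim \rho$ can be realised as $x_i = T^{*}(z_i)$ for i.i.d.\ $z_i \sim \eta^{*}$, so that $\rho_n = T^{*}_{\push}\eta^{*}_n$ with $\eta^{*}_n = \tfrac{1}{n}\sum_i \delta_{z_i}$.

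Next I would apply \cref{lem:formal-uniform-upper-bound-space-of-functions} with $T' = T^{*}$, $\nu_1 = \eta^{*}$ and $\nu_2 = \eta^{*}_n$, yielding, uniformly in $(T,\eta) \in \T \times \hh$,
\[
\bigl|\sink(T_{\push}\eta, \rho) - \sink(T_{\push}\eta, \rho_n)\bigr| \;\leq\; \sup_{f\in\F_{\sigma,\tau,L}}\Bigl|\int f\,d\eta^{*} - \int f\,d\eta^{*}_n\Bigr|.
\]
The essential point is that the right-hand side is independent of $T$ and $\eta$, so the outer supremum can be passed through for free. Taking expectations, the problem reduces to controlling the expected IPM of $\eta^{*}-\eta^{*}_n$ against the class $\F_{\sigma,\tau,L}$.

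For this IPM I would use the remark preceding \cref{thm:sample_complexity}: setting $s = \lceil k/2\rceil + 1$, every $f \in \F_{\sigma,\tau,L}$ lies in $\kappa \cdot \F^{s}$ with rescaling $\kappa$ of order $(\tau L)^{s} + (\sigma L)^{3s}\tau^{s}$. Hence
\[
\E\sup_{f\in\F_{\sigma,\tau,L}}\Bigl|\int f\,d(\eta^{*}-\eta^{*}_n)\Bigr| \;\leq\; \kappa \cdot \E\sup_{f\in\F^{s}}\Bigl|\int f\,d(\eta^{*}-\eta^{*}_n)\Bigr|.
\]
The second factor is controlled by following the argument of \cite[Thm.~2]{mena2019statistical}: a covering-number estimate on $\F^{s}$ restricted to balls in $\R^{k}$, combined with sub-Gaussian tail bounds for $\eta^{*}$ (with $\sigma$ appearing only through a polynomial factor), gives a bound of the form $\msf C_{k}\sigma^{\alpha_k}/\sqrt{n}$ with $\msf C_k$ and $\alpha_k$ depending only on the latent dimension $k$ and on the fixed smoothness $s$.

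The main obstacle I anticipate is propagating the explicit $\varepsilon$ dependence, since \cref{lem:bounds_pot1} and \cref{lem:bounds_pot2} are stated for $\varepsilon = 1$. For general $\varepsilon$ I would use the scaling identity $\sink(\mu,\nu) = \varepsilon \cdot \mathsf{S}_{1, c/\varepsilon}(\mu,\nu)$ and carry the resulting $\varepsilon^{-1}$ factors through the polynomial bounds on the Sinkhorn potentials and their derivatives. Each derivative up to order $s = \lceil k/2\rceil + 1$ contributes additional powers of $(\sigma L)/\sqrt{\varepsilon}$ via \cref{lemma:bound2_neat}, and the cumulative effect, once the rescaled bound $\kappa \cdot \F^{s}$ is combined with the Mena--Niles--Weed covering-number rate on $\F^{s}$, produces the stated exponents $\lceil 5k/4\rceil + 3$ in $\varepsilon^{-1}$ and $\lceil 5k/2\rceil + 6$ in $\sigma$ in the final constant $\msf c(\tau,L,\sigma,k)$. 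The remainder of the proof is then a direct combination of the three preceding steps with this $\varepsilon$-bookkeeping.
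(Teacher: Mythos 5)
Your proposal is correct and follows essentially the same route as the paper: identify $\rho_n = T^*_\push\eta^*_n$, apply the uniform bound over potentials (Lemma~\ref{lem:formal-uniform-upper-bound-space-of-functions}, which the paper obtains via Lemma~\ref{lemma:equivalence_push_change_cost} and the Mena--Niles-Weed Prop.~2) to reduce to a $T,\eta$-independent IPM over $\F_{\sigma,\tau,L}$, pass the outer supremum through, rescale into $\F^s$, and then invoke the covering-number argument of \cite[Thm.~2]{mena2019statistical}. Your explicit $\varepsilon$-scaling via $\sink = \varepsilon\,\mathsf{S}_{1,\,c/\varepsilon}$ is a reasonable way to fill in the bookkeeping that the paper leaves implicit when it states ``and then obtain the bound for the general case.''
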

\begin{proof}
We first set $\eps=1$ and consider $\mathsf{S}$, and then obtain the bound for the general case.
For a given $T\in \T$ and $\eta\in\hh$, by  \cite[Prop 2]{mena2019statistical}, and \cref{lemma:equivalence_push_change_cost} we have
\eqals{
&\abs{\mathsf{S}({T}_{\#}\eta, \rho_n) -  \mathsf{S}({T}_{\#}\eta, \rho)}=\abs{\mathsf{S}({T}_{\#}\eta, T^*_\push \eta^*_n) -  \mathsf{S}({T}_{\#}\eta,  T^*_\push \eta^*)}\\
= &\abs{\mathsf{S}_{\cost_{\subt, \subt^*}} (\eta, \eta^*_n) -
\mathsf{S}_{ \cost_{\subt, \subt^*}} (\eta, \eta^*)}\leq 
\sup_{f\in\mathcal{F}_{\sigma, L, \tau}} \abs{\int_{\Z} f\,d (\eta^*_n - \eta^*)}.
}
Note that the set $\mathcal{F}_{\sigma, L, \tau}$ is independent of the specific $\eta,\eta^*$ and $T,T^*$ and depends only on the properties of the classes that we consider, i.e. $\sigma^2$-sub-gaussianity and boundedness $L$ and smoothness $\tau$ for functions in $\T$. Thus, we can take the supremum in the left hand side over $\eta\in\hh$ and $T\in\T$. Hence, we can take the supremum over $T$ and $\eta$ on the left hand side, namely
\eqals{
\sup_{T\in\T, \eta\in\hh}\abs{\mathsf{S}({T}_{\#}\eta, \rho_n) -  \mathsf{S}({T}_{\#}\eta, \rho)}\leq 
\sup_{f\in\mathcal{F}_{\sigma, L, \tau}} \abs{\int_{\Z} f\,d (\eta^*_n - \eta^*)}.
}
From now on,  recalling that  for any $f\in\F_{\sigma,\tau,L}$ the function $\frac{1}{(\tau L)^{s}+(\sigma L)^{3s}\tau^s} f$ belongs to $\F^s$, from now on the proof is identical to the proof of \cite[Thm. 2]{mena2019statistical} and it leads to the following bound for any $\eps$:
\eqals{
  \E \sup_{T\in\T, \eta\in\hh}\abs{\sink({T}_{\#}\eta, \rho_n) -  \sink({T}_{\#}\eta, \rho)} ~\leq~ \frac{\msf c(\tau,L,\sigma,k)}{\sqrt{n}}
}
where $\msf c(\tau,L,\sigma,k) ~=~  \msf C_{k}~ (\tau L)^{\lceil \frac{k}{2} \rceil+1}\big(1 + L^{k+2}(1 + \sigma^{\lceil \frac{5k}{2} \rceil + 6 } ) ~\eps^{-\lceil \frac{5k}{4}  \rceil - 3 }\big)$
 with $\msf C_k$ a constant depending only on the latent space dimension $k$.
\end{proof}
\section{Learning Rates}\label{app:rates}
We provide here a formal statement of \cref{thm:main-rates}. 
\begin{theorem}
Let $\Z\subset\R^k$, $\X\subset\R^d$ and $\rho = T^*_\push \eta^*$ with  $T^*\in\T \subset C_{\tau,L}^{\lceil k/2\rceil + 1}(\Z,\X)$ and $\eta^*\in\hh \subset \mathcal{G}_\sigma(\Z)$. Let $(\hat T,\hat \eta)$ satisfy \cref{eq:erm-joint-gan} with $\msf{d}_\F = \sink$ and $\rho_n$ a sample of $n$ i.i.d. points from $\rho$. Then,
\eqals{
    \mathbb{E}~\sink(\hat T_\push\hat\eta,\rho) ~\leq~ \frac{\msf c(\tau,L,\sigma,k)}{\sqrt{n}}
}
where $\msf c(\tau,L,\sigma,k) ~=~  \msf C_{k}~ (\tau L)^{\lceil \frac{k}{2} \rceil+1}\big(1 + L^{k+2}(1 + \sigma^{\lceil \frac{5k}{2} \rceil + 6 } ) ~\eps^{-\lceil \frac{5k}{4}  \rceil - 3 }\big)$
 with $\msf C_k$ a constant depending only on the latent space dimension $k$ and where the expectation is taken with respect to $\rho_n$.
\end{theorem}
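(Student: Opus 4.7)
The plan is to follow the three-step strategy outlined in the paper's own sketch: reduce the statistical error to a uniform generalization gap, reduce that gap to an IPM over a carefully chosen function class on the latent space, and then apply a covering-number bound on the latent distribution.

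First, I would exploit the fact that $(T^*, \eta^*)$ is feasible in the joint problem \cref{eq:erm-joint-gan}, so $\sink(\hat T_\push\hat\eta, \rho_n) \leq \sink(T^*_\push\eta^*, \rho_n) = \sink(\rho, \rho_n)$. Writing
\[
\sink(\hat T_\push\hat\eta, \rho) = \bigl[\sink(\hat T_\push\hat\eta,\rho) - \sink(\hat T_\push\hat\eta,\rho_n)\bigr] + \bigl[\sink(\hat T_\push\hat\eta,\rho_n) - \sink(\rho,\rho_n)\bigr] + \sink(\rho,\rho_n),
\]
bounding the middle bracket by zero via optimality, using $\sink(\rho,\rho_n) \leq \sup_{T,\eta}|\sink(T_\push\eta,\rho) - \sink(T_\push\eta,\rho_n)|$ (since $(T^*,\eta^*)$ is in the sup), and absorbing the first bracket into the same supremum, gives
\[
\sink(\hat T_\push\hat\eta,\rho) \leq 2 \sup_{T\in\T,\eta\in\hh} \abs{\sink(T_\push\eta,\rho) - \sink(T_\push\eta,\rho_n)}.
\]
This is exactly \cref{eq:upper-bound-sample-complexity}.

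Second, I would reduce the supremum on the right to a single IPM on the latent space. The crucial observation is that since $\rho = T^*_\push\eta^*$ and the sample points $x_i = T^*(z_i)$ for $z_i \sim \eta^*$ i.i.d., we have $\rho_n = T^*_\push \eta^*_n$ with $\eta^*_n = \tfrac{1}{n}\sum_{i=1}^n \delta_{z_i}$. Applying \cref{lem:formal-uniform-upper-bound-space-of-functions} (the formal version of the informal \cref{lem:informal-uniform-upper-bound-space-of-functions}) pointwise in $T, \eta$ with $T' = T^*$, $\nu_1 = \eta^*$ and $\nu_2 = \eta^*_n$, and then taking the supremum, yields the bound
\[
\sup_{T\in\T,\eta\in\hh}\abs{\sink(T_\push\eta,\rho) - \sink(T_\push\eta,\rho_n)} \leq \sup_{f\in\F_{\sigma,\tau,L}} \abs{\int f\, d\eta^* - \int f\, d\eta^*_n},
\]
where the function class $\F_{\sigma,\tau,L}$ on $\Z$ is independent of the specific $(T,\eta)$, depending only on the complexity parameters. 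This is \cref{thm:sample_complexity}. The key technical content — namely the uniform bounds on the Sinkhorn potentials (\cref{lemma:bound1_neat}) and their derivatives (\cref{lemma:bound2_neat}) under the cost $\cost_{T,T^*}$ — is already established in \cref{app:technical-results}.

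Third, I would take expectation over $\rho_n$ (equivalently over $\eta^*_n$) and bound $\mathbb{E}\,\msf{d}_{\F_{\sigma,\tau,L}}(\eta^*,\eta^*_n)$ by covering-number / empirical process techniques. Concretely, since every $f\in\F_{\sigma,\tau,L}$ satisfies, up to the factor $(\tau L)^s + (\sigma L)^{3s}\tau^s$ with $s = \lceil k/2\rceil + 1$, the uniform growth and derivative bounds defining $\F^s$, one may invoke the empirical-process bound of \cite[Thm.~2]{mena2019statistical} for sub-Gaussian measures against the class $\F^s$, giving a rate of $O(1/\sqrt{n})$ with prefactor $\msf C_k \sigma^{\lceil 5k/2\rceil + 6} \eps^{-\lceil 5k/4\rceil - 3}$. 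Multiplying by the rescaling constant yields exactly $\msf c(\tau,L,\sigma,k)/\sqrt{n}$ as in the theorem statement.

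The main obstacle is the second step. The standard sample-complexity results for the Sinkhorn divergence control $|\sink(\mu,\rho) - \sink(\mu,\rho_n)|$ for a \emph{fixed} $\mu$, whereas here we must bound the gap uniformly over an infinite family $\{T_\push\eta : T\in\T, \eta\in\hh\}$. Converting this to an IPM on the latent space requires reformulating $\sink$ via the changed cost $\cost_{T,T^*}$ (\cref{lemma:equivalence_push_change_cost}) and then producing potential bounds whose \emph{shape} (polynomial-in-$\|z\|$ growth with an $\sigma$-dependent coefficient) does not depend on the particular $T$ or $\eta$, only on the ball radius $\tau$, Lipschitz constant $L$, and sub-Gaussian parameter $\sigma$. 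Once this $(T,\eta)$-free function class $\F_{\sigma,\tau,L}$ is in hand, the rest is a by-now-standard appeal to covering-number estimates.
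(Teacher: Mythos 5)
Your proposal is correct and follows essentially the same route as the paper's own proof: the three-term decomposition you write is exactly the paper's $A_1+A_2+A_3$ (with $A_2\le 0$ by optimality of $(\hat T,\hat\eta)$ and $A_1+A_3$ absorbed into the two-sided supremum), and the reduction via \cref{lemma:equivalence_push_change_cost} and \cref{lem:formal-uniform-upper-bound-space-of-functions} to the latent IPM followed by the rescaling-to-$\F^s$ covering-number step is precisely \cref{thm:sample_complexity}. No gaps.
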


\begin{proof} We decompose the error as follows
\begin{align}\label{eq:part1}
         \sink(\hat T_\push \hat \eta, \rho) - \sink(T^*_{\push}\eta^*, \rho)&  = A_1 + A_2+ A_3
\end{align}
where
\begin{align}
    & A_1 =   \sink(\hat T_{\push}\hat \eta, \rho) - \sink(\hat T_{\push}\hat \eta, \rho_n)\\
    & A_2 =  \sink(\hat T_{\push}\hat \eta, \rho_n) -  \sink({T}^*_{\push}\eta^*, \rho_n) \\
    & A_3 =     \sink({T}^*_{\push}\eta^*, \rho_n) - \sink(T^*_{\push}\eta^*, \rho).
\end{align}
Note that by optimality of $\hat T$ and $\hat \eta$, $A_2\leq 0$.
Now, 
\eqals{\label{eq:part2}
   A_1 + A_3 \leq 2 \sup_{T\in\T,\eta\in\hh} \Big[\sink({T}_{\push}\eta, \rho_n) -  \sink({T}_{\push}\eta, \rho)\Big].
}
 Applying \cref{thm:sample_complexity} and combining it with \cref{eq:part1} and \cref{eq:part2} yields the desired result.
\end{proof}

\subsection{Perturbation case}\label{app:perturbation}

We conclude this section extending \cref{thm:main-rates}  to the case where the GAN model is accurate up to a perturbation of the pushforward measure in terms of a subgaussian distribution.

\begin{lemma}[Pushforward of a sub-Gaussian measure]\label{lem:pushforward-of-subgaussian}
Let $T:\Z\to\X$ be a Lipschitz continuous map from $\Z\subset\R^k$ to $\X\subset\R^d$ with Lipschitz constant $L$ and such that $T(0) = 0$. Let $\eta\in\mathcal{G}_\sigma(\Z)$. Then $T_\push\eta\in\mathcal{G}_{\sigma_L}(\X)$ with $\sigma_L = L\sqrt{k/d}$.
\end{lemma}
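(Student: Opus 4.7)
The goal is to verify the moment generating condition defining $\mathcal{G}_{\sigma_L}(\X)$, namely
\[
\int_{\X} e^{\|x\|^2/(2d\sigma_L^2)}\,d(T_\push\eta)(x) \leq 2,
\]
and the natural tool is the Transfer Lemma \cref{eq:transfer_lemma}, which converts the integral on $\X$ into an integral on $\Z$ against $\eta$. After that, the argument is a one-line computation.

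The plan is as follows. First, I apply the Transfer Lemma with $f(x) = e^{\|x\|^2/(2d\sigma_L^2)}$ to obtain
\[
\int_{\X} e^{\|x\|^2/(2d\sigma_L^2)}\,d(T_\push\eta)(x) \;=\; \int_{\Z} e^{\|T(z)\|^2/(2d\sigma_L^2)}\,d\eta(z).
\]
Next, I invoke the two regularity hypotheses on $T$: since $T$ is $L$-Lipschitz and $T(0) = 0$, we have the pointwise bound $\|T(z)\| \leq L\|z\|$, and since the exponential is monotone increasing in its (nonnegative) argument, this upgrades to
\[
\int_{\Z} e^{\|T(z)\|^2/(2d\sigma_L^2)}\,d\eta(z) \;\leq\; \int_{\Z} e^{L^2\|z\|^2/(2d\sigma_L^2)}\,d\eta(z).
\]
The final step is to pick $\sigma_L$ so that the exponent on the right matches the one appearing in the sub-Gaussian definition $\int e^{\|z\|^2/(2k\sigma^2)}\,d\eta(z)\leq 2$ satisfied by $\eta\in\mathcal{G}_\sigma(\Z)$. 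A direct comparison requires $L^2/(2d\sigma_L^2) \leq 1/(2k\sigma^2)$, i.e. $\sigma_L^2 \geq L^2 \sigma^2 k/d$, which is exactly (up to the typographical handling of $\sigma$) the stated value $\sigma_L = L\sigma\sqrt{k/d}$. With this choice the right-hand side is bounded by $2$, establishing $T_\push\eta \in \mathcal{G}_{\sigma_L}(\X)$.

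There is no real obstacle here: the only two ingredients are the Transfer Lemma and the Lipschitz-plus-anchoring estimate, and the choice of $\sigma_L$ is forced by matching the exponents. The one subtle point worth flagging is the role of the ambient dimensions $k$ and $d$ in the normalization of the sub-Gaussian definition, which is why $\sigma_L$ picks up the dimensional factor $\sqrt{k/d}$ rather than being simply $L\sigma$; verifying the exponent bookkeeping is the entire content of the proof.
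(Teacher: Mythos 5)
Your proof is correct and takes essentially the same route as the paper: apply the Transfer Lemma, invoke the Lipschitz-plus-anchoring bound $\nor{T(z)}\leq L\nor{z}$, and choose $\sigma_L$ to match exponents in the sub-Gaussian definition. You also correctly flagged that the exponent bookkeeping forces $\sigma_L = L\sigma\sqrt{k/d}$ rather than the $L\sqrt{k/d}$ written in the lemma statement; the missing factor of $\sigma$ is a typo in the paper (the corrected constant $L\sigma\sqrt{k/d}$ is in fact what the authors use when they apply this lemma later in the perturbation corollary).
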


\begin{proof}
The result follows by observing that, for any $\sigma_0$ we have
\eqals{
    \int_\X e^{\frac{\nor{x}^2}{2d\sigma_0^2} }~d(T_\push\eta)(x) & = \int_\Z e^{\frac{\nor{T(z)}^2}{2d\sigma_0^2} }~d\eta(z) \leq \int_\Z e^{\frac{L^2\nor{z}^2}{2d\sigma_0^2} }~d\eta(z).
}
Choosing $\sigma_0 = L\sqrt{k/d}$ yields the required upper bound. 
\end{proof}

\begin{lemma}[Convolution of two sub-Gaussian measures]
\label{lem:product-of-two-subgaussian}
Let $\sigma_1,\sigma_2>0$, $\mu\in\mathcal{G}_{\sigma_1}(\X)$ and $\rho\in\mathcal{G}_{\sigma_2}(\X)$. Then $\mu\ast\rho\in\mathcal{G}_{2\bar\sigma}$ with $\bar\sigma =  \max(\sigma_1,\sigma_2)$.
\end{lemma}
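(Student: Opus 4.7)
The goal is to verify the defining inequality of $\mathcal{G}_{2\bar\sigma}(\X)$ for $\mu\ast\rho$, namely
\eqals{
\int_\X e^{\nor{x}^2/(2d(2\bar\sigma)^2)}~d(\mu\ast\rho)(x) ~\leq~ 2.
}
My first step is a simple monotonicity observation: the family $\mathcal{G}_\sigma(\X)$ is nested in $\sigma$ (a smaller exponent in the exponential makes the moment generating bound easier), so $\mu\in\mathcal{G}_{\bar\sigma}(\X)$ and $\rho\in\mathcal{G}_{\bar\sigma}(\X)$. This lets me reduce the proof to the symmetric case $\sigma_1=\sigma_2=\bar\sigma$.

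Next, using the definition of convolution together with the elementary inequality $\nor{x+y}^2\leq 2\nor{x}^2+2\nor{y}^2$, I would bound
\eqals{
\int_\X e^{\nor{x}^2/(8d\bar\sigma^2)}~d(\mu\ast\rho)(x) ~=~ \int\!\!\int e^{\nor{x+y}^2/(8d\bar\sigma^2)}~d\mu(x)d\rho(y)~\leq~ \int e^{\nor{x}^2/(4d\bar\sigma^2)}d\mu(x)\cdot \int e^{\nor{y}^2/(4d\bar\sigma^2)}d\rho(y),
}
which Fubini allows since the integrand is nonnegative and the factorization is by design. The key observation is now that we have a factor of $4d\bar\sigma^2$ in the denominator instead of the $2d\bar\sigma^2$ that appears in the sub-Gaussian definition. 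This is exactly where the multiplicative constant $2$ in $2\bar\sigma$ pays off: writing $e^{\nor{x}^2/(4d\bar\sigma^2)}=\bigl(e^{\nor{x}^2/(2d\bar\sigma^2)}\bigr)^{1/2}$ and applying Jensen's inequality (equivalently, Cauchy--Schwarz against the constant $1$) with the concave function $t\mapsto\sqrt{t}$, I obtain
\eqals{
\int e^{\nor{x}^2/(4d\bar\sigma^2)}~d\mu(x) ~\leq~ \Bigl(\int e^{\nor{x}^2/(2d\bar\sigma^2)}~d\mu(x)\Bigr)^{1/2} ~\leq~ \sqrt{2},
}
and identically $\int e^{\nor{y}^2/(4d\bar\sigma^2)}d\rho(y)\leq\sqrt{2}$. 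Multiplying the two estimates gives the desired bound of $2$.

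This argument is almost entirely routine; there is no real obstacle once the right decoupling inequality is used. The only subtle point is exactly the choice of the constant $2$ in $2\bar\sigma$: a naive decoupling via $\nor{x+y}^2\leq 2(\nor{x}^2+\nor{y}^2)$ followed by the sub-Gaussian bound on each factor would only yield the constant $4$; the Jensen/Cauchy--Schwarz trick above is what restores the correct bound of $2$. If one wanted a sharper constant than $2\bar\sigma$, one could trade it off using Young's inequality $\nor{x+y}^2\leq (1+\varepsilon)\nor{x}^2+(1+\varepsilon^{-1})\nor{y}^2$ and a Hölder step with conjugate exponents, but $2\bar\sigma$ is sufficient for the subsequent use in the approximation analysis.
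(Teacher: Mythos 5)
Your proof is correct and follows essentially the same route as the paper's: both decouple $\nor{x+y}^2\leq 2\nor{x}^2+2\nor{y}^2$ and then use a square-root step to reduce each single-variable factor below $\sqrt{2}$ rather than $2$ (you via Jensen on $t\mapsto\sqrt t$, the paper via Cauchy--Schwarz on the product integral --- the same inequality in slightly different clothing). Your preliminary reduction to $\sigma_1=\sigma_2=\bar\sigma$ via the monotonicity of $\mathcal{G}_\sigma$ is a clean reorganization of the same estimate and not a substantively different argument.
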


\begin{proof}
For any $\sigma>0$ we have 
\eqals{
    \int e^{\frac{\nor{x}^2}{2d\sigma^2} } ~d(\mu\ast\rho)(x) & = \int e^{\frac{\nor{y+w}^2}{2d\sigma^2} } ~d\mu(y)d\rho(w)\\
    & \leq \int e^{\frac{\nor{y}^2+\nor{w}^2}{ d\sigma^2} } ~d\mu(y)d\rho(w)\\
    & \leq \Bigg(\int e^{\frac{2\nor{y}^2}{ d\sigma^2} } ~d\mu(y)\Bigg)^{1/2} \Bigg(\int e^{\frac{2\nor{w}^2}{ d\sigma^2} } ~d\rho(w)\Bigg)^{1/2}.
}
Now, if $\sigma\geq 2\sigma_1$, we have
\eqals{
    \int e^{\frac{2\nor{y}^2}{ d\sigma^2} } ~d\mu(y) & \leq \int e^{\frac{\nor{y}^2}{ 2d\sigma_1^2} } ~d\mu(y)\leq 2.
}
Analogously for $\sigma\geq 2\sigma_2$. Therefore by taking $\sigma = 2\bar\sigma$ with $\bar\sigma = \max(\sigma_1,\sigma_2)$, we have 
\eqals{
\int e^{\frac{\nor{x}^2}{2d\sigma^2} } ~d(\mu\ast\rho)(x) \leq 2
}
as required.
\end{proof}

\begin{lemma}[Perturbation]
\label{lemma:perturbation}
Let $\mu\in\mathcal{G}_\sigma$ with $\sigma\geq1$. Let $\Phi_\sig\in\mathcal{G}_\sig$ for $0\leq\sig\leq\sigma$. Then, for any $\nu\in\mathcal{G}_{2\sigma}$, we have
\eqals{
    \abs{\sink(\nu,\Phi_\sig\ast\mu) - \sink(\nu,\mu)} \leq \msf{c}_1(\sigma,d) ~ \sig
}
with
\eqals{
    \msf{c}_1(\sigma,d) = 4d^{3/2}\sigma (1 + C_{1,d}4\sigma^2(1+2\sigma))~ \sig
}
and $C_{1,d}$ a constant depending only on the ambient dimension $d$.
\end{lemma}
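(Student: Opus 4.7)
The plan is to decompose $\sink(\nu,\Phi_\sig\ast\mu) - \sink(\nu,\mu)$ into its constituent entropic OT terms (using \cref{eq:sink_divergence}) and then bound each difference via the dual representation of $\oteps$ combined with the potential estimates of \cref{lem:bounds_pot1} and \cref{lem:bounds_pot2}. First I would write
\[
\sink(\nu,\Phi_\sig\ast\mu) - \sink(\nu,\mu) ~=~ A_1 - \tfrac{1}{2}A_2,
\]
where $A_1 = \oteps(\nu,\Phi_\sig\ast\mu) - \oteps(\nu,\mu)$ and $A_2 = \oteps(\Phi_\sig\ast\mu,\Phi_\sig\ast\mu) - \oteps(\mu,\mu)$. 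The autocorrelation term $A_2$ can be handled by applying the same argument as for $A_1$ once per marginal, so the real task is to bound $|A_1|$ linearly in $\sig$.

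To control $A_1$, let $(u^*,v^*)$ be a pair of optimal dual potentials for $\oteps(\nu,\mu)$. Plugging $(u^*,v^*)$ into the dual \cref{eq:dual_pb} for $\oteps(\nu,\Phi_\sig\ast\mu)$ gives a one-sided inequality; subtracting the optimal value attained at $(u^*,v^*)$ for $\oteps(\nu,\mu)$ yields
\[
A_1 ~\geq~ \int \bigl(v^*(y+w)-v^*(y)\bigr)\,d\mu(y)\,d\Phi_\sig(w) ~-~ \eps\,R,
\]
where the residual $R$ collects the difference of two exponential terms that differ only by the substitution $y \mapsto y+w$. The matching reverse inequality is obtained symmetrically by plugging potentials optimal for $\oteps(\nu,\Phi_\sig\ast\mu)$ into the dual of $\oteps(\nu,\mu)$. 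It then suffices to estimate both the linear and the exponential contributions in absolute value.

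For the linear piece, I would apply \cref{lem:bounds_pot2} with $T=V=\mathrm{Id}$ to obtain $\|\nabla v^*(y)\| \leq C(1+\|y\|)$ with polynomial dependence on $\sigma$ and $d$. A first-order Taylor expansion gives $|v^*(y+w)-v^*(y)|\leq C(1+\|y\|+\|w\|)\|w\|$, and integrating against $\mu\in\mathcal{G}_\sigma$ and $\Phi_\sig\in\mathcal{G}_\sig$ with the moment bounds of \cref{lem:bound_on_moment_subg} yields a quantity proportional to $\sig$. For the residual $R$, I would apply $|e^a-e^b|\leq e^{\max(a,b)}|a-b|$: the exponential prefactor is uniformly bounded because \cref{lem:product-of-two-subgaussian} and $\sig\leq\sigma$ place $\Phi_\sig\ast\mu$ in $\mathcal{G}_{2\sigma}$ and \cref{lem:bounds_pot1} then bounds the exponent on average; the argument $|a-b|$ is again linear in $w$ by the Lipschitz estimates on $v^*$ together with the explicit gradient $\nabla_y\tfrac{1}{2}\|x-y\|^2 = y-x$ of the quadratic cost.

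The main obstacle is that the Lipschitz estimate for $v^*$ coming from \cref{lem:bounds_pot2} is not uniform in $y$ but grows polynomially with $\|y\|$. This forces a split of the integration domain at $\|y\|=\sqrt{k}\sigma$ and a separate tail estimate, where the sub-Gaussian moments of $\mu$ tame the growing coefficient. A second delicate point is ensuring that after the shift $y\mapsto y+w$ the exponent $(u^*+v^*-\cost)/\eps$ stays uniformly controlled so that the pointwise exponential estimate does not explode; this uses the bounds on the potentials \cref{lem:bounds_pot1} combined with the sub-Gaussian absorption of \cref{lem:product-of-two-subgaussian}. Once these two regimes are assembled and the argument is symmetrized to obtain a two-sided bound, one recovers the advertised inequality with the polynomial constant $\msf{c}_1(\sigma,d)$ in the stated form.
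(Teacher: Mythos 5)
Your overall strategy is in the right spirit, but it takes a longer path than the paper and the longer path has a soft spot.

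The paper does not re-derive the dual comparison. It invokes \cref{lem:formal-uniform-upper-bound-space-of-functions} (with $T=T'=\mathrm{Id}$, $\eta=\nu$, $\nu_1=\Phi_\sig\ast\mu$, $\nu_2=\mu$) to jump immediately to
\[
\abs{\sink(\nu,\Phi_\sig\ast\mu)-\sink(\nu,\mu)} \;\leq\; \sup_{f\in\F_{2\sigma}}\abs{\int \bigl(f(x+w)-f(x)\bigr)\,d\mu(x)\,d\Phi_\sig(w)},
\]
and then spends the rest of the proof on the same Lipschitz-plus-moments calculation you describe (fundamental theorem of calculus, gradient bound $\|\nabla f(x)\|_\infty \leq \msf{c}_0(\sigma,d)(1+\|x\|)$ from \cite[Prop.~1]{mena2019statistical}, sub-Gaussian moments from \cref{lem:bound_on_moment_subg}). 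So your final steps coincide with the paper's; the divergence is entirely at the start.

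The concrete issue with your route is the exponential residual $R$. What makes \cref{lem:formal-uniform-upper-bound-space-of-functions} (and behind it \cite[Cor.~2]{mena2019statistical}) efficient is the Sinkhorn normalization: the potentials are chosen so that $\int e^{(u^*(x)+v^*(y)-\cost(x,y))/\eps}\,d\mu(y)=1$ for a.e.\ $x$, which collapses the dual objective to $\int u^*\,d\nu + \int v^*\,d\mu - \eps$ and makes the comparison a pure IPM with no exponential term to chase. Your plan instead proposes to bound $R$ pointwise via $|e^a-e^b|\leq e^{\max(a,b)}|a-b|$. This is delicate: the exponent $(u^*+v^*-\cost)/\eps$ is not uniformly bounded, and while you can average it against $\nu\otimes\mu$ using \cref{lem:bounds_pot1}, you would be multiplying two quantities that both depend on $(x,y,w)$, and it is not clear you can separate them without losing the clean linear dependence on $\sig$. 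The normalization trick is precisely what avoids this entanglement, and your sketch does not invoke it. Separately, your explicit decomposition into cross-term $A_1$ and autocorrelation term $A_2$ is correct but redundant: \cref{lem:formal-uniform-upper-bound-space-of-functions} already handles both parts of $\sink$ simultaneously, so repeating the argument ``once per marginal'' duplicates work that the paper's lemma has already absorbed. If you replace your dual-manipulation step by a direct appeal to \cref{lem:formal-uniform-upper-bound-space-of-functions}, the rest of your argument matches the paper's and the gap closes.
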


\begin{proof}
Since $\delta\leq\sigma$, by applying \cref{lem:product-of-two-subgaussian} we have $\Phi_\sig\ast\mu\in\mathcal{G}_{2\sigma}$ and $\mu\in\mathcal{G}_{\sigma}\subset\mathcal{G}_{2\sigma}$. Therefore, we apply \cref{lem:formal-uniform-upper-bound-space-of-functions} to control
\eqals{
    \abs{\sink(\nu,\Phi_\delta\ast\mu) - \sink(\nu,\mu)} & \leq \sup_{f\in\F_{2\sigma}}~ \int f(x+w)~d\mu(x)d\Phi_\delta(w) - \int f(x)~d\mu(x)\\
    & = \sup_{f\in\F_{2\sigma}}~ \int \big(f(x+w) -  f(x)\big)~d\Phi_\delta(w)d\mu(x)\label{eq:perturbation-inequaltiy-intermediate}.
}
Note that for any $x,w\in\X$ we can define $H:[0,1]\to\R$ such that for any $t\in[0,1]$
\eqals{
    H(t) = f(x+tw).
}
Then, by the fundamental theorem of calculus we have
\eqals{
    \int_0^1 H'(t) ~dt & = H(1)-H(0) \\
    & = f(x) - f(x+w).
}
Now 
\eqals{
    H'(t) = \scal{\nabla f(x+tw)}{w},
}
which implies
\eqals{
    \abs{f(x+w)-f(x)} & \leq \int_0^1 \nor{\nabla f(x+tw)}\nor{w}~ dt\\
    & \leq \sqrt{d}\nor{w}\int_0^1 \nor{\nabla f(x+tw)}_\infty.
}
Now, by direct application of \cite[Prop. $1$]{mena2019statistical} for the functions in $\F_{2\sigma}$, we have
\eqals{
    \abs{D^1 f(x)} \leq \begin{cases} \nor{x} + C_{1,d}4\sigma^2(1+2\sigma) & \textrm{if} \quad \nor{x} \leq \sqrt{d}\sigma\\
    \nor{x}(1 + C_{1,d}2^{3/2}\sigma^{3/2}(1+(2\sigma)^{1/2})) & \textrm{otherwise.}
    \end{cases}
}
Therefore, since $\sigma>1$, 
\eqals{
    \nor{\nabla f(x)}_\infty \leq \msf{c}_0(\sigma,d)(1 + \nor{x}) \qquad \textrm{with} \qquad \msf{c}_0(\sigma,d) =  1 + C_{1,d}4\sigma^2(1+2\sigma).
}
We can therefore bound 
\eqals{
    \int_0^1\norm{\nabla f(x+tw)}~dt & \leq \msf{c}_0(\sigma,d)\int_0^1 \nor{x + tw}~dt\\
    & \leq \msf{c}_0(\sigma,d) (\nor{x} + \nor{w}).
}
Therefore, for any $x,w\in\X$, \eqals{
    \abs{f(x+w)-f(x)} \leq \sqrt{d}\msf{c}_0s(\sigma,d)(\nor{x}\nor{w} + \nor{w}^2).
}
Plugging this inequality in \cref{eq:perturbation-inequaltiy-intermediate}, we have
\eqals{
    \sup_{f\in\F_{\sigma}}~ \int f(x+w) -  f(x)~d\mu(x) & \leq \sqrt{d}\msf{c}_0(\sigma,d)\int \nor{x}\nor{w} + \nor{w}^2~d\mu(x)d\Phi_\sig(w)\\
    & \leq 2d^{3/2}\msf{c}_0(\sigma,d)~ \sig (\sigma + \sig)
}
where we have used \cref{lem:bound_on_moment_subg} in the last inequality. Since $\delta\leq\sigma$, the above inequality yields the required result. 
\end{proof}

We are ready to prove our result on perturbed GAN models

\begin{corollary}[Formal version of \cref{cor:perturbation}]\label{cor:formal-perturbation}
Under the same assumption of \cref{thm:main-rates}, let $\Phi_\sig\in\mathcal{G}_\delta(\X)$ and $\rho = \Phi_\sig\ast T^*_\push\eta^*$. Let $\msf c$ be the same constant of \cref{thm:main-rates}. Then,
\eqals{
    \mathbb{E}~\sink(\hat T_\push \hat \eta, \rho) ~\leq~ \frac{2\,\msf c(\tau,L,\sigma,k)}{\sqrt{n}} ~+~ 3\msf{c}_1(L\sigma\sqrt{k/d},d)~\delta.
}
with $\msf{c}_1(L\sigma\sqrt{k/d},d)$ the constant defined in \cref{lemma:perturbation}. 
\end{corollary}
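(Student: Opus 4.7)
The plan is to mirror the decomposition underlying the proof of \cref{thm:main-rates}, augmenting it with one extra trailing term that captures the irreducible error due to the noise $\Phi_\delta$. Write $\mu^* = T^*_\push \eta^*$ so that $\rho = \Phi_\delta \ast \mu^*$. Adding and subtracting,
\eqals{
\sink(\hat T_\push \hat \eta,\rho) \leq \;& [\sink(\hat T_\push \hat \eta,\rho) - \sink(\hat T_\push \hat \eta,\rho_n)] \\ &+ [\sink(\hat T_\push \hat \eta,\rho_n) - \sink(\mu^*,\rho_n)] \\ &+ [\sink(\mu^*,\rho_n) - \sink(\mu^*,\rho)] + \sink(\mu^*,\rho).
}
The middle bracket is nonpositive by the ERM optimality of $(\hat T,\hat \eta)$ since $(T^*,\eta^*) \in \T\times\hh$, exactly as in the proof of \cref{thm:main-rates}. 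The first and third brackets are each dominated by the uniform generalization gap $\sup_{T\in\T,\eta\in\hh}|\sink(T_\push \eta,\rho)-\sink(T_\push \eta,\rho_n)|$; the trailing term is the new piece.

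For the trailing term, we exploit $\sink(\mu^*,\mu^*)=0$ and the perturbation lemma \cref{lemma:perturbation} applied with $\mu=\nu=\mu^*$. Since \cref{lem:pushforward-of-subgaussian} gives $\mu^* \in \mathcal{G}_{\sigma_L}$ with $\sigma_L = L\sigma\sqrt{k/d}$, this yields $\sink(\mu^*,\rho) = |\sink(\mu^*, \Phi_\delta\ast\mu^*)-\sink(\mu^*,\mu^*)| \leq \msf c_1(\sigma_L,d)\,\delta$.

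For the uniform generalization gap, the potential bounds \cref{lemma:bound1_neat,lemma:bound2_neat} and the uniformization \cref{lem:formal-uniform-upper-bound-space-of-functions} remain available, since every $T_\push \eta$ lies in $\mathcal{G}_{\sigma_L}$ (\cref{lem:pushforward-of-subgaussian}) and $\rho$ is itself sub-Gaussian (\cref{lem:product-of-two-subgaussian}). The main obstacle is that \cref{lemma:equivalence_push_change_cost}, used inside \cref{thm:sample_complexity} to fold the outer pushforward into a latent-space cost, requires the target to be a clean pushforward of $\eta^*$; in the noisy setting $\rho$ is only such a pushforward after absorbing the noise. We bypass this by two further applications of \cref{lemma:perturbation} that replace $\rho$ by $\mu^*$ and $\rho_n$ by the analogous noiseless empirical built from the hidden pre-noise samples $z_i$ with $x_i = T^*(z_i)+w_i$, each insertion contributing an additional $\msf c_1(\sigma_L,d)\,\delta$; the resulting clean uniform gap is then bounded by $\msf c(\tau,L,\sigma,k)/\sqrt n$ exactly as in \cref{thm:main-rates}. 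Summing the four pieces yields $2\msf c(\tau,L,\sigma,k)/\sqrt n + 3\msf c_1(\sigma_L,d)\,\delta$, matching the claim.
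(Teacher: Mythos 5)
Your four-piece decomposition is an exact coarsening of the paper's six-piece decomposition (your piece~1 equals the paper's $A_1+A_2+A_3$, your piece~2 is $A_4$, and pieces~3+4 equal $A_5+A_6$), and your use of the optimality of $(\hat T,\hat\eta)$, of the perturbation lemma for the trailing term, and of the need to insert the clean measures $\mu^*=T^*_\push\eta^*$ and $\mu^*_n=T^*_\push\eta^*_n$ before invoking the sample-complexity machinery all match the paper's route. You also correctly flag the precise obstacle: \cref{lemma:equivalence_push_change_cost} needs a clean pushforward target, so one cannot fold $\rho$ and $\rho_n$ into the latent space directly.

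However, the accounting in the final step does not close as written. You bound \emph{each} of pieces~1 and~3 by the uniform gap $\sup_{T,\eta}|\sink(T_\push\eta,\rho)-\sink(T_\push\eta,\rho_n)|$, and you bound that uniform gap by $\msf c/\sqrt n+2\msf c_1\delta$ via two perturbation insertions. Doing this for both pieces~1 and~3, then adding piece~4 $\le \msf c_1\delta$, gives $2\msf c/\sqrt n + 5\msf c_1\delta$, not the claimed $2\msf c/\sqrt n + 3\msf c_1\delta$. To recover the factor $3$ you must exploit the sign in piece~3: writing
\eqals{
\sink(\mu^*,\rho_n)-\sink(\mu^*,\rho)
=\big[\sink(\mu^*,\rho_n)-\sink(\mu^*,\mu^*_n)\big]+\sink(\mu^*,\mu^*_n)-\sink(\mu^*,\rho),
}
the last term $-\sink(\mu^*,\rho)$ exactly cancels piece~4, so pieces~3+4 cost only $\msf c_1\delta+\msf c/\sqrt n$. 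This is precisely what the paper's finer split $A_5+A_6$ achieves: $A_5$ is a single perturbation insertion $\rho_n\rightsquigarrow\mu^*_n$ and $A_6=\sink(\mu^*,\mu^*_n)$ is already a clean sample-complexity term, so no second perturbation is needed on that side. If you replace ``piece~3 is bounded by the uniform gap'' with this signed telescoping, your argument gives the stated constant.
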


\begin{proof}
Let $\rho_n$ be the empirical sample used to obtain $(\hat T,\hat\eta)$. By definition of $\rho$, we have that $\rho_n$ corresponds to an empirical sample of points $(T^*(z_i)+w_i)_{i=1}^n$ with $z_i$ i.i.d. points sampled from $\eta^*$ and $w_i$ i.i.d. points sampled from $\Phi_\sig$. We denote $\eta^*_n = \frac{1}{n}\sum_{i=1}^n \delta_{z_i}$. 

We start by considering the following decomposition of the error
\eqals{
    \sink(\hat T_\push \hat \eta, \rho) &  = A_1 + A_2 + A_3 + A_4 + A_5 + A_6
}
with
\eqals{
    A_1 & = \sink(\hat T_\push \hat \eta, \rho) - \sink(\hat T_\push \hat \eta,T^*_\push \eta^*)\\
    A_2 & = \sink(\hat T_\push \hat \eta,T^*_\push \eta^*) - \sink(\hat T_\push \hat \eta,T^*_\push \eta^*_n)\\
    A_3 & = \sink(\hat T_\push \hat \eta,T^*_\push \eta^*_n) - \sink(\hat T_\push \hat \eta,\rho_n)\\
    A_4 & = \sink(\hat T_\push \hat \eta,\rho_n) - \sink( T^*_\push \eta^*,\rho_n)\\
    A_5 & = \sink( T^*_\push \eta^*,\rho_n) - \sink( T^*_\push \eta^*,T^*_\push \eta^*_n)\\
    A_6 & = \sink(T^*_\push \eta^*,T^*_\push\eta^*_n)
}
We start by controlling the term $A_1$. First we note that according to \cref{lem:pushforward-of-subgaussian}, both distributions $\hat T_\push \hat \eta$ and $T^*_\push\eta^*$ are sub-Gaussian with parameter $ L\sigma\sqrt{k/d}$. Therefore, by applying \cref{lemma:perturbation} we obtain 
\eqals{
    A_1 \leq \msf{c}_1(L\sigma\sqrt{k/d},d)~\sig
}
where $\msf{c}_1$ is the constant introduced in \cref{lemma:perturbation}. 
We note that by adopting the analogous reasoning we can bound the terms $A_3$ and $A_5$. Namely, by taking the expectation with respect to the sample $\rho_n$
\eqals{
    \mathbb{E}[A_3] & \leq \msf{c}_1(L\sigma\sqrt{k/d},d)~\sig\\
    \mathbb{E}[A_5] & \leq \msf{c}_1(L\sigma\sqrt{k/d},d)~\sig.
}
The term $A_2$ corresponds to the sample complexity of $T^*_\push\eta^*$. Therefore we can apply \cref{thm:sample_complexity} to obtain 
\eqals{
    \mathbb{E}[A_2] & = \mathbb{E}~\Big[\sink(\hat T_\push \hat \eta,T^*_\push \eta^*) - \sink(\hat T_\push \hat \eta,T^*_\push \eta^*_n)\Big] \leq \frac{\mathsf{c}(\sigma,\tau,L)}{\sqrt{n}}.
}
The same holds for $A_6$, namely
\eqals{
    \mathbb{E}[A_6] & = \mathbb{E}~\Big[ \sink(T^*_\push \eta^*,T^*_\push\eta^*_n)\Big]\\
    & = \mathbb{E}~\Big[ \sink(T^*_\push \eta^*,T^*_\push\eta^*_n) - \sink(T^*_\push \eta^*,T^*_\push\eta^*)\Big]\\
    & \leq \frac{c(\sigma,\tau,L)}{\sqrt{n}}.
}
Finally, we note that since $(\hat T,\hat \eta)$ is the minimizer of $\sink(T_\push\eta,\rho_n)$, 
\eqals{
A_4 & = \sink(\hat T_\push \hat \eta,\rho_n) - \sink( T^*_\push \eta^*,\rho_n) \leq 0.
}
Combining all the bounds above yields the required result.
\end{proof}

\section{Optimization}
\label{app:gradient}


\subsection{Computing the gradient with respect to the network parameters}

In this section we provide the analytic formula for the gradient of the sinkhorn divergence with respect to the generator network parameters. We recall here the statament.

\PGeneratorGradient*

\begin{proof}
We prove a more general version of \cref{prop:generator-gradient}, replacing the squared-Euclidean cost in \cref{eq:primal_pb} (or equivalently \cref{eq:dual_pb}) with a generic smooth cost function $c:\X\times\X\to\R$. The proof of the result hinges upon the following characterization of the directional derivative of functionals that admit a variational form.

\begin{theorem}[Thm. $4.13$ in \cite{bonnans2013perturbation}]\label{thm:sensitivity-analysis}
Suppose that for all $x\in\X$ the function $f(x,\cdot)$ is (Gateaux) differentiable, that $f(x,u)$ and $\nabla_u f(x, u)$ are continuous on $X \times U$, and that the
inf-compactness condition holds. Then the optimal value function 
\eqals{
    v: u_0 \mapsto \inf_{x\in\X} f(x,u_0),
}
is Fr\'echet directionally differentiable at $u_0$ with directional derivative
\eqals{
    v'(u_0;\bar u) = \inf_{x\in{\cal{S}}(u_0)} \nabla_u f(x,u_0) \bar u,
}
for any $\bar u\in U$, with ${\cal{S}}(u_0)$ the set of minimizer of $f(\cdot,u_0)$.
\end{theorem}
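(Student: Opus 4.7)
The plan is the classical two-sided matching argument from parametric optimization: I would establish matching upper and lower bounds for the difference quotient $t^{-1}(v(u_0+t\bar u)-v(u_0))$, then upgrade the resulting Gâteaux-type directional derivative to the Fréchet sense using uniform continuity of $\nabla_u f$ on compact sets. The main work is concentrated in the lower bound, where inf-compactness is used to extract convergent subsequences of near-minimizers of the perturbed problem.

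The upper bound is essentially by definition. Fix $\bar u \in U$ and any $x_0 \in \mathcal{S}(u_0)$. Since $v(u_0+t\bar u)\le f(x_0,u_0+t\bar u)$ and $v(u_0)=f(x_0,u_0)$,
\eqals{
v(u_0+t\bar u)-v(u_0) \,\le\, f(x_0,u_0+t\bar u)-f(x_0,u_0).
}
Dividing by $t$ and sending $t\downarrow 0$, Gâteaux differentiability of $f(x_0,\cdot)$ yields $\limsup_{t\downarrow 0}\, t^{-1}(v(u_0+t\bar u)-v(u_0))\le \nabla_u f(x_0,u_0)\bar u$, and taking the infimum over $x_0\in\mathcal{S}(u_0)$ produces the upper estimate.

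For the lower bound, for each small $t>0$ I would pick $x_t\in\X$ with $f(x_t,u_0+t\bar u)\le v(u_0+t\bar u)+t^2$. The inf-compactness hypothesis confines $\{x_t\}_{t>0}$ to a relatively compact subset of $\X$, so along a subsequence $x_{t_k}\to x^*$. Joint continuity of $f$ and an optimality-transfer step ($v(u_0+t_k\bar u)\to v(u_0)$ by continuity of $f(x_0,\cdot)$ applied with any $x_0\in\mathcal{S}(u_0)$) force $x^*\in\mathcal{S}(u_0)$. Since $v(u_0)\le f(x_{t_k},u_0)$,
\eqals{
\frac{v(u_0+t_k\bar u)-v(u_0)}{t_k} \,\ge\, \frac{f(x_{t_k},u_0+t_k\bar u)-f(x_{t_k},u_0)}{t_k}-t_k,
}
and the right-hand side converges to $\nabla_u f(x^*,u_0)\bar u\ge \inf_{x\in\mathcal{S}(u_0)}\nabla_u f(x,u_0)\bar u$ by joint continuity of $\nabla_u f$ applied to the convergent sequence $(x_{t_k},u_0+s t_k\bar u)$ for $s\in[0,1]$. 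This matches the upper bound and identifies $v'(u_0;\bar u)$.

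To promote this one-sided limit to Fréchet directional differentiability one needs the convergence to persist along any sequence $(t_k,h_k)$ with $t_k\downarrow 0$ and $h_k\to\bar u$. I would rerun the selection argument verbatim with $\bar u$ replaced by $h_k$, using joint continuity of $\nabla_u f$ on a compact neighborhood of $\mathcal{S}(u_0)\times\{u_0\}$ to make the Taylor remainder uniform in the direction. The main obstacle is the selection step in the lower bound: without inf-compactness, approximate minimizers $x_t$ of the perturbed problem could escape to infinity and the limit of the quotient could strictly undershoot $\inf_{x\in\mathcal{S}(u_0)}\nabla_u f(x,u_0)\bar u$. A secondary technical point is that the $o(t)$ remainder in the Taylor expansion of $f(x_{t_k},\cdot)$ a priori depends on $x_{t_k}$; this is controlled using joint continuity of $\nabla_u f$ on the compact set produced by inf-compactness, which is precisely why the hypotheses of the theorem are tuned to make this uniform control automatic.
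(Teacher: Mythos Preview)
The paper does not prove this statement at all: it is quoted verbatim as Theorem~4.13 of \cite{bonnans2013perturbation} and used as a black box in the proof of \cref{prop:generator-gradient}. So there is no ``paper's own proof'' to compare against.

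That said, your sketch is the standard argument behind this envelope/sensitivity result and is essentially correct. The two-sided matching of the difference quotient, the use of inf-compactness to extract a convergent subsequence of near-minimizers $x_{t_k}\to x^*\in\mathcal{S}(u_0)$, and the appeal to joint continuity of $\nabla_u f$ to control the Taylor remainder uniformly over the compact set of minimizers are exactly the ingredients in Bonnans--Shapiro's proof. Your remark that the remainder depends on $x_{t_k}$ and is tamed by joint continuity on a compact set is the right technical point; the upgrade to Fr\'echet directional differentiability via sequences $(t_k,h_k)\to(0,\bar u)$ is also handled as you describe.
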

We recall that inf-compactness is the condition requiring the existence of a neighborhood of $u_0$ and a constant ${T_\theta}_\push\eta\in\R$ such that the level sets of $f(\cdot,u)$ are compact for any $u$ in such neighborhood. We note that the same result holds when considering the supremum of a joint function $f(x,u_0)$, which is the case of the Sinkhorn divergence considered in the following.

Let now $\eta\in\P(\Z), \rho\in\P(\X)$ and $\Theta$ a space of parameters for the pushforward maps $T_\theta:\Z\to\X$. We will apply \cref{thm:sensitivity-analysis} to the functional
\eqals{
    F(\theta) = \oteps({T_\theta}_\push\eta,\rho) = \sup_{u,v\in \cont(\X)} G({T_\theta}_\push \eta, \rho,  u, v),
}
where we have denoted 
\eqals{
   G({T_\theta}_\push \eta, \rho,  u, v) = \int u(x)~d({T_\theta}_\push\eta)(x) & + \int v(y)~d\rho(y) \\
    & - \eps \int e^{\frac{u(x) + v(y)  - \cost(x,y)}{\eps}}~d({T_\theta}_\push\eta)(x)d\rho(y).
}
We recall that the solution $(u^*,v^*)$ to the Sinkhorn dual problem is unique up to a constant shift $(u^*+r,v^*-r)$ for any $r\in\R$ \cite{feydy2018interpolating}. We can therefore restrict the above optimization problem to 
\eqals{\label{eq:dual-problem-strictly-concave}
F(\theta) & = \sup_{(a,b)\in \mathcal{D} }G({T_\theta}_\push \eta, \rho,  u, v),
}
to the domain
\eqals{
    \mathcal{D} = \Bigg\{~(u,v)\in \cont(\X)\times \cont(\X) ~ \Bigg| ~ \int u(x)~d({T_\theta}_\push\eta)(x) = \int v(y)~d\rho(y) \Bigg\}.
}
Therefore, over this linear subspace of $\cont(\X)\times \cont(\X)$, the functional $\oteps({T_\theta}_\push\eta,\rho,\cdot,\cdot)$ admits a unique minimizer and is actually strictly concave, which guarantees inf-compactness (actually sup-compactness in this case) to hold.

We can therefore apply \cref{thm:sensitivity-analysis} with the following substitutions in our setting: $x\gets (u,v)$, $\X\gets\mathcal{D}$, $u\gets \theta$ and $U\gets\Theta$. Let $(u^*,v^*)$ be the minimizer of \cref{eq:dual-problem-strictly-concave}. We have
\eqals{
    [\nabla_\theta F(\cdot)]|_{\theta=\theta_0} & = \nabla_\theta [ G({T_\theta}_\push\eta,\rho,u^*,v^*)|_{\theta=\theta_0}.
}
Now, by applying the Transfer lemma, we have 
\eqals{
G({T_\theta}_\push\eta,\rho,u^*,v^*) = G(\eta,\rho,u^*\circ{T_\theta},v^*)
}
and therefore,
\eqals{
    [\nabla_\theta F(\cdot)]|_{\theta=\theta_0} & = \nabla_\theta [ G(\eta,\rho,u^*\circ T_\theta,v^*)|_{\theta=\theta_0} \\ 
    & = \int \big[\nabla_\theta u^*(T_\theta(z))\big]|_{\theta=\theta_0}~d\eta(z) \\
    & ~~~ -\eps \int \Bigg[\nabla_\theta e^{\frac{u^*(T_\theta(z)) + v^*(y)- \cost(T(z),y)}{\eps}}\Bigg]|_{\theta=\theta_0}~d\eta(z)d\rho(y)
}
Recall that $u^*$ is differentiable (actually $C^{\infty}$, see e.g. \cite[Thm. 2]{genevay2018sample} characterizing the regularity of Sinkhorn potential when using a smooth cost). Therefore, by applying the chain rule we have 
\eqals{\label{eq:chain-rule-first-piece}
    \int \big[\nabla_\theta u^*(T_\theta(z))\big]|_{\theta=\theta_0}~d\eta(z) & = \int \big[\nabla_x u^*(\cdot)|_{x=T_{\theta_0}(z)}\big]\big[\nabla_\theta T_\theta(z)\big]|_{\theta=\theta_0}~d\eta(z).
}
By applying computing the gradient of the exponetial term, the second term in the gradient can be split in two parts
\eqals{
 \eps\int \big[\nabla_\theta ~e^{\frac{u^*(T_\theta(z)) + v^*(y) - c(T_\theta(z),y)}{\eps}}\big]|_{\theta=\theta_0} & ~d\eta(z)d\rho(y) = A_1 - A_2
}
with
\eqals{
A_1 & = \int \big[\nabla_\theta u^*(T_\theta(z))\big]|_{\theta=\theta_0} ~e^{\frac{u^*(T_{\theta_0}(z)) + v^*(y) - c(T_{\theta_0}(z),y)}{\eps}}~d\eta(x)d\rho(y) \\
A_2 & = \int \big[\nabla_\theta c(T_\theta(z),y)\big]|_{\theta=\theta_0} ~e^{\frac{u^*(T_{\theta_0}(z)) + v^*(y) - c(T_{\theta_0}(z),y)}{\eps}}~d\eta(x)d\rho(y).
}
Recall that since $(u^*,v^*)$ is a pair of minimizers, the characterization of $u^*$ from \cref{eq:characterization-sinkhorn-potential} holds, implying that for any $z\in\Z$,
\eqals{
    \int e^{\frac{u^*(T_{\theta_0}(z)) + v^*(y) - c(T_{\theta_0}(z),y)}{\eps}}~d\rho(y) = 1
}
Therefore
\eqals{
    A_1 & = \int \big[\nabla_\theta u^*(T_\theta(z))\big]|_{\theta=\theta_0} ~\Bigg(\int e^{\frac{u^*(T_{\theta_0}(z)) + v^*(y) - c(T_{\theta_0}(z),y)}{\eps}}~d\rho(y)\Bigg)~d\eta(z)\\
    & = \int \big[\nabla_\theta u^*(T_\theta(z))\big]|_{\theta=\theta_0}~d\eta(z).
}
Hence, analogously to \cref{eq:chain-rule-first-piece} we have
\eqals{
    A_1 & = \int \big[\nabla_x u^*(\cdot)|_{x=T_{\theta_0}(z)}\big]\big[\nabla_\theta T_\theta(z)\big]|_{\theta=\theta_0}~d\eta(z).
}
Regarding the term $A_2$, we apply the chain rule to the cost term, obtaining
\eqals{
    A_2 & = \int \big[\nabla_x c(\cdot,y)|_{x=T_{\theta_0}(z)}\big]\big[\nabla_\theta T_\theta(z)\big]|_{\theta=\theta_0} ~e^{\frac{u^*(T_{\theta_0}(z)) + v^*(y) - c(T_{\theta_0}(z),y)}{\eps}}~d\eta(z)d\rho(y)
}
Since the term in \cref{eq:chain-rule-first-piece} and $A_1$ eliminate each other, we have
\eqals{
    [\nabla_\theta F(\cdot)]|_{\theta=\theta_0} = \int \big[\nabla_x c(\cdot,y)|_{x=T_{\theta_0}(z)}\big]\big[\nabla_\theta T_\theta(z)\big]|_{\theta=\theta_0} ~e^{\frac{u^*(T_\theta(z)) + v^*(y) - c(T_\theta(z),y)}{\eps}}~d\eta(z)d\rho(y)
}
Now, by the characterization of Sinkhorn potential in \cref{eq:characterization-sinkhorn-potential}, we have that for any $x_0\in\X$,
\eqals{
\nabla_x u^* (\cdot)|_{x=x_0} = \int\nabla_x c(\cdot,y)|_{x=x_0}~ e^{\frac{u^*(x_0)+v(y) - c(x_0,y))}{\eps}}d\rho(y).
}
Replacing the equality above in the characterization of $\nabla_\theta F$, we have
\eqals{
    \big[\nabla_\theta F(\cdot)\big]|_{\theta=\theta_0} = \int \big[\nabla_z u^*(\cdot)\big]|_{x=T_{\theta_0}(z)}~ \big[\nabla_\theta T_\theta(z)\big]|_{\theta=\theta_0} ~d\eta(z),
}
as required.
\end{proof}

\paragraph{Gradient of the Sinkhorn Divergence} \cref{prop:generator-gradient} characterizes the gradient of $\oteps({T_\theta}_\push\eta,\rho)$ with respect to the network parameters $\theta$. However, the Sinkhorn divergence, defined in \cref{eq:sink_divergence} depends also on the so-called {\itshape autocorrelation} term $-\frac{1}{2}\oteps({T_\theta}_\push\eta,{T_\theta}_\push\eta)$. By following the same reasoning in the proof of \cref{prop:generator-gradient}, we have that 
\eqals{
    \big[\nabla_\theta \oteps({T_\theta}_\push\eta,{T_\theta}_\push\eta)\big]|_{\theta=\theta_0} = 2\int \big[\nabla_z u^*(\cdot)\big]|_{x=T_{\theta_0}(z)}~ \big[\nabla_\theta T_\theta(z)\big]|_{\theta=\theta_0} ~d\eta(z)
}
with $u^*$ the Sinkhorn potential minimizing 
\eqals{
    \oteps({T_\theta}_\push\eta,{T_\theta}_\push\eta) = \sup_{u\in \cont(\X)}~G({T_\theta}_\push\eta,{T_\theta}_\push\eta,u,u).
}
We refer to \cite{feydy2018interpolating} for more details on the properties of the autocorrelation term above. 

Thanks to the linearity of the gradient, we can therefore compute the gradient of the Sinkhorn divergence by combining the gradient of $\oteps({T_\theta}_\push\eta,\rho)$ and $\oteps({T_\theta}_\push\eta,{T_\theta}_\push\eta)$.

\section{Experiments}\label{app:sec_experiments}
In this section we provide the details on the experimental setup of section \cref{sec:experiments}.

\paragraph{Spiral} We describe the setting reported in \cref{fig:spiral}, where the target $\rho\in\P(\R^2)$ is a multi-modal distribution on a spiral-shaped $1$D manifold in $\R^2$. In particular we modeled $\rho^* = T_\push\eta^*$ with $\eta$ a mixture of three Gaussian distributions on $\R$,
\eqals{
    \eta^* = \frac{1}{3}\sum_{j=1}^3 \mathcal{N}(m_j,\sigma)
}
with means respectively in $m_1 = 0.1$, $m_2 = 0.7$ and $m_3=0.9$ and same variance $\sigma^2 = 0.1$. To map $\eta$ to $\R^2$ we considered the pushforward map $T^*:\R\to\R^2$ such that 
\eqals{
    T^*(x) = (x\sin(2\pi x),~ x \cos(2\pi x)).
}
To approximate $T$ we considered a fully connected neural network with $4$ hidden layers with dimensions $256, 1024, 256, 256$, two ReLUs activation functions for the first and second layers and one sigmoid ($\tanh$) for the third layer. To minimize $\sink(T_\push\eta,\rho_n)$ in $T$ for $\eta$ fixed, we used ADAM as optimizer, with learning rate of $\alpha_1 = 10^{-4}$. To learn $\eta$ for $T$ fixed we used step size $\alpha_2 = 10^{-3}$. We run \cref{alg:latent-GAN} with $m=1000$ particles and sampling size $\ell=100$ (the number of ``perturbation'' points smapled around the particles at each iteration). We set the regularization parameter of the Sinkhorn divergence equal to $\eps=0.005$. When keeping $\eta$ fixed, we chose $\eta$ to be $\mathcal{N}(0.5,1)$. 

\paragraph{Swiss Roll}
Similary to the previous setting we considered $\rho = T^*_\push\eta^*$ the pushforward measure of a multimodal latent distribution. Here $\rho\in\P(\R^3)$, with $\eta^*\in\P(\R^2)$ the ``restriction'' of a Gaussian mixture on $[0,1]^2$. More formally, let $g:\R^2\to\R$ be the density of the mixture of $3$ isotropic Gaussian measures
\eqals{
    \pi = \frac{1}{3}\sum_{j=1}^3 \mathcal{N}(m_j, \Sigma)
}
with means $m_1 = (0.4,0.4)$, $m_2 = (0.2,0.8)$ and $m_3 = (0.8,0.5)$, and same covariance $\Sigma = \sigma I$ with $\sigma^2 = 0.15$. Then we consider $\eta^*$ the distribution with density $h:\R^2\to\R$, proportional to
\eqals{
    h \propto g\cdot \mathbbm{1}_{[0,1]^2},
}
where $\mathbbm{1}_{[0,1]^2}$ is the indicator function of the interval $[0,1]^2$. We used the pushforward map of the swiss roll $T^*:\R^2\to\R^3$ 
\eqals{
    T(x,y) ~=~ (~x\cos(2\pi x), y, x\sin(2\pi x)~).
}
To approximate $T$ we considered the same structure used in the spiral setting: a fully connected neural network with $4$ hidden layers with dimensions $256, 1024, 256, 256$, two ReLUs activation functions for the first and second layers and one sigmoid ($\tanh$) for the third layer. To minimize $\sink(T_\push\eta,\rho_n)$ in $T$ for $\eta$ fixed, we used ADAM as optimizer, with learning rate of $\alpha_1 = 5 \cdot 10^{-5}$. To learn $\eta$ for $T$ fixed we used step size $\alpha_2 = 10^{-4}$. We run \cref{alg:latent-GAN} with $m=1000$ particles and sampling size $\ell=100$ performing block-coordinate descent, alternating $50$ iterations when learning $T$ with $\eta$ fixed and $20$ iterations when learning $\eta$ for a fixed $T$. We set the regularization parameter of the Sinkhorn divergence starting from $\eps=2$ and decreasing every $50$ iterations of the generator training (both for \cref{alg:latent-GAN} and the standard Sinkhorn GAN) by a factor $\eps\gets 0.9 \cdot \eps$. We did not allow $\eps$ to drop below $10^{-3}$ When keeping the latent fixed, we chose $\eta$ to be $\mathcal{N}([0.5,0.5], I)$.  

\end{document}